\algnewcommand{\Initialize}[1]{%
  \State \textbf{Initialize:}
  \State \hspace*{\algorithmicindent}\parbox[t]{0.8\linewidth}{\raggedright #1}
}
\newcommand{\midheader}[2]{%
        \midrule\topmidheader{#1}{#2}}
\newcommand\topmidheader[2]{\multicolumn{#1}{c}{#2}\\%
                \addlinespace[0.5ex]}
\newcommand\blfootnote[1]{%
  \begingroup
  \renewcommand\thefootnote{}\footnote{#1}%
  \addtocounter{footnote}{-1}%
  \endgroup
}
\def \model{\textsc{GCNmf}}
  \providecommand\BibTeX{{%
    \normalfont B\kern-0.5em{\scshape i\kern-0.25em b}\kern-0.8em\TeX}}}
\begin{document}
\newcommand{\IfExtendedElse}[2]{%
  \ifnum\pdfstrcmp{\extendedVersion}{True}=0
    \ifnum\pdfstrcmp{}{#1}=0 {} \else\protect{#1}\fi%
  \else
    \ifnum\pdfstrcmp{}{#2}=0 {} \else\protect{#2}\fi%
  \fi\ignorespaces}

\newcommand{\IfExtended}[1]{%
  \ifnum\pdfstrcmp{\extendedVersion}{False}=0 {} \else \protect{#1}\fi\ignorespaces}

\def\extendedVersion{False} 

\newcommand{\wrt}{w.r.t.~}
\newcommand{\ie}{i.e.,~}
\newcommand{\eg}{e.g.~}
\newcommand{\aka}{a.k.a.~}
\newcommand{\eat}[1]{}
\newcommand{\footurl}[1]{\footnote{\textsf{\small #1}}}

\newcommand{\sidenote}[4]{\todo[fancyline,inline,prepend,caption={\textbf{[#1]}}]{ #3}\textcolor{#2}{#4}}
\newcommand{\liu}[2][]{\sidenote{XL}{orange}{#1}{#2}}

\newcommand{\remove}[1]{}
\newcommand{\final}[1]{#1}
\newcommand{\sys}{{\textsc{Ours}}}
\newcommand{\afact}{A-fact}
\newcommand{\powerset}[1]{{\ensuremath{{\mathcal P}(#1)}}}
\newcommand{\years}{{\textsc{Years}}}
\newcommand{\intervals}{{\ensuremath{\mathcal T}}}
\newcommand{\sources}{{\textsc{Src}}}
\newcommand{\emptyinterval}{{[~]}}
\newcommand{\upperBound}{\ensuremath{\top}}
\newcommand{\T}{{\ensuremath{\mathcal T}}}
\newcommand{\layer}[1]{{\texttt{#1}}}
\newcommand{\D}{\layer{D}}
\newcommand{\Y}{\layer{Y}}
\newcommand{\Q}{\layer{Q}}
\newcommand{\M}{\layer{M}}
\newcommand{\W}{\layer{W}}
\renewcommand{\d}{\layer{d}}
\newcommand{\h}{\layer{h}}
\newcommand{\m}{\layer{m}}
\newcommand{\subject}{\layer{s}}
\newcommand{\predicate}{\layer{p}}
\newcommand{\object}{\layer{o}}

\newcommand{\ourmodel}{\textsc{GCNmf}}
\newcommand{\realnumbers}{{\rm I\!R}}
\newcommand{\integernumbers}{{\rm I\!N}}
\newcommand{\revision}[1]{#1}

\title[Graph Convolutional Networks for Graphs Containing Missing Features]{Graph Convolutional Networks for Graphs\\Containing Missing Features}

\settopmatter{authorsperrow=1}

\author{Hibiki Taguchi$^{1,2}$, Xin Liu$^{2,3,*}$, Tsuyoshi Murata$^{1,3}$}

\affiliation{
 \institution{$^1$Dept. of Computer Science, Tokyo Institute of Technology, Japan}
 \institution{$^2$Artificial Intelligence Research Center, National Institute of Advanced Industrial Science and Technology, Japan}
 \institution{$^3$AIST-Tokyo Tech Real World Big-Data Computation Open Innovation Laboratory, Japan}
 \institution{taguchi@net.c.titech.ac.jp,  xin.liu@aist.go.jp,  murata@c.titech.ac.jp}
}

\renewcommand{\shortauthors}{H. Taguchi et al.}

\begin{abstract}
Graph Convolutional Network (GCN) has experienced great success in graph analysis tasks. It works by smoothing the node features across the graph. The current GCN models overwhelmingly assume that the node feature information is complete. However, real-world graph data are often incomplete and containing missing features. Traditionally, people have to estimate and fill in the unknown features based on imputation techniques and then apply GCN. However, the process of feature filling and graph learning are separated, resulting in degraded and unstable performance. This problem becomes more serious when a large number of features are missing. We propose an approach that adapts GCN to graphs containing missing features. In contrast to traditional strategy, our approach integrates the processing of missing features and graph learning within the same neural network architecture. Our idea is to represent the missing data by Gaussian Mixture Model (GMM) and calculate the expected activation of neurons in the first hidden layer of GCN, while keeping the other layers of the network unchanged. This enables us to learn the GMM parameters and network weight parameters in an end-to-end manner. Notably, our approach does not increase the computational complexity of GCN and it is consistent with GCN when the features are complete. We demonstrate through extensive experiments that our approach significantly outperforms the imputation based methods in node classification and link prediction tasks. We show that the performance of our approach for the case with a low level of missing features is even superior to GCN for the case with complete features.
\blfootnote{$*$Corresponding author.}
\end{abstract}

%
%

\keywords{Graph convolutional network, GCN, Missing data, Incomplete data, Graph embedding, Network representation learning}

\maketitle

\section{Introduction}
\label{sec:introduction}
Graphs are used in many branches of science as a way to represent the patterns of connections between the components of complex systems, including social analysis, product recommendation, web search, disease identification, brain function analysis, and many more. 

In recent years there is a surge of interest in learning on graph data. Graph embedding \cite{cui2018survey_net_emb,wang2017kgsurvey,ji2020survey} aims to learn low-dimensional vector representations for nodes or edges. The learned representations encode structural and semantic information transcribed from the graph and can be used directly as the features for downstream graph analysis tasks. Representative works on graph embedding include random walk and skip-gram model based methods \cite{Perozzi2014deep}, matrix factorization based approaches \cite{qiu2017unifying_dw_n2v_line, Liu2019NetEmbGRAbetta}, edge reconstruction based methods \cite{tang2015line}, and deep learning based algorithms \cite{wang2016SDNE, pan2018adversarially}, etc.

Meanwhile, graph neural network (GNN) \cite{scarselli2008gnn_model, zhou2018gnn_review, Wu2019survey_gnn}, as a type of neural network architectures that can operate on graph structure, has achieved superior performance in graph analysis and shown promise in various applications such as visual question answering \cite{narasimhan2018VisualQA_GCN}, point clouds classification and segmentation \cite{simonovsky2017DynamicFilterGCN}, fraud detection \cite{liu2020alleviating}, machine translation \cite{bastings2017MachineTrans_GCN}, molecular fingerprints prediction \cite{duvenaud2015GCN_MolecularFingerprints}, protein interface prediction \cite{fout2017ProteinInterfacePrediction_GCN}, topic modeling \cite{yang2020gaton}, and social recommendation \cite{ying2018GCN_Recommendation}.

Among various kinds of GNNs, graph convolutional network (GCN) \cite{kipf2016GCN}, a simplified version of spectral graph convolutional networks \cite{Shuman2013signal_processing_graph}, has attracted a large amount of attention. GCN and its subsequent variants can be interpreted as smoothing the node features in the neighborhoods guided by the graph structure, and have experienced great success in graph analysis tasks, such as node classification \cite{kipf2016GCN}, graph classification \cite{zhang2018DLGraphClassification}, link prediction \cite{kipf2016variational}, graph similarity estimation \cite{bai2019Simgnn}, node ranking \cite{sunil2019BetweennessGNN, chen2019CentralityInfoGraphConvolution}, and community detection \cite{jin2019GCN_MRF_CommunityDet, Choong2018CommunityVAE}.

\begin{figure*}[!t]
  \centering
  \includegraphics[width=0.90\textwidth]{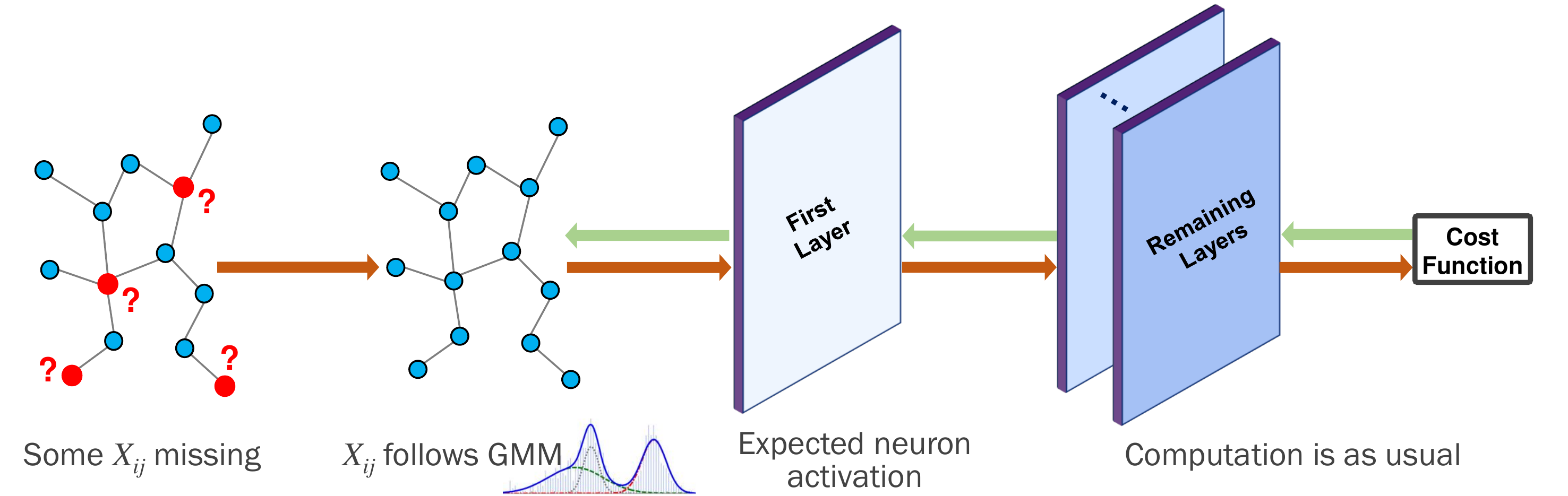}
  \caption{The architecture of our model.}
  \label{fig:archi}
\end{figure*}

The current GCN-like models assume that the node feature information is complete. However, real-world graph data are often incomplete and containing missing node features. Much of the missing features arise from the following sources. First, some features can be missing because of mechanical/electronic failures or human errors during the data collection process. Secondly, it can be prohibitively expensive or even impossible to collect the complete data due to its large size. For example, social media companies such as Twitter and Facebook have restricted the crawlers to collect the whole data. Thirdly, we cannot obtain sensitive personal information. In a social network, many users are unwilling to provide information such as address, nationality, and age to protect personal privacy. Finally, graphs are dynamic in nature, and thus newly joined nodes often have very little information. All these aspects result in graphs containing missing features.

To deal with the above problem, the traditional strategy is to estimate and fill in the unknown values before applying GCN. For this purpose, people have proposed imputation techniques such as mean imputation \cite{GarciaLaencina2010,yi2020why}, soft imputation based on singular value decomposition \cite{mazumder2010soft}, and machine learning methods such as \textit{k-NN} model \cite{batista2002study}, random forest \cite{Stekhoven2011missforest}, autoencoder \cite{Kingma2014,spinelli2019ginn}, generative adversarial network (GAN) \cite{yoon2018gain,Luo2018Multi,li2018learning}. However, the process of feature filling and graph learning are separated. Our experiments reveal that this strategy results in degraded and unstable performance, especially when a large number of features are missing.

In this paper, we propose an approach that can adapt GCN to graphs containing missing features. In contrast to traditional strategy, our approach integrates the processing of missing features and graph learning within the same neural network architecture and thus can enhance the performance. Our approach is motivated by Gaussian Mixture Model Compensator (GMMC) \cite{smieja2018GMMC} for processing missing data in neural networks. The main idea is to represent the missing data by Gaussian Mixture Model (GMM) and calculate the expected activation of neurons in the first hidden layer, while keeping the other layers of the network architecture unchanged (Figure \ref{fig:archi}). Although this idea is implemented in simple neural networks such as autoencoder and multilayer perceptron, it has not yet been extended to complex neural networks such as RNN, CNN, GNN, and sequence-to-sequence models. The main reason is due to the difficulty in unifying the representation of missing data and calculation of the expected activation of neurons. In particular, simply using GMM to represent the missing data will even complicate the network architecture, which hinders us from calculating the expected activation in closed form. We propose a novel way to unify the representation of missing features and calculation of the expected activation of the first layer neurons in GCN. Specifically, we skillfully represent the missing features by introducing only a small number of parameters in GMM and derive the analytic solution of the expected activation of neurons. As a result, our approach can arm GCN against missing features without increasing the computational complexity and our approach is consistent with GCN when the features are complete.

Our contributions are summarized as follows:

\begin{itemize}
\item We propose an elegant and unified way to transform the incomplete features to variables that follow mixtures of Gaussian distributions.
\item Based on the transformation, we derive the analytic solution to calculate the expected activation of neurons in the first layer of GCN.
\item We propose the whole network architecture for learning on graphs containing missing features. We prove that our model is consistent with GCN when the features are complete.
\item We perform extensive experiments and demonstrate that our approach significantly outperforms imputation based methods.
\end{itemize}

The rest of the paper is organized as follows. The next section summarizes the recent literature on GCN and methods for processing missing data. Section~\ref{sec:preliminary} reviews GCN. Section~\ref{sec:approach} introduces our approach. Section~\ref{sec:experiment} reports experiment results. Finally, Section~\ref{sec:conclusion} presents our concluding remarks.

\section{Related Work}
\label{sec:related}

\subsection{Graph Convolutional Networks}

GNNs are deep learning models aiming at addressing graph-related tasks \cite{scarselli2008gnn_model, zhou2018gnn_review, Wu2019survey_gnn}. Among various kinds of GNNs, GCN \cite{kipf2016GCN}, which simplifies the previous spectral graph convolutional networks \cite{Shuman2013signal_processing_graph} by restricting the filters to operate in one-hop neighborhood, has attracted a large amount of attention due to its simplicity and high performance. GCN can be interpreted as smoothing the node features in the neighborhoods, and this model achieves great success in the node classification task.

There are a series of works following GCN. GAT extends GCN by imposing the attention mechanism on the neighboring weight assignment \cite{velivckovic2017GAT}. AGCN learns hidden structural relations unspecified by the graph adjacency matrix and constructs a residual graph adjacency matrix \cite{li2018AdaptiveGCN}. TO-GCN utilizes potential information by jointly refining the network topology \cite{yang2019topology}. GCLN introduces ladder-shape architecture to increase the depth of GCN while overcoming the over-smoothing problem \cite{wan2020goingdeep}. MixHop introduces higher-order feature aggregation, which enables us to capture mixing neighbors' information \cite{sami2019mixhop}. There is also work on extending GCN to handle noisy and sparse node features \cite{shi2019featureattention}.

Training GCN usually requires to save the whole graph data into memory. To solve this problem, sampling strategy \cite{chen2018fastgcn} and batch training \cite{chiang2019cluster} are proposed. Moreover, FastGCN reduces the complexity of GCN through successively removing nonlinearities and collapsing weight matrices between consecutive layers \cite{chen2018fastgcn}.

While achieving excellent performance in graph analysis tasks, GCN is known to be vulnerable to adversarial attacks \cite{zugner2018adversarial,dai2018adversarial}. To address this problem, researchers have proposed robust models such as RGCN
that adopts Gaussian distributions as the hidden representations of nodes in each convolutional layer \cite{zhu2019robust} and a new learning principle that improves the robustness of GCN \cite{zugner2019certifiable}.

We note that all of the models mentioned above assume that the node feature information is complete.

\subsection{Learning with Missing Data}

Incomplete and missing data is common in real-world applications. Methods for handling such data can be categorized into two classes. The first class completes the missing data before using conventional machine learning algorithms. Imputation techniques are widely used for data completion, such as mean imputation \cite{GarciaLaencina2010}, matrix completion via matrix factorization \cite{koren2009mf} and singular value decomposition (SVD) \cite{mazumder2010soft}, and multiple imputation \cite{rubin2004multiple,buuren2010mice}. Machine learning models are also employed to estimate missing values, such as \textit{k-NN} model \cite{batista2002study}, random forest \cite{Stekhoven2011missforest}, autoencoder \cite{Kingma2014,spinelli2019ginn}, generative adversarial network (GAN) \cite{yoon2018gain,Luo2018Multi,li2018learning}. However, imputation methods are not always competent to handle this problem, especially when the missing rate is high \cite{Che2018rec}.

The second class directly trains a model based on the missing data without any imputation, and there are a range of research along this line. Che et al. improve Gated Recurrent Unit (GRU) to address the multivariate time series missing data \cite{Che2018rec}. Jiang et al. divide missing data into complete sub-data and then applied them to ensemble classifiers \cite{jiang2005ensemble}. Pelckmans et al. modify the loss function of Support Vector Machine (SVM) to address the uncertainty issue arising from missing data \cite{pelckmans2005svm}. Moreover, there are some research on building improved machine learning models such as logistic regression \cite{david2005lr}, kernel methods \cite{Smola2005KernelMF, Smieja2019}, and autoencoder and multilayer perceptron \cite{smieja2018GMMC} on top of representing missing values with probabilistic density.

To the best of our knowledge, there is no related work on how to adapt GNNs to graphs containing missing features. Hence, we propose an approach to address this problem.

\section{Preliminaries}
\label{sec:preliminary}

In this section, we briefly review GCN, which paves the way for the next discussion.

\subsection{Notations}
Let us consider an undirected graph $\mathcal{G} = ({\mathcal{V}}, \mathcal{E})$, where $\mathcal{V}=\{v_i \mid i=1,\cdots,N\}$ is the node set, and $\mathcal{E} \subseteq \mathcal{V} \times \mathcal{V}$ is the edge set. $\mathbf{A} \in \mathbb{R}^{N \times N}$ denotes the adjacency matrix, where $A_{ij}=A_{ji}$, $A_{ij}=0$ if $(v_{i},v_{j})\notin\mathcal{E}$, and $A_{ij}>0$ if $(v_{i},v_{j})\in\mathcal{E}$. $\mathbf{X} \in \mathbb{R}^{N \times D}$ is the node feature matrix and $D$ is the number of features. $\mathcal{S} \subseteq \{(i,j)|i=1,\dots,N, j=1,\dots,D\}$ is a set for the index of missing features: $\forall (i,j) \in \mathcal{S}$, $X_{ij}$ is not known.

\subsection{Graph Convolutional Network}
GCN-like models consist of aggregators and updaters. The aggregator gathers information guided by the graph structure, and the updater updates nodes’ hidden states according to the gathered information. Specifically, the graph convolutional layer is based on the following equation:
\begin{align} \label{eq:GraphConv}
    \mathbf{H}^{(l+1)} = \sigma (\mathbf{L} \mathbf{H}^{(l)} \mathbf{W}^{(l)})
\end{align}
where $\mathbf{L} \in \mathbb{R}^{N \times N}$ is the aggregation matrix, $\mathbf{H}^{(l)} =  (\bm{h}_1^{(l)}, \dots, \bm{h}_N^{(l)})^{\top} \in \mathbb{R}^{N \times D^{(l)}}$ is the node representation matrix in $l$-th layer, $\mathbf{H}^{(0)}=\mathbf{X}$, $\mathbf{W}^{(l)} \in \mathbb{R}^{D^{(l)} \times D^{(l+1)}}$ is the trainable weight matrix in $l$-th layer, and $\sigma(\cdot)$ is the activation function such as ReLU, LeakyReLU, and ELU.

GCN \cite{kipf2016GCN} adopts the re-normalized graph Laplacian $\mathbf{\hat{A}}$ as the aggregator:
\begin{align}
    \mathbf{L} = \mathbf{\hat{A}} \triangleq \mathbf{\tilde{D}}^{-1/2}\mathbf{\tilde{A}}\mathbf{\tilde{D}}^{-1/2},
\end{align}
where $\mathbf{\tilde{A}} = \mathbf{A} + \mathbf{I}$ and $\mathbf{\tilde{D}} = \mathrm{diag}(\sum_i \tilde{A}_{1i}, \dots, \sum_i \tilde{A}_{Ni})$. Empirically, 2-layer GCN with ReLU activation shows the best performance on node classification, defined as:
\begin{align}
    \mathrm{GCN}(\mathbf{X}, \mathbf{A}) = \mathrm{softmax}(\mathbf{L}(\mathrm{ReLU}(\mathbf{L}\mathbf{X} \mathbf{W}^{(0)}))\mathbf{W}^{(1)})
\end{align}

\section{Proposed Approach}
\label{sec:approach}

In this section, we propose our approach for training GCN on graphs containing missing features. We follow GMMC \cite{smieja2018GMMC} to represent the missing data by GMM and calculate the expected activation of neurons in the first hidden layer. Although this idea is implemented in simple neural networks such as autoencoder and multilayer perceptron, it has not yet been extended to complex neural networks such as RNN, CNN, GNN, and sequence-to-sequence models. The principal difficulty lies in the fact that simply using GMM to represent the missing data will even complicate the network architecture, which hinders us from calculating the expected activation in closed form. In the following, we propose a novel way to unify the representation of missing features and calculation of the expected activation of the first layer neurons in GCN. Specifically, we skillfully represent the missing features by introducing only a small number of parameters in GMM and derive the analytic solution of the expected activation, enabling us to integrate the processing of missing features and graph learning within the same neural network architecture.

\subsection{Representing Node Features Using GMM}
\label{subsec:representation}
Suppose $\boldsymbol{X} \in \mathbb{R}^{D}$ is a random variable for node features. We assume $\boldsymbol{X}$ is generated from the mixture of (degenerate) Gaussians:
\begin{align}
    & \boldsymbol{X} \sim \sum_{k=1}^K \pi_k \mathcal{N}(\boldsymbol{\mu}^{[k]}, \boldsymbol{\Sigma}^{[k]}) \\
    & \boldsymbol{\mu}^{[k]} = (\mu_1^{[k]}, \dots, \mu_D^{[k]})^{\top} \\
    & \boldsymbol{\Sigma}^{[k]} = \mathrm{diag}\left((\sigma_1^{[k]})^2, \dots, (\sigma_D^{[k]})^2\right),
\end{align}
where $K$ is the number of components, $\pi_k$ is the mixing parameter with the constraint that $\sum_k \pi_k = 1$, $\mu_j^{[k]}$ and $(\sigma_j^{[k]})^2$ denote the $j$-th element of mean and variance of the $k$-th Gaussian component, respectively. Further, we introduce a mean matrix $\mathbf{M}^{[k]} \in \mathbb{R}^{N \times D}$ and a variance matrix $\mathbf{S}^{[k]} \in \mathbb{R}^{N \times D}$ for each component as:
\begin{align} \label{eq:meanvar}
    M_{ij}^{[k]}&=\left\{ \begin{array}{ll}
    \mu_j^{[k]} & \text{if } X_{ij} \text{ is missing;} \\
    X_{ij} & \text{otherwise} \\
    \end{array} \right.
    \\
    S_{ij}^{[k]}&=\left\{ \begin{array}{ll}
    (\sigma_j^{[k]})^2 & \text{if } X_{ij} \text{ is missing;} \\
    0 & \text{otherwise} \\
    \end{array} \right.
\end{align}
This enables us to represent each $X_{ij}$ with:
\begin{equation}
X_{ij} \sim \sum_{k=1}^K \pi_k \mathcal{N}(M_{ij}^{[k]}, S_{ij}^{[k]}),
\end{equation}
no matter whether $X_{ij}$ is missing or not. Thus, we skillfully transform the input of our model into fixed $\mathbf{A}$ and unfixed $X_{ij}$ that follows the mixture of Gaussian distributions. The next layer is based on calculation of the expected activation of neurons, which is discussed in the next section.

\subsection{The Expected Activation of Neurons}
\label{subsec:activation}
Let us first identify some symbols that will be used. Suppose $x \sim F_x$ is a random variable and $F_x$ is the probability density function. We define
\begin{align}
    \sigma[x] \triangleq \sigma[F_x] \triangleq \mathbb{E}[\sigma(x)],
\end{align}
which is the expected value of $\sigma$ activation on $x$.

\begin{theorem}
\label{theorem1}
Let $x \sim \mathcal{N}(\mu, \sigma^2)$. Then:
\begin{align} \label{eq:geneNR}
    \mathrm{ReLU} [\mathcal{N}(\mu, \sigma^2)] = \sigma \mathrm{NR} \left( \frac{\mu}{\sigma} \right),
\end{align}
where 
\begin{align}
& \mathrm{NR}(z) = \frac{1}{\sqrt{2\pi}} \exp \left( -\frac{z^2}{2} \right) + \frac{z}{2} \left( 1 + \mathrm{erf} \left( \frac{z}{\sqrt{2}} \right) \right)\\
& \mathrm{erf}(z) = \frac{2}{\sqrt\pi} \int_{0}^{z} \exp(-t^2)dt.
\end{align}
\end{theorem}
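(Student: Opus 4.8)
The plan is to compute $\mathbb{E}[\mathrm{ReLU}(x)]$ directly from the Gaussian integral and then rearrange the closed form into the claimed $\mathrm{NR}$ expression. Since $\mathrm{ReLU}(x)=\max(x,0)$ vanishes on the negative half-line, only the integral over $[0,\infty)$ survives:
\[
\mathbb{E}[\mathrm{ReLU}(x)] = \int_0^\infty x\,\frac{1}{\sqrt{2\pi}\,\sigma}\exp\!\left(-\frac{(x-\mu)^2}{2\sigma^2}\right)dx.
\]

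First I would standardize via the substitution $z=(x-\mu)/\sigma$, so that $x=\sigma z+\mu$ and the lower limit $x=0$ maps to $z=-\mu/\sigma$. The integral becomes $\int_{-\mu/\sigma}^\infty(\sigma z+\mu)\,\varphi(z)\,dz$, where $\varphi$ denotes the standard normal density, and I would split it into a $\sigma z$ term and a constant $\mu$ term.

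The $\sigma z$ term is elementary because $z\varphi(z)$ has the antiderivative $-\varphi(z)$, so it evaluates to $\sigma\,\varphi(\mu/\sigma)=\frac{\sigma}{\sqrt{2\pi}}\exp(-\tfrac12(\mu/\sigma)^2)$, which is exactly the first summand of $\sigma\,\mathrm{NR}(\mu/\sigma)$. The $\mu$ term is the Gaussian tail probability $\mu\int_{-\mu/\sigma}^\infty\varphi(z)\,dz=\mu\,\Phi(\mu/\sigma)$, where $\Phi$ is the standard normal CDF; rewriting $\Phi(t)=\tfrac12(1+\mathrm{erf}(t/\sqrt2))$ introduces the error function of the statement. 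Finally, writing $z=\mu/\sigma$ and using $\mu=\sigma z$ to pull $\sigma$ out of both terms produces precisely $\sigma\,\mathrm{NR}(\mu/\sigma)$.

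There is no genuine obstacle: this is a standard first-moment computation for a truncated Gaussian, and the only care needed is bookkeeping with the substitution limits and matching the normal CDF to $\mathrm{erf}$ via $\Phi(t)=\tfrac12(1+\mathrm{erf}(t/\sqrt2))$ so that the constants in $\mathrm{NR}$ align. One subtlety worth flagging is the degenerate case $\sigma=0$ allowed by the model of Section~\ref{subsec:representation}: there the formula should be read as the limit $\sigma\to0^+$, for which $\sigma\,\mathrm{NR}(\mu/\sigma)\to\max(\mu,0)=\mathrm{ReLU}(\mu)$, recovering the deterministic value and confirming consistency.
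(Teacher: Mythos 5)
Your computation is correct: the split into the $\sigma z$ term (integrating $z\varphi(z)$ to $\sigma\varphi(\mu/\sigma)$) and the $\mu$ term ($\mu\Phi(\mu/\sigma)$, rewritten via $\Phi(t)=\tfrac12(1+\mathrm{erf}(t/\sqrt2))$) reproduces $\sigma\,\mathrm{NR}(\mu/\sigma)$ exactly. The paper does not prove this theorem itself but defers to the cited reference \cite{smieja2018GMMC}, and your argument is precisely the standard truncated-Gaussian first-moment calculation used there, so you have supplied the omitted proof rather than diverged from it. Your closing remark on the $\sigma\to0^+$ limit recovering $\mathrm{ReLU}(\mu)$ is also consistent with the degenerate-Gaussian consistency argument the paper carries out separately in Section~\ref{subsec:architecture}.
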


\begin{proof}
Please see \cite{smieja2018GMMC} for a proof.
\end{proof}

\begin{lemma}
\label{lemma2}
Let $X_{ij} \sim \sum_{k=1}^K \pi_k \mathcal{N}(M_{ij}^{[k]}, S_{ij}^{[k]})$. Given the aggregation matrix $\mathbf{L}$ and the weight matrix $\mathbf{W}$, then:
\begin{align}
    \mathrm{ReLU}[(\mathbf{LXW})_{ij}] &= \sum_{k=1}^K \pi_k \sqrt{\hat{S}_{ij}^{[k]}} \mathrm{NR} \bigg( \frac{\hat{M}_{ij}^{[k]}}{\sqrt{\hat{S}_{ij}^{[k]}}} \bigg)                                                  \label{neu_act1}  \\
    \mathrm{LeakyReLU}[(\mathbf{LXW})_{ij}] &= \sum_{k=1}^K \pi_k \Bigg( \sqrt{\hat{S}_{ij}^{[k]}} \mathrm{NR} \bigg( \frac{\hat{M}_{ij}^{[k]}}{\sqrt{\hat{S}_{ij}^{[k]}}} \bigg) \nonumber \\
                                            & -\alpha \sqrt{\hat{S}_{ij}^{[k]}} \mathrm{NR} \bigg( -\frac{\hat{M}_{ij}^{[k]}}{\sqrt{\hat{S}_{ij}^{[k]}}} \bigg) \Bigg), 
                                            \label{neu_act2}
\end{align}
where $\odot$ is element-wise multiplication, $\alpha$ is the negative slope parameter of $\mathrm{LeakyReLU}$ activation, and
\begin{align}
    & \mathbf{\hat{M}}^{[k]} = \mathbf{LM}^{[k]}\mathbf{W} \label{eq:hatM}\\
    & \mathbf{\hat{S}}^{[k]} = (\mathbf{L} \odot \mathbf{L})\mathbf{S}^{[k]}(\mathbf{W} \odot \mathbf{W}) \label{eq:hatS}.
\end{align}
\end{lemma}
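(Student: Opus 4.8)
The plan is to reduce the claim to Theorem~\ref{theorem1} by first showing that, under the generative model of Section~\ref{subsec:representation}, the scalar random variable $(\mathbf{LXW})_{ij}$ is itself a $K$-component Gaussian mixture whose $k$-th component has mean $\hat{M}_{ij}^{[k]}$ and variance $\hat{S}_{ij}^{[k]}$. The starting observation is that the mixture must be read with a single shared latent component indicator $Z$, i.e. $P(Z=k)=\pi_k$ and, conditioned on $Z=k$, every entry $X_{ab}$ is an independent Gaussian $\mathcal{N}(M_{ab}^{[k]}, S_{ab}^{[k]})$, with the diagonal form of $\boldsymbol{\Sigma}^{[k]}$ giving conditional independence across features. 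This is the reading that keeps the number of mixture components equal to $K$ after the linear map; observed entries (for which $S_{ab}^{[k]}=0$ and $M_{ab}^{[k]}=X_{ab}$) enter simply as constants.

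First I would condition on $Z=k$ and write $(\mathbf{LXW})_{ij} = \sum_{a,b} L_{ia} W_{bj} X_{ab}$. Since a linear combination of independent Gaussians is again Gaussian, conditionally on $Z=k$ this quantity is normal with mean $\sum_{a,b} L_{ia} W_{bj} M_{ab}^{[k]} = (\mathbf{L}\mathbf{M}^{[k]}\mathbf{W})_{ij} = \hat{M}_{ij}^{[k]}$, recovering \eqref{eq:hatM}. For the variance, conditional independence of the entries kills all cross terms, so only the squared coefficients survive: $\sum_{a,b} (L_{ia} W_{bj})^2 S_{ab}^{[k]} = ((\mathbf{L}\odot\mathbf{L})\mathbf{S}^{[k]}(\mathbf{W}\odot\mathbf{W}))_{ij} = \hat{S}_{ij}^{[k]}$, recovering \eqref{eq:hatS}. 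Hence $(\mathbf{LXW})_{ij} \mid Z=k \sim \mathcal{N}(\hat{M}_{ij}^{[k]}, \hat{S}_{ij}^{[k]})$, and marginalizing over $Z$ yields the desired $K$-component mixture.

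Next I would apply the law of total expectation, $\mathbb{E}[\sigma((\mathbf{LXW})_{ij})] = \sum_k \pi_k \, \mathbb{E}[\sigma((\mathbf{LXW})_{ij}) \mid Z=k]$, and invoke Theorem~\ref{theorem1} on each component with $(\mu,\sigma^2)=(\hat{M}_{ij}^{[k]}, \hat{S}_{ij}^{[k]})$. For ReLU this directly gives \eqref{neu_act1}. For LeakyReLU I would use the identity $\mathrm{LeakyReLU}(x) = \mathrm{ReLU}(x) - \alpha\,\mathrm{ReLU}(-x)$, apply Theorem~\ref{theorem1} both to $\mathcal{N}(\hat{M}_{ij}^{[k]}, \hat{S}_{ij}^{[k]})$ and to its negation $\mathcal{N}(-\hat{M}_{ij}^{[k]}, \hat{S}_{ij}^{[k]})$, and combine; the sign flip in the mean produces the $\mathrm{NR}(-\hat{M}_{ij}^{[k]}/\sqrt{\hat{S}_{ij}^{[k]}})$ term of \eqref{neu_act2}.

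The main obstacle is the first step rather than the computation: one must ensure the linear combination collapses to exactly $K$ components. This is only true because the component assignment $Z$ is shared across all entries and the entries are conditionally independent; if each $X_{ab}$ carried its own independent mixture indicator, the aggregation $\mathbf{L}$ would mix $N$ independent mixtures and the transformed variable would be a mixture of exponentially many Gaussians, so the clean closed form would break. A secondary point worth stating carefully is the degenerate case $\hat{S}_{ij}^{[k]}=0$ (no missing feature contributes, e.g.\ complete features), where the Gaussian collapses to a point mass and $\sqrt{\hat{S}^{[k]}_{ij}}\,\mathrm{NR}(\hat{M}^{[k]}_{ij}/\sqrt{\hat{S}^{[k]}_{ij}})$ must be read as its limit $\mathrm{ReLU}(\hat{M}_{ij}^{[k]})$; verifying this limit confirms that the formula reduces to ordinary GCN and underlies the consistency claim.
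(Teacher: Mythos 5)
Your proposal is correct and follows essentially the same route as the paper: express $(\mathbf{LXW})_{ij}$ as a linear combination of the entries of $\mathbf{X}$, argue it is again a $K$-component Gaussian mixture with parameters $\hat{M}_{ij}^{[k]}$ and $\hat{S}_{ij}^{[k]}$, and then apply Theorem~\ref{theorem1} componentwise via the law of total expectation (the paper omits the LeakyReLU computation, which your $\mathrm{ReLU}(x)-\alpha\,\mathrm{ReLU}(-x)$ decomposition handles correctly). Your explicit treatment of the shared latent component indicator $Z$ and of the degenerate case $\hat{S}_{ij}^{[k]}=0$ makes precise two points the paper's proof leaves implicit, but it does not change the underlying argument.
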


\begin{proof}
The element of matrix $\mathbf{LXW}$ can be expressed as:
\begin{align} \label{eq:gcn1st}
    (\mathbf{LXW})_{ij} = \sum_{d=1}^D \sum_{n=1}^N L_{in} X_{nd} W_{dj}
\end{align}
Based on the property of Gaussian distribution, $(\mathbf{LXW})_{ij}$ also follows a mixture of Gaussian distributions as:
\begin{align}
    &\sum_{d=1}^D \sum_{n=1}^N L_{in} X_{nd} W_{dj}\\
    &\sim \sum_{k=1}^K \pi_k \mathcal{N} \left(\sum_{d=1}^D \sum_{n=1}^N L_{in} M_{nd}^{[k]}W_{dj}, \sum_{d=1}^D \sum_{n=1}^N L_{in}^2 S_{nd}^{[k]} W_{dj}^2 \right) \\
    &= \sum_{k=1}^K \pi_k \mathcal{N} \left( \{\mathbf{LM}^{[k]}\mathbf{W\}}_{ij}, \{(\mathbf{L} \odot \mathbf{L})\mathbf{S}^{[k]}(\mathbf{W} \odot \mathbf{W})\}_{ij} \right) \\
    &= \sum_{k=1}^K \pi_k \mathcal{N} \left( \hat{M}_{ij}^{[k]}, \hat{S}_{ij}^{[k]} \right).
\end{align}
Finally, using the result of \textsc{Theorem}~\ref{theorem1}, we can derive Eq.~\eqref{neu_act1} as:
\begin{align} \label{eq:GCNmod1st}
    \mathrm{ReLU}[(\mathbf{LXW})_{ij}]
    &= \sum_{k=1}^K \pi_k \mathrm{ReLU} \left[\mathcal{N}(\hat{M}_{ij}^{[k]}, \hat{S}_{ij}^{[k]}) \right] \\
    &= \sum_{k=1}^K \pi_k \sqrt{\hat{S}_{ij}^{[k]}} \mathrm{NR} \bigg( \frac{\hat{M}_{ij}^{[k]}}{\sqrt{\hat{S}_{ij}^{[k]}}} \bigg).
\end{align}
Eq.~\eqref{neu_act2} can be proved similarly and the proof is omitted due to lack of space.
\end{proof} 

Thus, we can calculate the expected activation of neurons for the first layer according to \textsc{Lemma}~\ref{lemma2}. Calculation of the subsequent layers remains unchanged.

\subsection{The Network Architecture}
\label{subsec:architecture}

\begin{algorithm}[t]
\caption{Algorithm of \textsc{GCNmf}}
\label{alg:gcnmf}
\begin{algorithmic}[1]
\Require Aggregation matrix $\mathbf{L}$, node feature matrix $\mathbf{X}$ (with some missing elements), the number of layers $L$, the number of Gaussian components $K$
\Ensure According to the task
\Initialize{
$(\pi_k, \boldsymbol{\mu}^{[k]}, \boldsymbol{\Sigma}^{[k]})$ are optimized by EM algorithm w.r.t $\mathbf{X}$
}
\While{not converged}
    \State $\mathbf{H}^{(1)} \gets \mathrm{ReLU}[\mathbf{LXW}^{(0)}]$ 
    \Comment{\textsc{Lemma}~\ref{lemma2}}
    \For{$l \gets 2, \dots, L - 1$}
        \State $\mathbf{H}^{(l)} \gets \mathrm{ReLU}(\mathbf{LH}^{(l-1)}\mathbf{W}^{(l-1)})$
    \EndFor
    \State $\mathbf{Z} \gets final\_layer(\mathbf{LH}^{(L-1)}\mathbf{W}^{(L-1)})$
    \State $\mathcal{L} \gets loss(\mathbf{Z})$
    \State Minimize $\mathcal{L}$ and update GMM parameters and network parameters with a gradient descent optimization algorithm
\EndWhile
\end{algorithmic}
\end{algorithm}

Our approach is named \model{}. We illustrate the model architecture in Figure \ref{fig:archi} and present the pseudo-code in Algorithm~\ref{alg:gcnmf}, with additional explanations below.

\begin{itemize}
\item \textit{Initialize the hyper-parameters}\\
    The additional hyper-parameters include the number of layers $L$, the number of Gaussian components $K$.
\item \textit{Initialize the model parameters}\\
    The model parameters include GMM parameters $(\pi_k, \boldsymbol{\mu}^{[k]}, \boldsymbol{\Sigma}^{[k]})$ and conventional network parameters. GMM parameters are initialized by EM algorithm \cite{dempster1977} that explores the data density\footnote{The algorithm implementation is provided by scikit-learn: \url{https://scikit-learn.org/}}.
\item \textit{Forward propagation}\\
    Calculate the first layer according to \textsc{Lemma}~\ref{lemma2}, and calculate the other layers as usual.
\item \textit{Backward propagation}\\
    Apply a gradient descent optimization algorithm to jointly learn the GMM parameters and network parameters by minimizing a cost function that is created based on a specific task.
\item \textit{Consistency}\\
    \model{} is consistent with GCN when the features are complete. Suppose $\mathcal{S}=\emptyset$. It follows that $\sigma[(\mathbf{LXW})_{ij}] = \sigma\big((\mathbf{LXW})_{ij}\big)$ (see the proof below). In other words, the computation of the first layer based on expected activations is equivalent to that based on fixed features. Thus, \model{} degenerates to GCN when the features are complete.
\end{itemize}
\noindent \begin{proof}
Take ReLU activation as an example. When $\mathcal{S}=\emptyset$, we have $X_{ij} \sim \sum_{k=1}^K \pi_k \mathcal{N}(X_{ij}^{[k]}, 0)$, $\hat{S}_{ij}^{[k]}=0$, and $\hat{M}_{ij}^{[k]}=(\mathbf{LXW})_{ij}$. Thus,
\begin{align}
    \mathrm{ReLU}[(\mathbf{LXW})_{ij}] &= \sum_{k=1}^K \pi_k \sqrt{\hat{S}_{ij}^{[k]}} \mathrm{NR} \bigg( \frac{\hat{M}_{ij}^{[k]}}{\sqrt{\hat{S}_{ij}^{[k]}}} \bigg)                                                 \label{eq1} \\
                                       &= \sum_{k=1}^K \pi_k \lim_{\epsilon \to 0+} \Bigg( \sqrt{\epsilon} \mathrm{NR} \bigg( \frac{(\mathbf{LXW})_{ij}}{\sqrt{\epsilon}} \bigg) \Bigg)                                                \label{eq2} \\
                                       &= \sum_{k=1}^K \pi_k \lim_{\epsilon \to 0+} \Bigg( \sqrt{\frac{\epsilon}{2\pi}}
                                       \exp \Big( -\frac{(\mathbf{LXW})_{ij}^2}{2\epsilon} \Big)  \nonumber \\
                                       & + \frac{(\mathbf{LXW})_{ij}}{2} \Big( 1 + \frac{2}{\sqrt\pi} 
                                       \int_{0}^{\frac{(\mathbf{LXW})_{ij}}{\sqrt{2\epsilon}}} \exp(-t^2)dt \Big)  \Bigg)                       \label{eq3} \\
                                       &= \left\{ \begin{array}{ll}
                                            0 & \text{if } (\mathbf{LXW})_{ij}\leq 0 \\
                                            (\mathbf{LXW})_{ij} & \text{otherwise}
                                            \end{array}                                                                                 \label{eq4}   \right.\\
                                       &= \mathrm{ReLU}\big((\mathbf{LXW})_{ij}\big),
\end{align}
\noindent where we have used $\int_{0}^{+\infty}\exp(-x^2)dx = \frac{\sqrt{\pi}}{2}$ and $\int_{0}^{-\infty}\exp(-x^2)dx = -\frac{\sqrt{\pi}}{2}$ in Eq.~\eqref{eq4}.
\end{proof}

\subsubsection*{Time Complexity}
In the following, we analyze the time complexity of the forward propagation. Note that \model{} modifies the original GCN in the first layer, where the calculation of Eq.~\eqref{eq:GraphConv} is replaced by Eq.~\eqref{neu_act1} or \eqref{neu_act2}. We assume that $\mathbf{L}$ is a sparse matrix. The calculation of Eq.~\eqref{eq:GraphConv} takes $\mathcal{O}(|\mathcal{E}|D + NDD^{(1)})$ complexity \cite{chiang2019cluster}.

Eq.~\eqref{neu_act1} or \eqref{neu_act2} requires calculation of Eq.~\eqref{eq:hatM} and \eqref{eq:hatS}. The complexity of Eq.~\eqref{eq:hatM} for all $k$ is $\mathcal{O}(K(|\mathcal{E}|D + NDD^{(1)}))$. The complexity of Eq.~\eqref{eq:hatS} for all $k$ is $\mathcal{O(|\mathcal{E}|)}$ + $\mathcal{O}(DD^{(1)})$ + $\mathcal{O}(K(|\mathcal{E}|D + NDD^{(1)}))$, where the first two terms are for $(\mathbf{L} \odot \mathbf{L})$ and $(\mathbf{W} \odot \mathbf{W})$, respectively. Given $\mathbf{\hat{M}}^{[k]}$ and $\mathbf{\hat{S}}^{[k]}$, Eq.~\eqref{neu_act1} or \eqref{neu_act2} takes $\mathcal{O}(KND^{(1)})$ time for all $i,j$.

Putting them all together, the total complexity of the first layer of \model{} is $\mathcal{O}(K(|\mathcal{E}|D + NDD^{(1)}))$ + $\mathcal{O(|\mathcal{E}|)}$ + $\mathcal{O}(DD^{(1)})$ + $\mathcal{O}(K(|\mathcal{E}|D + NDD^{(1)}))$ + $\mathcal{O}(KND^{(1)})$ = $\mathcal{O}(K(|\mathcal{E}|D + NDD^{(1)}))$. Since the number of components $K$ is usually small, the forward propagation of \model{} has the same complexity as GCN.

\section{Experiments}
\label{sec:experiment}
We conducted experiments on the node classification task and link prediction task to answer the following questions:
\begin{itemize}
\item Does {\textsc{GCNmf}} agree with our intuition and perform well? 
\item Where do imputation based methods fail?
\item Is {\textsc{GCNmf}} sensitive to the hyper-parameters?
\item Is {\textsc{GCNmf}} computationally expensive?
\end{itemize}
In the following, we first explain experimental settings in detail, including baselines and datasets. After that, we discuss the results.

\begin{table*}[!t]
    \centering
    \caption{Statistics of datasets.}
    \label{tab:statdata}
    \begin{tabular}{l|cccc} \toprule
         & Cora & Citeseer & AmaPhoto & AmaComp\\ \midrule
        \#Nodes & 2,708 & 3,327 & 7,650 & 13,752\\
        \#Edges & 5,429 & 4,732 & 143,663 & 287,209\\
        \#Features & 1,433 & 3,703 & 745 & 767\\
        \#Classes & 7 & 6 & 8 & 10\\
        \#Train nodes & 140 & 120 & 320 & 400\\
        \#Validation nodes & 500 & 500 & 500 & 500\\
        \#Test nodes & 1,000 & 1,000 & 6,830 & 12,852\\
        Feature sparsity & 98.73\% & 99.15\% & 65.26\% & 65.16\%  \\
        \bottomrule
    \end{tabular}
\end{table*}

\textbf{Datasets.}
We did experiments on four real-world graph datasets that are commonly used. Descriptions of these graphs are as follows and Table \ref{tab:statdata} summarizes their statistics.
\begin{itemize}
\item Cora and Citeseer \cite{sen2008col}: The citation graphs, where nodes are documents and edges are citation links. Node features are bag-of-words representations of documents. Each node is associated with a label representing the topic of documents.
\item AmaPhoto and AmaComp \cite{amazondata2015}: The product co-purchase graphs, where nodes are products and edges exist between products that are co-purchased by users frequently. Node features are bag-of-words representations of product reviews. Node labels represent the category of products.
\end{itemize}

To prepare graphs with missing features, we pre-processed the datasets and removed a portion of node features according to a missing rate parameter $mr$. We consider the following three cases.

\begin{itemize}

\item \textit{Uniform randomly missing features}\\
    $mr=|\mathcal{S}|/(ND)$ (percentage) of the features are randomly selected and removed from the node feature matrix $\mathbf{X}$. $\mathcal{S}$ was randomly selected with uniform probability.
    
\item \textit{Biased randomly missing features}\\
    90\% of certain features and 10\% of the remaining features are randomly selected and removed from $\mathbf{X}$. In this scenario, the features with 90\% values removed represent sensitive information, which is always missing in practice. For ease of implementation, such sensitive features are randomly selected under the condition $mr=|\mathcal{S}|/(ND)$.
    

\item \textit{Structurally missing features}\\
    The respective features of $mr$ (percentage) random nodes are removed from $\mathbf{X}$. Specifically, $\mathcal{V}^{\prime} \subseteq \mathcal{V}$ was randomly selected with uniform probability, such that $mr=|\mathcal{V}^{\prime}|/N$. Then, $\mathcal{S}=\{(i,j)|v_i \in \mathcal{V}^{\prime}, j=1,\dots,D\}$.

\end{itemize}

\textbf{Baselines.}
We consider the following imputation methods to fill in missing values and then apply GCN on the complete graphs.
\begin{itemize}
    \item \textsc{MEAN} \cite{GarciaLaencina2010}: 
        This method replaces missing values with the mean of observed features based on the respective row of the feature matrix $\mathbf{X}$.
    \item \textsc{K-NN} \cite{batista2002study}:
        This approach samples similar features by $k$-nearest neighbors and then replaces missing values with the mean of these features. We set $k = 5$.
    \item \textsc{MFT} \cite{koren2009mf}:
        This is the imputation method based on factorizing the incomplete matrix into two low-rank matrices.
    \item \textsc{SoftImp} \cite{mazumder2010soft}:
        This method iteratively replaces the missing values with those estimated from a soft-thresholded singular value decomposition (SVD).
    \item \textsc{MICE} \cite{buuren2010mice}:
        This is the multiple imputation method that infers missing values from the conditional distributions by Markov chain Monte Carlo (MCMC) techniques.
    \item \textsc{MissForest} \cite{Stekhoven2011missforest}:
        This is a non-parametric imputation method that utilizes Random Forest to predict missing values.
    \item \textsc{VAE} \cite{Kingma2014}:
        This is a VAE based method for reconstructing missing values.
    \item \textsc{GAIN} \cite{yoon2018gain}:
        This is a GAN-based approach for imputing missing data.
    \item \textsc{GINN} \cite{spinelli2019ginn}:
        This is a imputation method based on graph denoising autoencoder.
\end{itemize}

We employed Optuna \cite{optuna_2019} to tune the hyper-parameters such as learning rate, $L_2$ regularization, and dropout rate. We followed the normalized initialization scheme \cite{glorot10understanding} to initialize the weight matrix. We adopted Adam algorithm \cite{KingmaB14Adam} for optimization. For \textsc{GCNmf}, we simply set the number of Gaussian components to 5 across all datasets. The implementation of all approaches is in Python and PyTorch and we ran the experiments on a single machine with Intel Xeon Gold 6148 Processor @2.40GHz, NVIDIA Tesla V100 GPU, and RAM @64GB. For reproducibility, the source code of \textsc{GCNmf} and the graph datasets are publicly available\footnote{\url{https://github.com/marblet/GCNmf}}.

\begin{table*}[!pt]
    \centering
    \caption{The accuracy results for node classification task in Cora.}
    \label{table_cora}
    \scalebox{.84}{
    \begin{tabular}{c|c|ccccccccc} \toprule
Missing type & Missing rate & 10\% & 20\% & 30\% & 40\% & 50\% & 60\% & 70\% & 80\% & 90\%\\ \midrule
\multirow{13}{*}{\shortstack{Uniform\\Randomly\\Missing}} & \textsc{MEAN} & \underline{80.96} & 80.41 & 79.48 & 78.51 & 77.17 & 73.66 & 56.24 & 20.49 & 13.22\\
& \textsc{K-NN} & 80.45 & 80.10 & 78.86 & 77.26 & 75.34 & 71.55 & 66.44 & 40.99 & 15.11\\
& \textsc{MFT} & 80.70 & 80.03 & 78.97 & 78.12 & 76.43 & 71.33 & 45.82 & 27.22 & 23.98\\
& \textsc{SoftImp} & 80.74 & 80.32 & 79.63 & 78.68 & \underline{77.32} & \underline{74.26} & \underline{70.36} & \underline{64.93} & 41.20\\
& \textsc{MICE} & -- & -- & -- & -- & -- & -- & -- & -- & --\\
& \textsc{MissForest} & 80.68 & 80.43 & \underline{79.74} & \underline{79.27} & 76.12 & 73.70 & 68.31 & 60.92 & 45.89\\
& \textsc{VAE} & 80.91 & \underline{80.47} & 79.18 & 78.38 & 76.84 & 72.41 & 50.79 & 18.12 & 13.27\\
& \textsc{GAIN} & 80.43 & 79.72 & 78.35 & 77.01 & 75.31 & 72.50 & 70.34 & 64.85 & \underline{58.87}\\
& \textsc{GINN} & 80.77 & 80.01 & 78.77 & 76.67 & 74.44 & 70.58 & 58.60 & 18.04 & 13.19\\
& \textsc{GCNmf} & \textbf{81.70} & \textbf{81.66} & \textbf{80.41} & \textbf{79.52} & \textbf{77.91} & \textbf{76.67} & \textbf{74.38} & \textbf{70.57} & \textbf{63.49}\\ \cmidrule{2-11}
& \multirow{3}{*}{\shortstack{Performance gain \\ (\%)}} & 0.91 & 1.48 & 0.84 & 0.32 & 0.76 & 3.25 & 5.71 & 8.69 & 7.85\\
&  & \textbar & \textbar & \textbar & \textbar & \textbar & \textbar & \textbar & \textbar & \textbar\\
&  & 1.58 & 2.43 & 2.63 & 3.72 & 4.66 & 8.63 & 62.33 & 291.19 & 381.35\\
\midrule
\multirow{13}{*}{\shortstack{Biased\\Randomly\\Missing}} & \textsc{Mean} & \underline{81.22} & \underline{80.37} & \underline{78.95} & 77.46 & 75.94 & 72.44 & 53.14 & 20.39 & 13.40\\
& \textsc{K-NN} & 80.75 & 79.94 & 78.33 & 77.17 & 75.62 & 72.66 & 67.05 & 54.71 & 15.13\\
& \textsc{MFT} & 80.75 & 75.01 & 56.28 & 55.76 & 43.81 & 29.31 & 25.88 & 21.79 & 21.07\\
& \textsc{SoftImp} & 81.04 & 80.30 & 78.80 & \underline{78.50} & \underline{75.99} & 73.65 & 61.37 & 60.06 & 46.38\\
& \textsc{MICE} & -- & -- & -- & -- & -- & -- & -- & -- & --\\
& \textsc{MissForest} & 80.90 & 80.10 & 78.79 & 77.54 & 74.66 & 71.04 & 65.28 & 56.65 & 44.30\\
& \textsc{VAE} & 80.92 & 80.33 & 78.86 & 77.25 & 75.74 & 69.29 & 53.53 & 18.11 & 13.27\\
& \textsc{GAIN} & 80.68 & 79.62 & 78.54 & 77.41 & 75.84 & \underline{73.82} & \underline{69.18} & \underline{63.99} & \underline{59.41}\\
& \textsc{GINN} & 80.86 & 80.10 & 78.45 & 76.80 & 74.60 & 72.08 & 65.72 & 50.08 & 13.22\\
& \textsc{GCNmf} & \textbf{82.29} & \textbf{81.09} & \textbf{80.00} & \textbf{79.23} & \textbf{77.33} & \textbf{76.19} & \textbf{72.57} & \textbf{68.19} & \textbf{65.73}\\ \cmidrule{2-11}
& \multirow{3}{*}{\shortstack{Performance gain \\ (\%)}} & 1.32 & 0.90 & 1.33 & 0.93 & 1.76 & 3.21 & 4.90 & 6.56 & 10.64\\
&  & \textbar & \textbar & \textbar & \textbar & \textbar & \textbar & \textbar & \textbar & \textbar\\
&  & 2.00 & 8.11 & 42.15 & 42.09 & 76.51 & 159.95 & 180.41 & 276.53 & 397.20\\
\midrule
\multirow{13}{*}{\shortstack{Structurally\\Missing}} & \textsc{MEAN} & \underline{80.92} & \underline{80.40} & \underline{79.05} & \underline{77.73} & \underline{75.22} & 70.18 & 56.30 & 25.56 & 13.86\\
& \textsc{K-NN} & 80.76 & 80.26 & 78.63 & 77.51 & 74.51 & \underline{70.86} & \underline{63.29} & 37.97 & 13.95\\
& \textsc{MFT} & 80.91 & 80.34 & 78.93 & 77.48 & 74.47 & 69.13 & 52.65 & 29.96 & 17.05\\
& \textsc{SoftImp} & 79.71 & 69.47 & 69.31 & 52.53 & 44.71 & 40.07 & 36.68 & 28.51 & \underline{27.90}\\
& \textsc{MICE} & \underline{80.92} & \underline{80.40} & \underline{79.05} & 77.72 & \underline{75.22} & 70.18 & 56.30 & 25.56 & 13.86\\
& \textsc{MissForest} & 80.48 & 79.88 & 78.54 & 76.93 & 73.88 & 68.13 & 54.29 & 30.82 & 14.05\\
& \textsc{VAE} & 80.63 & 79.98 & 78.57 & 77.42 & 74.69 & 69.95 & 60.71 & 36.59 & 17.27\\
& \textsc{GAIN} & 80.53 & 79.78 & 78.36 & 77.09 & 74.25 & 69.90 & 61.33 & \underline{41.09} & 18.43\\
& \textsc{GINN} & 80.85 & 80.27 & 78.88 & 77.35 & 74.76 & 70.58 & 59.45 & 29.15 & 13.92\\
& \textsc{GCNmf} & \textbf{81.65} & \textbf{80.77} & \textbf{80.67} & \textbf{79.24} & \textbf{77.43} & \textbf{75.97} & \textbf{72.69} & \textbf{68.00} & \textbf{55.64}\\ \cmidrule{2-11}
& \multirow{3}{*}{\shortstack{Performance gain \\ (\%)}} & 0.90 & 0.46 & 2.05 & 1.94 & 2.94 & 7.21 & 14.85 & 65.49 & 99.43\\
&  & \textbar & \textbar & \textbar & \textbar & \textbar & \textbar & \textbar & \textbar & \textbar\\
&  & 2.43 & 16.27 & 16.39 & 50.85 & 73.18 & 89.59 & 98.17 & 166.04 & 301.44\\
\midrule
\multicolumn{2}{c|}{RGCN} & 60.29 & 34.12 & 24.80 & 18.62 & 16.04 & 13.88 & 13.89 & 13.70 & 13.60\\
\midrule
\multicolumn{2}{c|}{GCN}& \multicolumn{9}{c}{81.49}\\ 
\multicolumn{2}{c|}{GCN w/o node features}& \multicolumn{9}{c}{63.22}\\
\bottomrule
    \end{tabular}
}
\end{table*}

\begin{table*}[!pt]
    \centering
    \caption{The accuracy results for node classification task in Citeseer.}
    \label{table_citeseer}
    \scalebox{.84}{
    \begin{tabular}{c|c|ccccccccc} \toprule
Missing type & Missing rate & 10\% & 20\% & 30\% & 40\% & 50\% & 60\% & 70\% & 80\% & 90\%\\ \midrule
\multirow{13}{*}{\shortstack{Uniform\\Randomly\\Missing}} & \textsc{MEAN} & 69.88 & 69.62 & 68.97 & 65.12 & 54.62 & 37.39 & 18.29 & 12.28 & 11.88\\
& \textsc{K-NN} & 69.84 & 69.38 & 68.69 & 67.18 & 62.64 & 54.75 & 32.20 & 14.84 & 12.73\\
& \textsc{MFT} & 69.70 & 69.51 & 68.74 & 65.31 & 60.56 & 41.53 & 34.10 & 17.26 & 19.29\\
& \textsc{SoftImp} & 69.63 & 69.34 & \underline{69.23} & \underline{68.47} & \underline{66.35} & \textbf{65.53} & \textbf{60.86} & \underline{52.23} & 31.08\\
& \textsc{MICE} & -- & -- & -- & -- & -- & -- & -- & -- & --\\
& \textsc{MissForest} & -- & -- & -- & -- & -- & -- & -- & -- & --\\
& \textsc{VAE} & 69.80 & 69.39 & 68.54 & 64.13 & 50.91 & 29.62 & 18.45 & 12.49 & 11.00\\
& \textsc{GAIN} & 69.64 & 68.88 & 67.56 & 65.97 & 63.86 & 60.74 & 55.77 & 52.05 & \underline{42.73}\\
& \textsc{GINN} & \underline{70.07} & \underline{69.79} & 68.87 & 68.14 & 63.21 & 43.61 & 20.74 & 13.26 & 11.31\\
& \textsc{GCNmf} & \textbf{70.93} & \textbf{70.82} & \textbf{69.84} & \textbf{68.83} & \textbf{67.03} & \underline{64.78} & \underline{60.70} & \textbf{55.38} & \textbf{47.78}\\ \cmidrule{2-11}
& \multirow{3}{*}{\shortstack{Performance gain \\ (\%)}} & 1.23 & 1.48 & 0.88 & 0.53 & 1.02 & -1.14 & -0.26 & 6.03 & 11.82\\
&  & \textbar & \textbar & \textbar & \textbar & \textbar & \textbar & \textbar & \textbar & \textbar\\
&  & 1.87 & 2.82 & 3.37 & 7.33 & 31.66 & 118.70 & 231.88 & 350.98 & 334.36\\
\midrule
\multirow{13}{*}{\shortstack{Biased\\Randomly\\Missing}} & \textsc{Mean} & 69.98 & 68.95 & 67.91 & 65.87 & 60.33 & 40.68 & 25.45 & 14.01 & 13.32\\
& \textsc{K-NN} & 70.04 & 68.87 & 68.88 & 67.38 & 64.47 & \underline{62.45} & 52.66 & 32.60 & 12.64\\
& \textsc{MFT} & 69.88 & 67.68 & 63.17 & 45.49 & 25.99 & 20.22 & 20.82 & 18.53 & 18.30\\
& \textsc{SoftImp} & 69.83 & 67.36 & 68.36 & 67.49 & 64.26 & 62.38 & \underline{58.45} & \textbf{55.63} & 32.95\\
& \textsc{MICE} & -- & -- & -- & -- & -- & -- & -- & -- & --\\
& \textsc{MissForest} & -- & -- & -- & -- & -- & -- & -- & -- & --\\
& \textsc{VAE} & \underline{70.05} & 69.13 & 68.21 & 63.44 & 55.71 & 38.55 & 21.98 & 13.34 & 11.17\\
& \textsc{GAIN} & 69.81 & 68.76 & 68.38 & 66.83 & 64.05 & 62.15 & 58.31 & 52.14 & \underline{42.18}\\
& \textsc{GINN} & 69.96 & \underline{69.60} & \underline{69.63} & \underline{68.67} & \underline{64.93} & 62.14 & 55.01 & 31.37 & 12.91\\
& \textsc{GCNmf} & \textbf{71.01} & \textbf{69.99} & \textbf{69.96} & \textbf{68.89} & \textbf{66.30} & \textbf{64.67} & \textbf{61.06} & \underline{54.70} & \textbf{46.14}\\ \cmidrule{2-11}
& \multirow{3}{*}{\shortstack{Performance gain \\ (\%)}} & 1.37 & 0.56 & 0.47 & 0.32 & 2.11 & 3.55 & 4.47 & -1.67 & 9.39\\
&  & \textbar & \textbar & \textbar & \textbar & \textbar & \textbar & \textbar & \textbar & \textbar\\
&  & 1.72 & 3.90 & 10.75 & 51.44 & 155.10 & 219.83 & 193.28 & 310.04 & 313.07\\
\midrule
\multirow{13}{*}{\shortstack{Structurally\\Missing}} & \textsc{MEAN} & 69.55 & \underline{68.31} & \textbf{67.30} & \underline{65.18} & 53.64 & 34.07 & 18.56 & 13.19 & 11.30\\
& \textsc{K-NN} & 69.67 & 67.33 & 66.09 & 63.29 & 56.86 & 31.27 & 19.51 & 13.75 & 11.21\\
& \textsc{MFT} & \underline{69.84} & 68.21 & \underline{66.67} & 63.02 & 51.08 & 34.29 & 16.81 & 14.34 & 15.75\\
& \textsc{SoftImp} & 44.06 & 27.92 & 25.83 & 25.13 & 25.59 & 23.99 & 25.41 & 22.83 & \underline{20.13}\\
& \textsc{MICE} & -- & -- & -- & -- & -- & -- & -- & -- & --\\
& \textsc{MissForest} & -- & -- & -- & -- & -- & -- & -- & -- & --\\
& \textsc{VAE} & 69.63 & 68.07 & 66.34 & 64.33 & \underline{60.46} & \underline{54.37} & 40.71 & 23.14 & 17.20\\
& \textsc{GAIN} & 69.47 & 67.86 & 65.88 & 63.96 & 59.96 & 54.24 & \underline{41.21} & \underline{25.31} & 17.89\\
& \textsc{GINN} & 69.64 & 67.88 & 66.24 & 63.71 & 55.76 & 40.20 & 18.63 & 13.23 & 12.32\\
& \textsc{GCNmf} & \textbf{70.44} & \textbf{68.56} & 66.57 & \textbf{65.39} & \textbf{63.44} & \textbf{60.04} & \textbf{56.88} & \textbf{51.37} & \textbf{39.86}\\ \cmidrule{2-11}
& \multirow{3}{*}{\shortstack{Performance gain \\ (\%)}} & 0.86 & 0.37 & -1.08 & 0.32 & 4.93 & 10.43 & 38.02 & 102.96 & 98.01\\
&  & \textbar & \textbar & \textbar & \textbar & \textbar & \textbar & \textbar & \textbar & \textbar\\
&  & 59.87 & 145.56 & 157.72 & 160.21 & 147.91 & 150.27 & 238.37 & 289.46 & 255.58\\
\midrule
\multicolumn{2}{c|}{RGCN} & 34.37 & 20.69 & 14.16 & 12.15 & 12.01 & 12.34 & 14.36 & 11.97 & 12.57\\
\midrule
\multicolumn{2}{c|}{GCN}& \multicolumn{9}{c}{70.65}\\ 
\multicolumn{2}{c|}{GCN w/o node features}& \multicolumn{9}{c}{40.55}\\
\bottomrule
    \end{tabular}
    }
\end{table*}

\begin{table*}[!pt]
    \centering
    \caption{The accuracy results for node classification task in AmaPhoto.}
    \label{table_amaphoto}
    \scalebox{.85}{
    \begin{tabular}{c|c|ccccccccc} \toprule
Missing type & Missing rate & 10\% & 20\% & 30\% & 40\% & 50\% & 60\% & 70\% & 80\% & 90\%\\ \midrule
\multirow{13}{*}{\shortstack{Uniform\\Randomly\\Missing}} & \textsc{MEAN} & 92.15 & 92.05 & 91.81 & 91.62 & 91.40 & 90.76 & 88.98 & 86.41 & 68.88\\
& \textsc{K-NN} & \underline{92.27} & \underline{92.12} & 91.94 & 91.67 & 91.37 & 90.92 & 90.03 & 87.41 & 81.91\\
& \textsc{MFT} & 92.23 & 92.07 & 91.88 & 91.51 & 91.15 & 90.11 & 88.28 & 85.17 & 75.73\\
& \textsc{SoftImp} & 92.23 & 92.09 & 91.92 & \underline{91.78} & \underline{91.55} & 91.18 & 90.55 & 88.93 & 85.22\\
& \textsc{MICE} & 92.23 & 92.07 & \underline{91.97} & 91.75 & 91.52 & 91.22 & 90.42 & 86.43 & 82.88\\
& \textsc{MissForest} & 92.18 & 92.09 & 91.82 & 91.61 & 91.42 & 90.71 & 89.17 & 86.03 & 82.82\\
& \textsc{VAE} & 92.20 & 92.08 & 91.90 & 91.59 & 91.15 & 90.55 & 89.28 & 86.95 & 81.43\\
& \textsc{GAIN} & 92.23 & 92.11 & 91.90 & 91.73 & 91.49 & \underline{91.24} & \underline{90.72} & \underline{89.49} & \underline{86.96}\\
& \textsc{GINN} & 92.25 & 92.03 & 91.87 & 91.53 & 91.14 & 90.56 & 88.59 & 85.02 & 79.80\\
& \textsc{GCNmf} & \textbf{92.54} & \textbf{92.44} & \textbf{92.20} & \textbf{92.09} & \textbf{92.09} & \textbf{91.69} & \textbf{91.25} & \textbf{90.57} & \textbf{88.96}\\ \cmidrule{2-11}
& \multirow{3}{*}{\shortstack{Performance gain \\ (\%)}} & 0.29 & 0.35 & 0.25 & 0.34 & 0.59 & 0.49 & 0.58 & 1.21 & 2.30\\
&  & \textbar & \textbar & \textbar & \textbar & \textbar & \textbar & \textbar & \textbar & \textbar\\
&  & 0.42 & 0.45 & 0.42 & 0.63 & 1.04 & 1.75 & 3.36 & 6.53 & 29.15\\
\midrule
\multirow{13}{*}{\shortstack{Biased\\Randomly\\Missing}} & \textsc{Mean} & 92.19 & 91.89 & 91.80 & 91.58 & 91.24 & 90.74 & 89.69 & 87.23 & 76.91\\
& \textsc{K-NN} & \underline{92.24} & 92.09 & 91.99 & \underline{91.85} & 91.58 & 91.32 & 90.68 & 89.39 & 81.88\\
& \textsc{MFT} & 92.17 & 92.03 & 91.98 & 91.71 & 91.40 & 90.99 & 89.89 & 87.46 & 75.14\\
& \textsc{SoftImp} & 92.21 & \underline{92.10} & 92.02 & \underline{91.85} & \underline{91.61} & 91.27 & 90.52 & 88.87 & 84.84\\
& \textsc{MICE} & 92.16 & 92.06 & 92.00 & 91.76 & 91.58 & 91.24 & 90.54 & 88.64 & 82.45\\
& \textsc{MissForest} & 92.16 & 92.09 & \underline{92.07} & 91.81 & 91.35 & 90.67 & 89.77 & 86.85 & 82.72\\
& \textsc{VAE} & 92.14 & 92.04 & 91.95 & 91.70 & 91.41 & 91.02 & 90.00 & 88.92 & 83.08\\
& \textsc{GAIN} & 92.22 & 92.02 & 91.87 & 91.76 & 91.58 & \underline{91.43} & \underline{90.88} & \underline{89.99} & \underline{87.11}\\
& \textsc{GINN} & \underline{92.24} & 92.04 & 91.95 & 91.78 & 91.48 & 91.16 & 90.40 & 88.35 & 79.18\\
& \textsc{GCNmf} & \textbf{92.72} & \textbf{92.69} & \textbf{92.55} & \textbf{92.61} & \textbf{92.43} & \textbf{92.33} & \textbf{91.91} & \textbf{91.58} & \textbf{89.35}\\ \cmidrule{2-11}
& \multirow{3}{*}{\shortstack{Performance gain \\ (\%)}} & 0.52 & 0.64 & 0.52 & 0.83 & 0.90 & 0.98 & 1.13 & 1.77 & 2.57\\
&  & \textbar & \textbar & \textbar & \textbar & \textbar & \textbar & \textbar & \textbar & \textbar\\
&  & 0.63 & 0.87 & 0.82 & 1.12 & 1.30 & 1.83 & 2.48 & 5.45 & 18.91\\
\midrule
\multirow{13}{*}{\shortstack{Structurally\\Missing}} & \textsc{MEAN} & 92.06 & 91.80 & \underline{91.59} & 91.20 & 90.59 & 89.83 & 87.66 & 84.60 & 77.41\\
& \textsc{K-NN} & 92.04 & 91.71 & 91.43 & 91.08 & 90.37 & 89.88 & 88.80 & \underline{85.77} & \underline{80.48}\\
& \textsc{MFT} & 92.08 & 91.83 & \underline{91.59} & 91.18 & 90.56 & 89.80 & 87.58 & 84.36 & 77.69\\
& \textsc{SoftImp} & 91.75 & 91.19 & 90.55 & 89.33 & 88.00 & 87.19 & 84.87 & 81.96 & 76.72\\
& \textsc{MICE} & 92.05 & \underline{91.87} & \underline{91.59} & \underline{91.24} & 90.60 & 89.86 & 87.82 & 84.57 & 77.32\\
& \textsc{MissForest} & 92.04 & 91.70 & 91.42 & 91.15 & 90.49 & \underline{90.07} & \underline{88.81} & 85.51 & 75.35\\
& \textsc{VAE} & \underline{92.11} & 91.84 & 91.50 & 91.08 & 90.46 & 89.29 & 87.47 & 83.45 & 67.85\\
& \textsc{GAIN} & 92.04 & 91.78 & 91.49 & 91.14 & \underline{90.63} & 89.94 & 88.60 & 85.41 & 76.48\\
& \textsc{GINN} & 92.09 & 91.83 & 91.53 & 91.16 & 90.43 & 89.61 & 87.77 & 84.53 & 77.14\\
& \textsc{GCNmf} & \textbf{92.45} & \textbf{92.32} & \textbf{92.08} & \textbf{91.88} & \textbf{91.52} & \textbf{90.89} & \textbf{90.39} & \textbf{89.64} & \textbf{86.09}\\ \cmidrule{2-11}
& \multirow{3}{*}{\shortstack{Performance gain \\ (\%)}} & 0.37 & 0.49 & 0.53 & 0.70 & 0.98 & 0.91 & 1.78 & 4.51 & 6.97\\
&  & \textbar & \textbar & \textbar & \textbar & \textbar & \textbar & \textbar & \textbar & \textbar\\
&  & 0.76 & 1.24 & 1.69 & 2.85 & 4.00 & 4.24 & 6.50 & 9.37 & 26.88\\
\midrule
\multicolumn{2}{c|}{RGCN} & 91.50 & 90.81 & 88.37 & 85.52 & 75.17 & 84.89 & 87.67 & 89.95 & 90.56\\
\midrule
\multicolumn{2}{c|}{GCN}& \multicolumn{9}{c}{92.35}\\
\multicolumn{2}{c|}{GCN w/o node features}& \multicolumn{9}{c}{88.77}\\
\bottomrule
    \end{tabular}
    }
\end{table*}

\begin{table*}[!pt]
    \centering
    \caption{The accuracy results for node classification task in AmaComp.}
    \label{table_amacomp}
    \scalebox{.85}{
    \begin{tabular}{c|c|ccccccccc} \toprule
Missing type & Missing rate & 10\% & 20\% & 30\% & 40\% & 50\% & 60\% & 70\% & 80\% & 90\%\\ \midrule
\multirow{13}{*}{\shortstack{Uniform\\Randomly\\Missing}} & \textsc{MEAN} & 82.79 & 82.36 & 81.51 & 80.53 & 79.30 & 77.22 & 74.56 & 61.60 & 5.92\\
& \textsc{K-NN} & 82.89 & 82.73 & 82.18 & 82.00 & 81.54 & 80.58 & 79.34 & 76.81 & 66.04\\
& \textsc{MFT} & 82.82 & 82.54 & 82.05 & 81.58 & 80.76 & 79.28 & 77.11 & 72.31 & 49.42\\
& \textsc{SoftImp} & \underline{82.99} & 82.75 & 82.37 & 82.06 & 81.48 & 80.48 & 79.27 & 77.29 & 69.04\\
& \textsc{MICE} & 82.83 & 82.76 & 82.43 & \underline{82.28} & \underline{81.66} & 80.59 & 78.63 & 75.00 & 63.60\\
& \textsc{MissForest} & -- & -- & -- & -- & 80.89 & 79.57 & 78.22 & 76.00 & 71.98\\
& \textsc{VAE} & 82.65 & 82.47 & 81.72 & 81.15 & 80.47 & 79.99 & 78.55 & 75.80 & 67.26\\
& \textsc{GAIN} & 82.94 & \underline{82.78} & \underline{82.44} & 81.96 & 81.56 & \underline{80.71} & \underline{79.96} & \underline{78.38} & \underline{76.15}\\
& \textsc{GINN} & 82.94 & \underline{82.78} & 82.27 & 81.65 & 80.89 & 78.53 & 76.46 & 73.24 & 58.34\\
& \textsc{GCNmf} & \textbf{86.32} & \textbf{86.07} & \textbf{85.98} & \textbf{85.77} & \textbf{85.46} & \textbf{84.94} & \textbf{84.03} & \textbf{82.38} & \textbf{77.52}\\ \cmidrule{2-11}
& \multirow{3}{*}{\shortstack{Performance gain \\ (\%)}} & 4.01 & 3.97 & 4.29 & 4.24 & 4.65 & 5.24 & 5.09 & 5.10 & 1.80\\
&  & \textbar & \textbar & \textbar & \textbar & \textbar & \textbar & \textbar & \textbar & \textbar\\
&  & 4.44 & 4.50 & 5.48 & 6.51 & 7.77 & 10.00 & 12.70 & 33.73 & 1209.46\\
\midrule
\multirow{13}{*}{\shortstack{Biased\\Randomly\\Missing}} & \textsc{Mean} & 83.03 & \underline{83.07} & 82.49 & 81.82 & 81.17 & 79.76 & 78.16 & 73.79 & 8.68\\
& \textsc{K-NN} & 83.01 & 82.79 & 82.43 & 82.14 & 81.57 & \underline{81.40} & 80.24 & 77.86 & 66.45\\
& \textsc{MFT} & 82.98 & 82.86 & 82.39 & 81.93 & 81.30 & 80.18 & 78.66 & 74.96 & 50.53\\
& \textsc{SoftImp} & 83.07 & 82.88 & 82.13 & 81.87 & 81.23 & 80.53 & 78.98 & 76.74 & 73.91\\
& \textsc{MICE} & 83.07 & 82.77 & 82.44 & 81.94 & 81.56 & 80.84 & 79.40 & 76.71 & 64.11\\
& \textsc{MissForest} & -- & -- & 81.88 & -- & 80.52 & 79.62 & 78.27 & 76.66 & 71.74\\
& \textsc{VAE} & 82.93 & 82.66 & 82.27 & 81.57 & 81.04 & 80.28 & 78.50 & 76.43 & 72.58\\
& \textsc{GAIN} & 83.04 & 82.90 & \underline{82.70} & \underline{82.15} & \underline{81.69} & 81.35 & \underline{80.45} & \underline{78.88} & \underline{76.47}\\
& \textsc{GINN} & \underline{83.10} & 82.71 & 82.58 & 81.94 & 81.63 & 80.81 & 79.29 & 76.53 & 58.18\\
& \textsc{GCNmf} & \textbf{86.41} & \textbf{86.35} & \textbf{86.27} & \textbf{86.16} & \textbf{85.83} & \textbf{85.37} & \textbf{84.84} & \textbf{83.00} & \textbf{79.58}\\ \cmidrule{2-11}
& \multirow{3}{*}{\shortstack{Performance gain \\ (\%)}} & 3.98 & 3.95 & 4.32 & 4.88 & 5.07 & 4.88 & 5.46 & 5.22 & 4.07\\
&  & \textbar & \textbar & \textbar & \textbar & \textbar & \textbar & \textbar & \textbar & \textbar\\
&  & 4.20 & 4.46 & 5.36 & 5.63 & 6.59 & 7.22 & 8.55 & 12.48 & 816.82\\
\midrule
\multirow{13}{*}{\shortstack{Structurally\\Missing}} & \textsc{MEAN} & 82.53 & 82.09 & 81.35 & 80.62 & 79.59 & 77.75 & 75.06 & 69.67 & 23.42\\
& \textsc{K-NN} & 82.59 & 82.15 & 81.57 & 81.07 & 80.25 & \underline{78.86} & \underline{76.91} & \underline{72.89} & 42.23\\
& \textsc{MFT} & 82.48 & 81.91 & 81.43 & 80.58 & 79.40 & 77.64 & 75.19 & 69.97 & 27.33\\
& \textsc{SoftImp} & 82.64 & 81.97 & 81.32 & 80.83 & 79.68 & 77.66 & 75.92 & 56.62 & 52.75\\
& \textsc{MICE} & 82.71 & 82.13 & 81.51 & 80.62 & 79.36 & 77.35 & 74.57 & 67.59 & 45.07\\
& \textsc{MissForest} & 82.65 & 82.20 & 81.84 & 81.04 & 79.18 & 78.66 & 75.98 & 71.91 & 12.05\\
& \textsc{VAE} & \underline{82.76} & 82.40 & 81.72 & 80.88 & 79.23 & 77.62 & 73.76 & 66.33 & 41.37\\
& \textsc{GAIN} & \underline{82.76} & \underline{82.53} & \underline{82.11} & \underline{81.68} & \underline{80.76} & 78.65 & 74.38 & 67.38 & \underline{54.24}\\
& \textsc{GINN} & 82.55 & 82.10 & 81.46 & 80.75 & 79.59 & 77.67 & 75.08 & 70.40 & 26.10\\
& \textsc{GCNmf} & \textbf{86.37} & \textbf{86.22} & \textbf{85.80} & \textbf{85.43} & \textbf{85.24} & \textbf{84.73} & \textbf{84.06} & \textbf{80.63} & \textbf{73.42}\\ \cmidrule{2-11}
& \multirow{3}{*}{\shortstack{Performance gain \\ (\%)}} & 4.36 & 4.47 & 4.49 & 4.59 & 5.55 & 7.44 & 9.30 & 10.62 & 35.36\\
&  & \textbar & \textbar & \textbar & \textbar & \textbar & \textbar & \textbar & \textbar & \textbar\\
&  & 4.72 & 5.26 & 5.51 & 6.02 & 7.65 & 9.54 & 13.96 & 42.41 & 509.29\\
\midrule
\multicolumn{2}{c|}{RGCN} & 79.18 & 76.39 & 74.01 & 63.19 & 14.24 & 63.24 & 72.44 & 75.33 & 77.18\\
\midrule
\multicolumn{2}{c|}{GCN}& \multicolumn{9}{c}{82.94}\\
\multicolumn{2}{c|}{GCN w/o node features}& \multicolumn{9}{c}{81.60}\\
\bottomrule
    \end{tabular}
    }
\end{table*}

\begin{figure*}[!t]
    \centering
    \begin{subfigure}{0.31\textwidth}
        \centering
      \includegraphics[width=\textwidth]{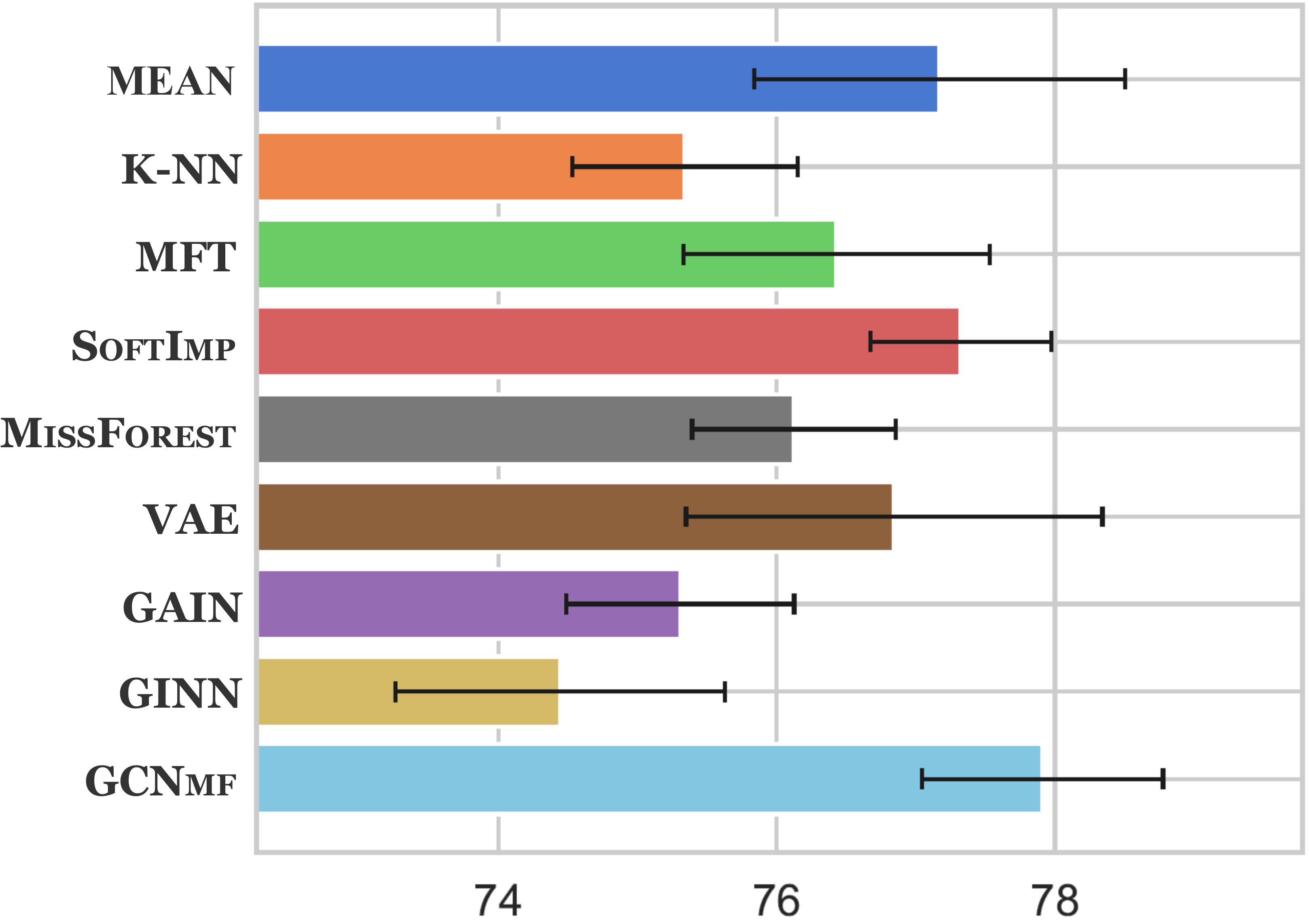}
      \caption{Cora (uniform randomly)}
      \label{subfig:}
    \end{subfigure}%
    \quad
    \begin{subfigure}{0.31\textwidth}
        \centering
      \includegraphics[width=\textwidth]{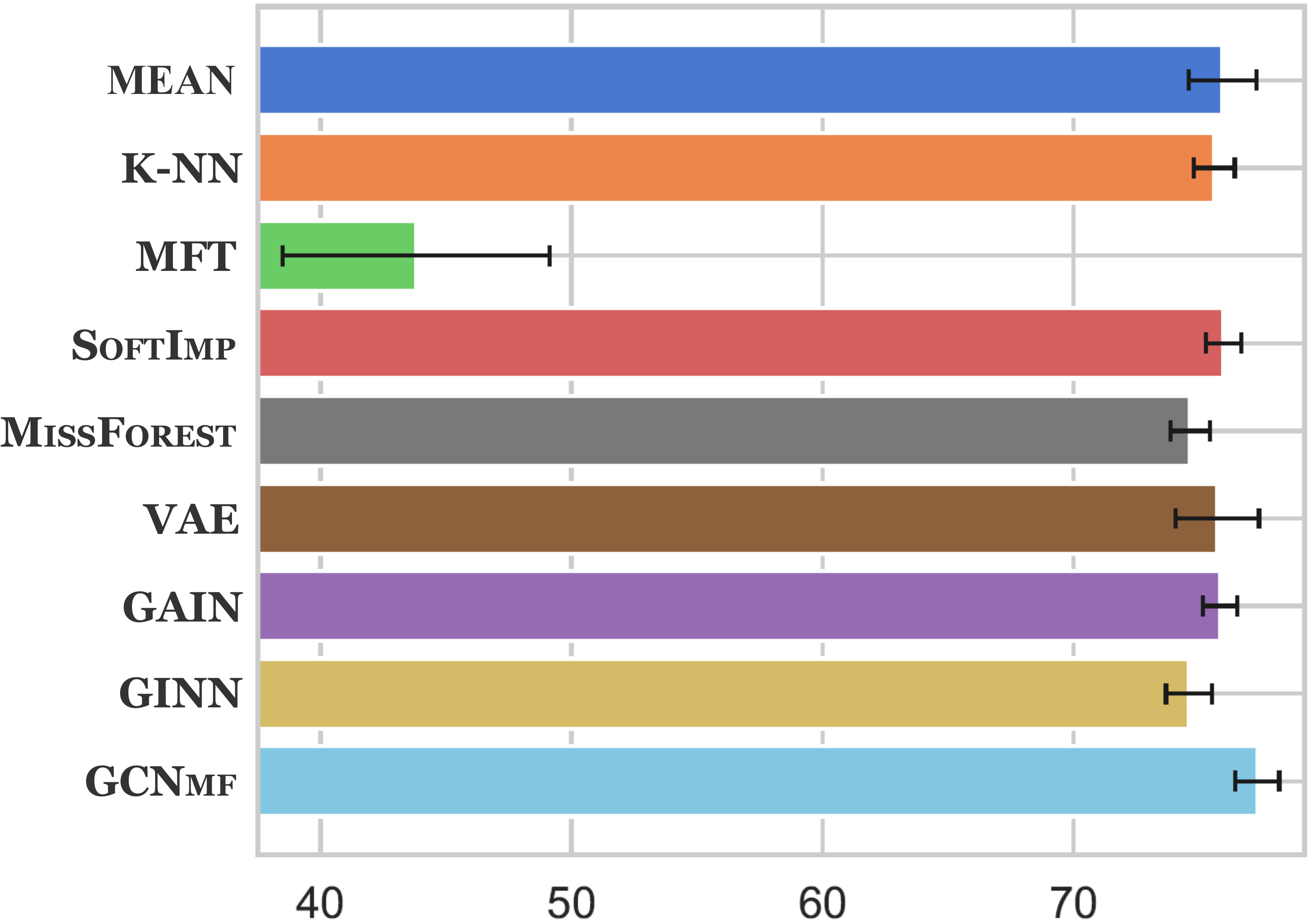}
      \caption{Cora (biased randomly)}
      \label{subfig:}
    \end{subfigure}%
    \quad
    \begin{subfigure}{0.31\textwidth}
        \centering
      \includegraphics[width=\textwidth]{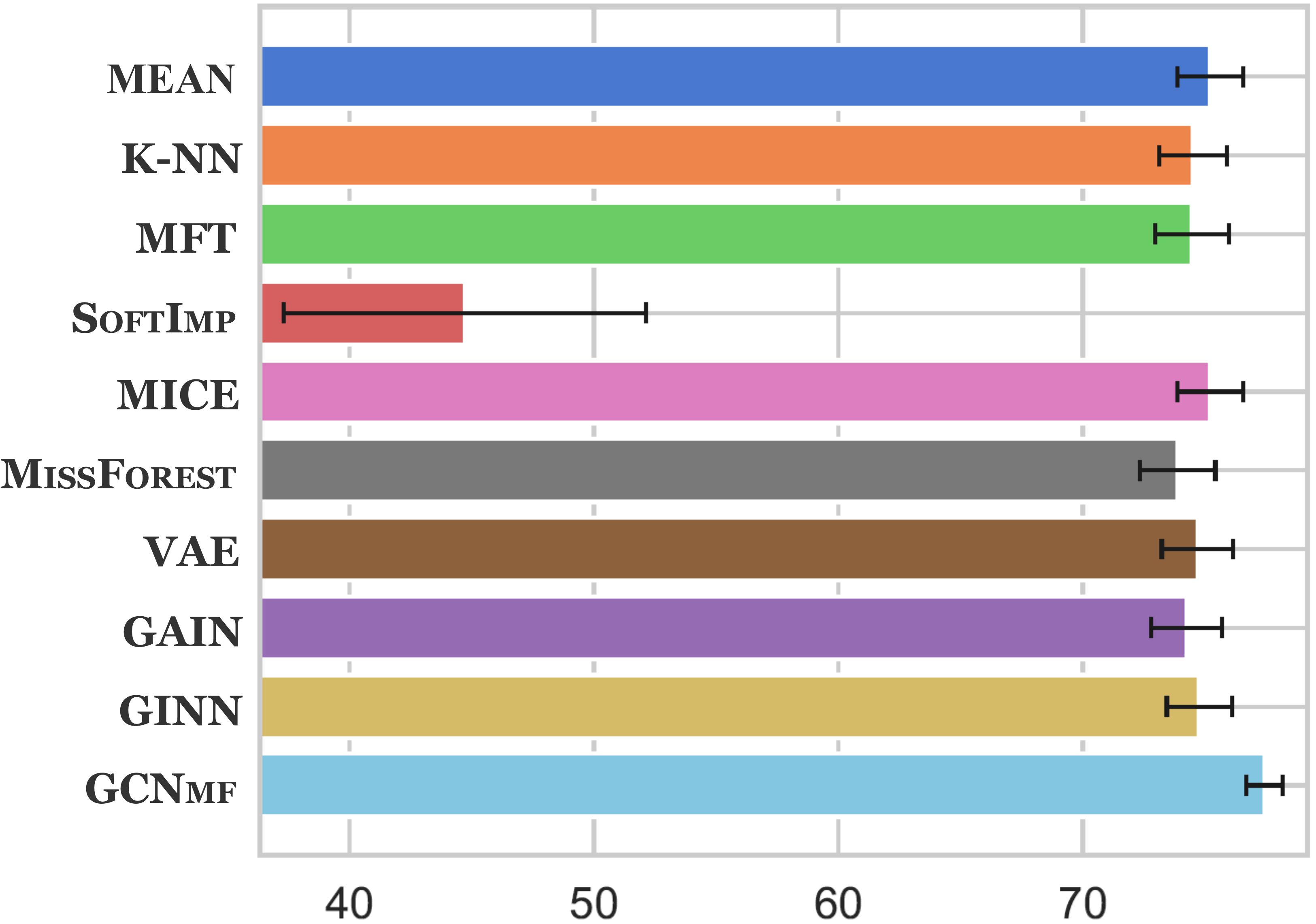}
      \caption{Cora (structurally)}
      \label{subfig:}
    \end{subfigure}%
    \\
    \vspace*{0.60em}
    \begin{subfigure}{0.31\textwidth}
        \centering
      \includegraphics[width=\textwidth]{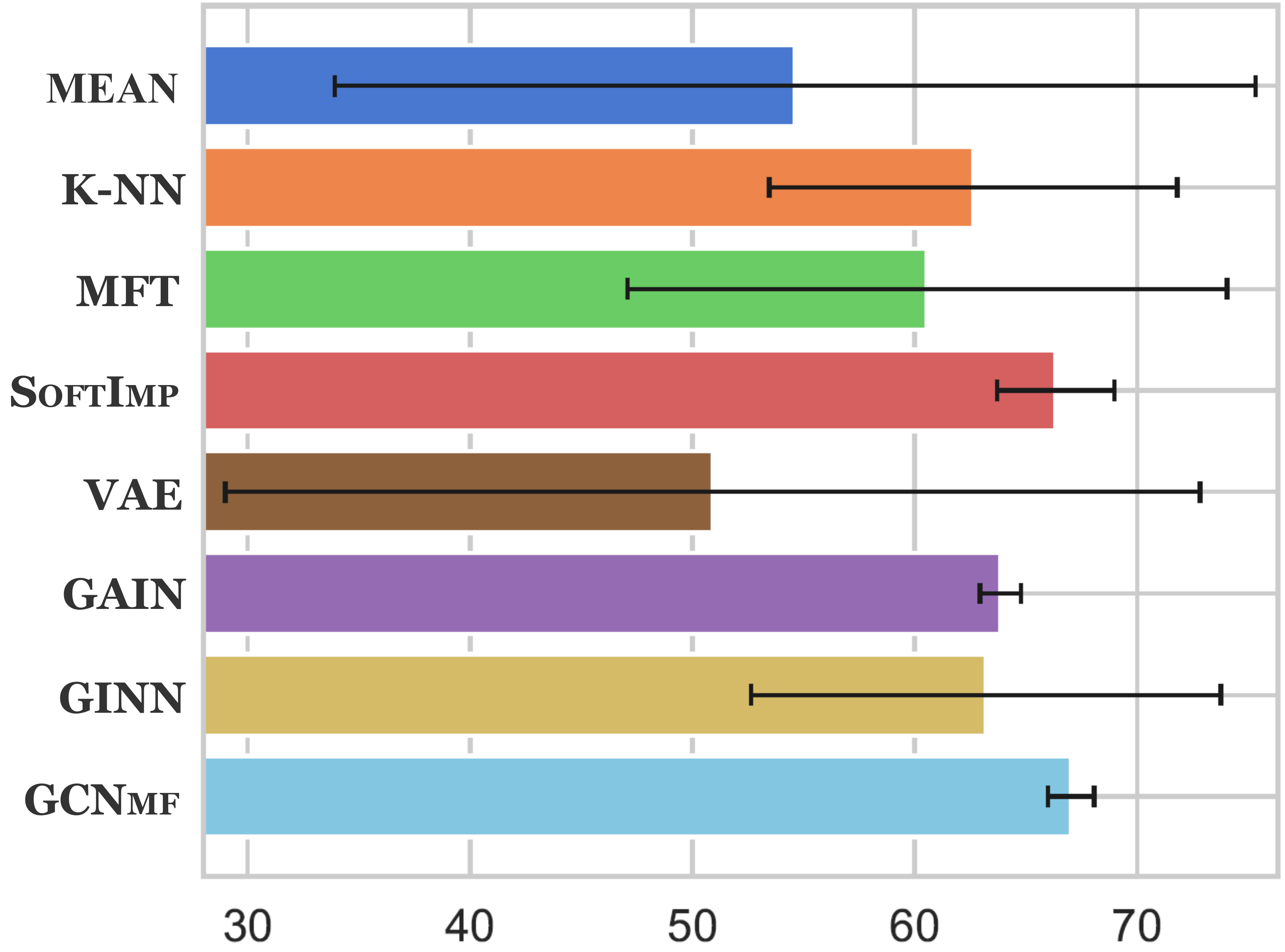}
      \caption{Citeseer (uniform randomly)}
      \label{subfig:}
    \end{subfigure}%
    \quad
    \begin{subfigure}{0.31\textwidth}
        \centering
      \includegraphics[width=\textwidth]{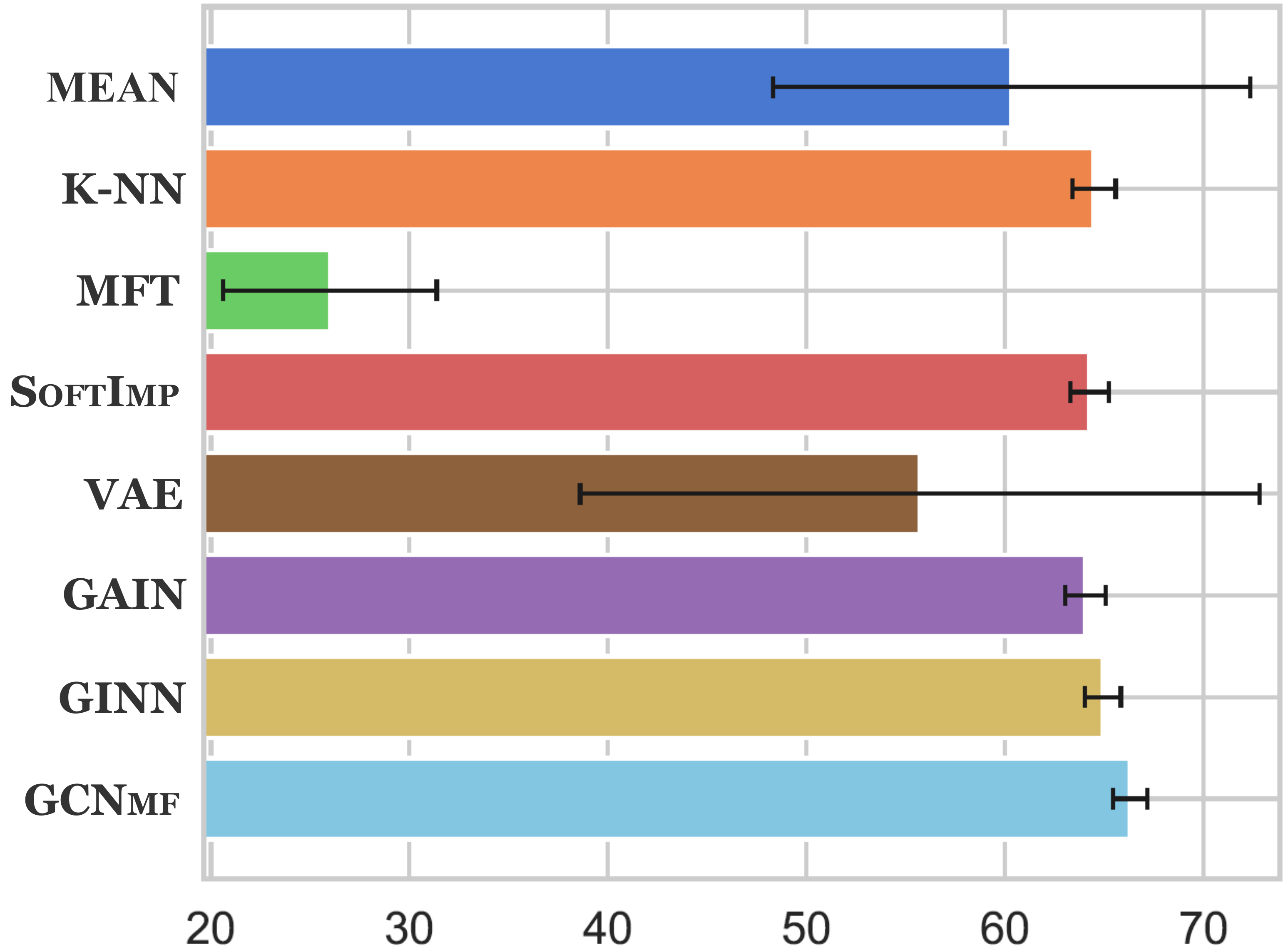}
      \caption{Citeseer (biased randomly)}
      \label{subfig:}
    \end{subfigure}%
    \quad
    \begin{subfigure}{0.31\textwidth}
        \centering
      \includegraphics[width=\textwidth]{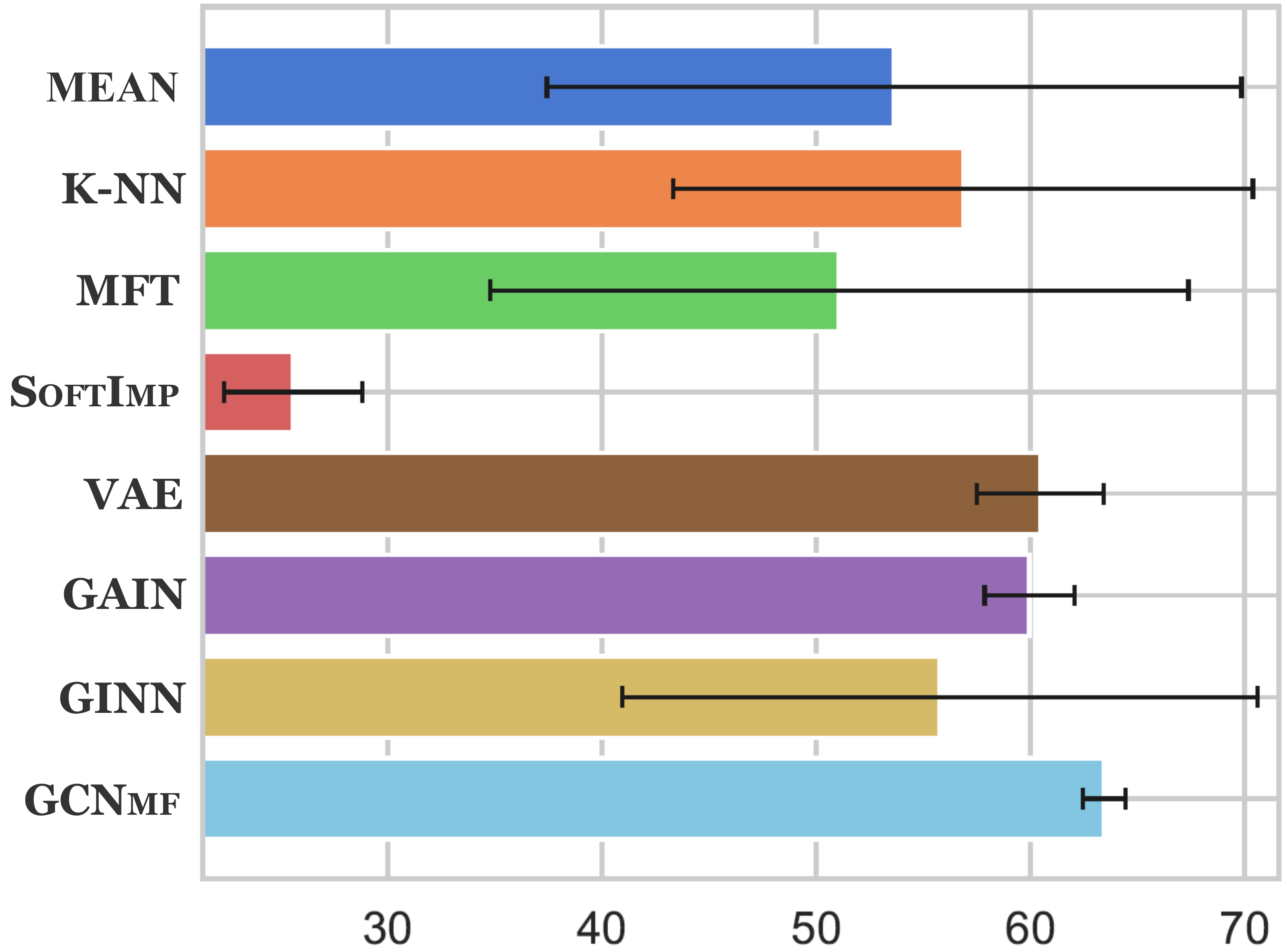}
      \caption{Citeseer (structurally)}
      \label{subfig:}
    \end{subfigure}%
    \\
    \vspace*{0.60em}
    \begin{subfigure}{0.31\textwidth}
        \centering
      \includegraphics[width=\textwidth]{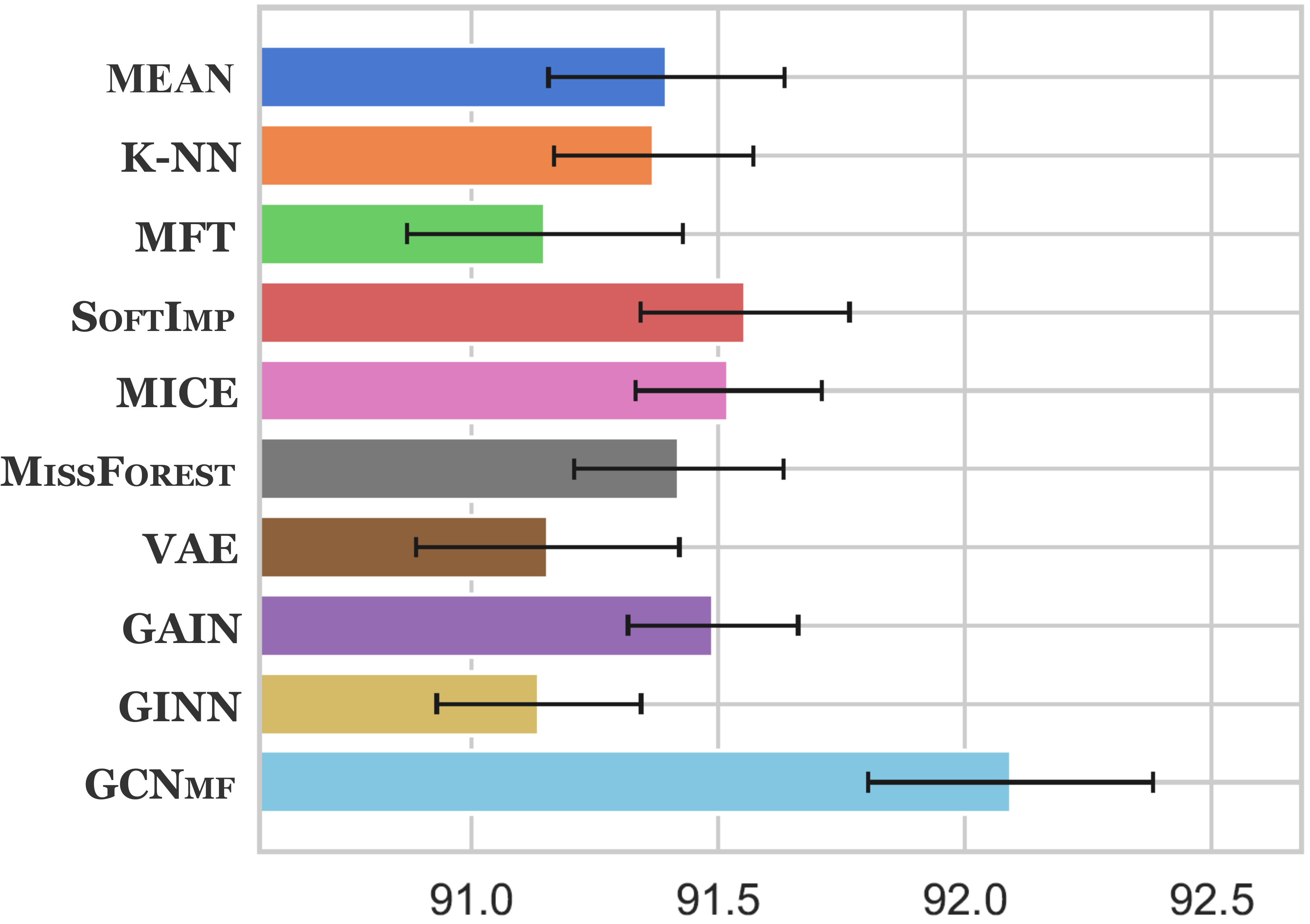}
      \caption{AmaPhoto (uniform randomly)}
      \label{subfig:}
    \end{subfigure}%
    \quad
    \begin{subfigure}{0.31\textwidth}
        \centering
      \includegraphics[width=\textwidth]{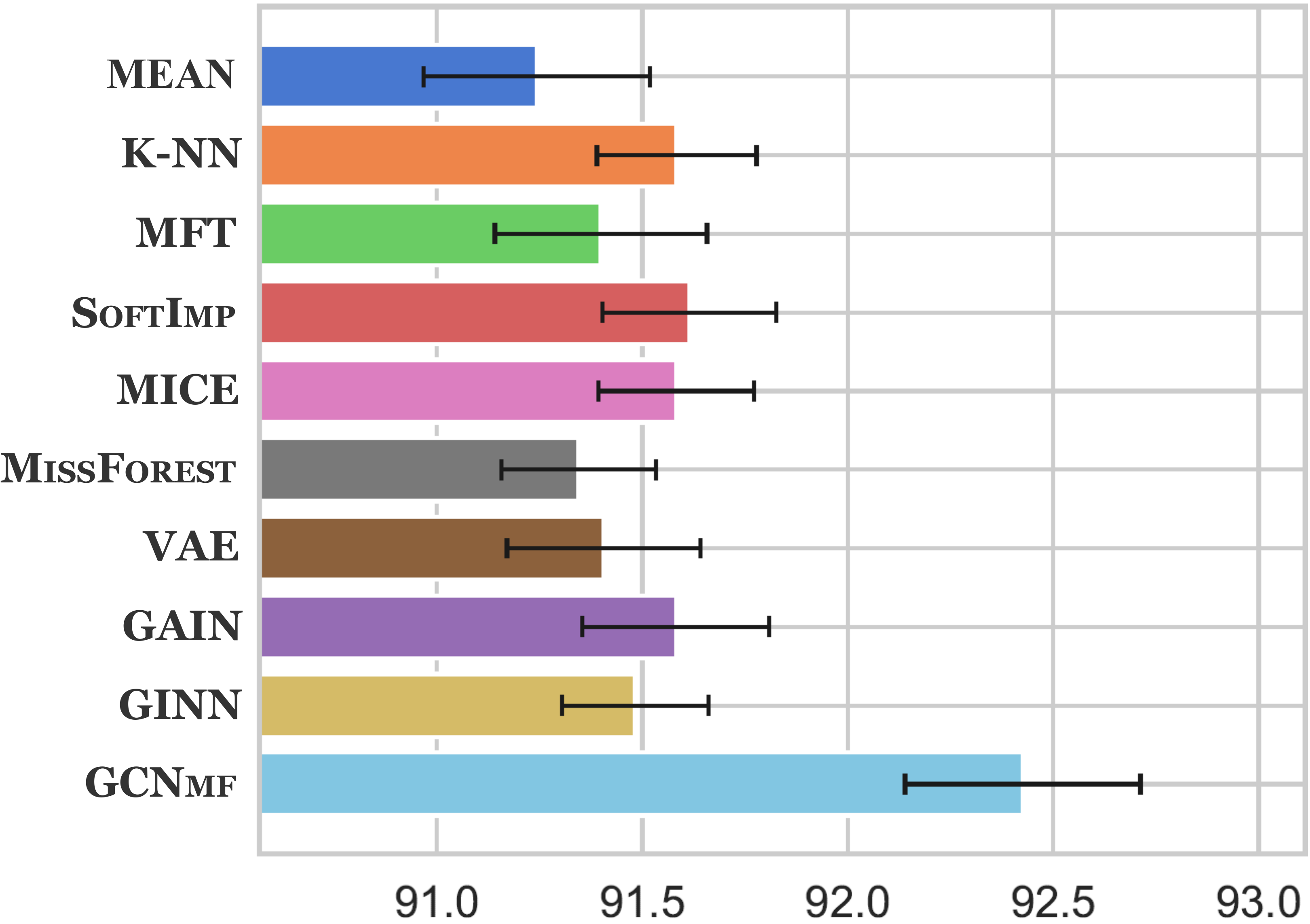}
      \caption{AmaPhoto (biased randomly)}
      \label{subfig:}
    \end{subfigure}%
    \quad
    \begin{subfigure}{0.31\textwidth}
        \centering
      \includegraphics[width=\textwidth]{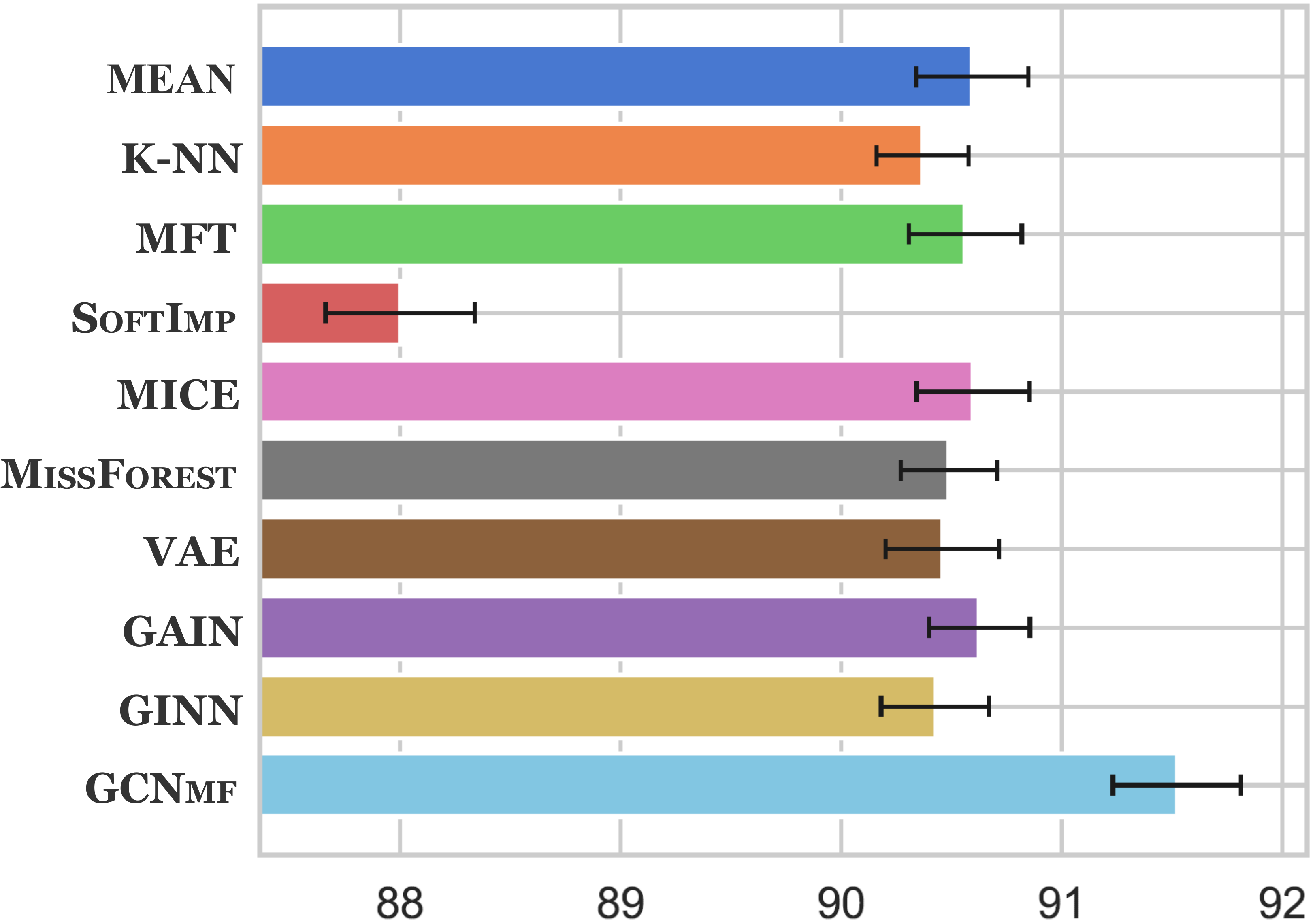}
      \caption{AmaPhoto (structurally)}
      \label{subfig:}
    \end{subfigure}%
    \\
    \vspace*{0.60em}
    \begin{subfigure}{0.31\textwidth}
        \centering
      \includegraphics[width=\textwidth]{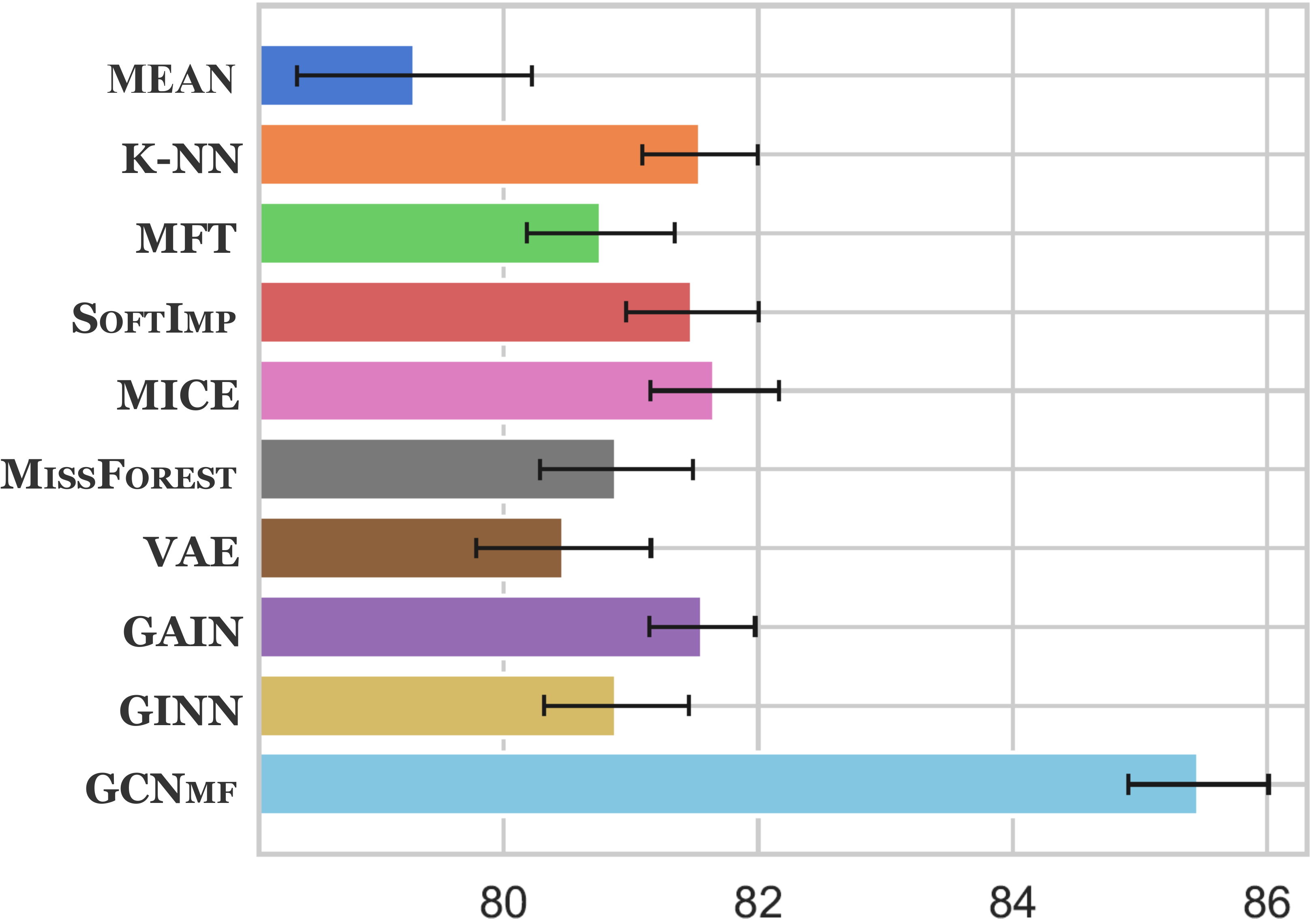}
      \caption{AmaComp (uniform randomly)}
      \label{subfig:}
    \end{subfigure}%
    \quad
    \begin{subfigure}{0.31\textwidth}
        \centering
      \includegraphics[width=\textwidth]{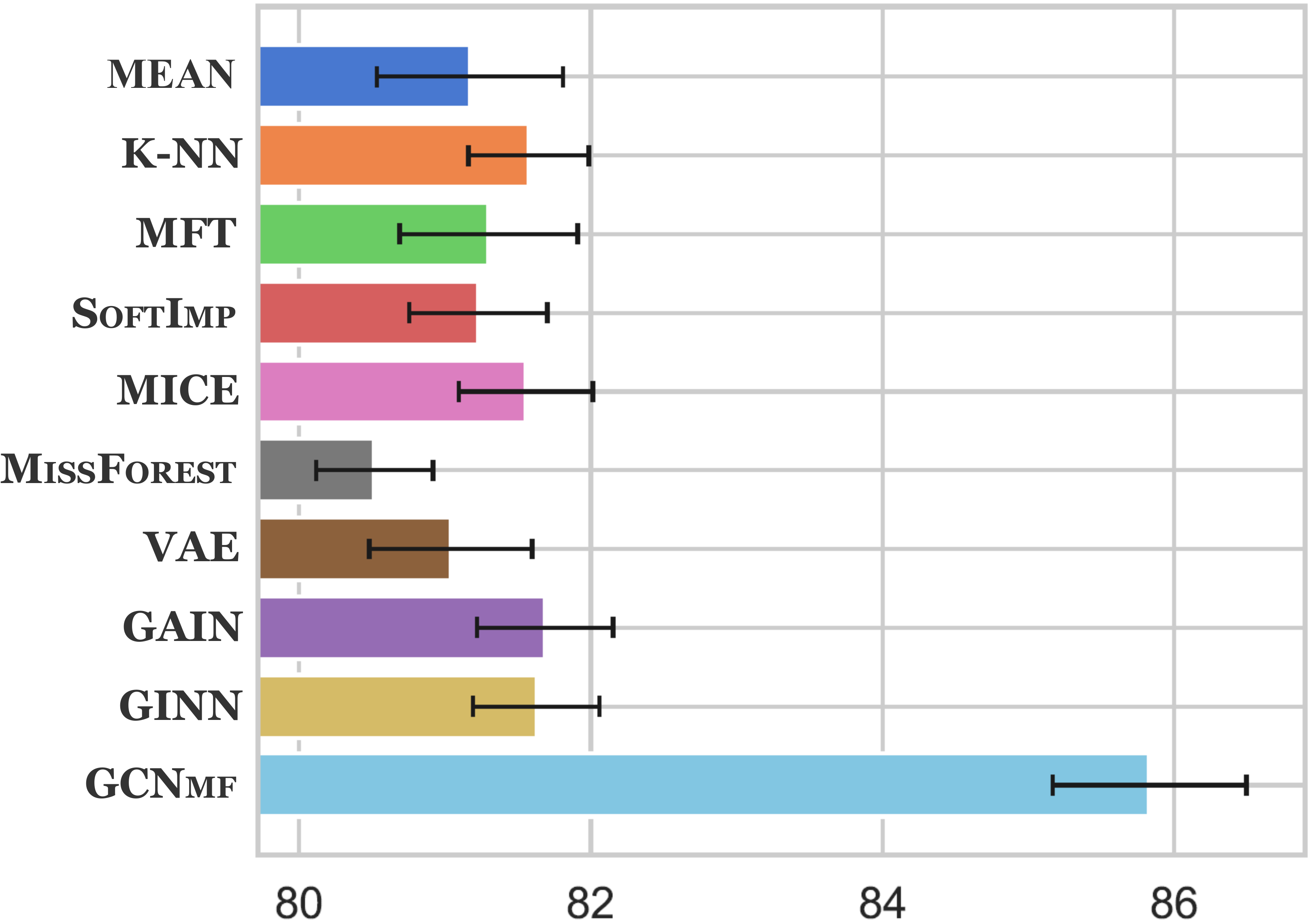}
      \caption{AmaComp (biased randomly)}
      \label{subfig:}
    \end{subfigure}%
    \quad
    \begin{subfigure}{0.31\textwidth}
        \centering
      \includegraphics[width=\textwidth]{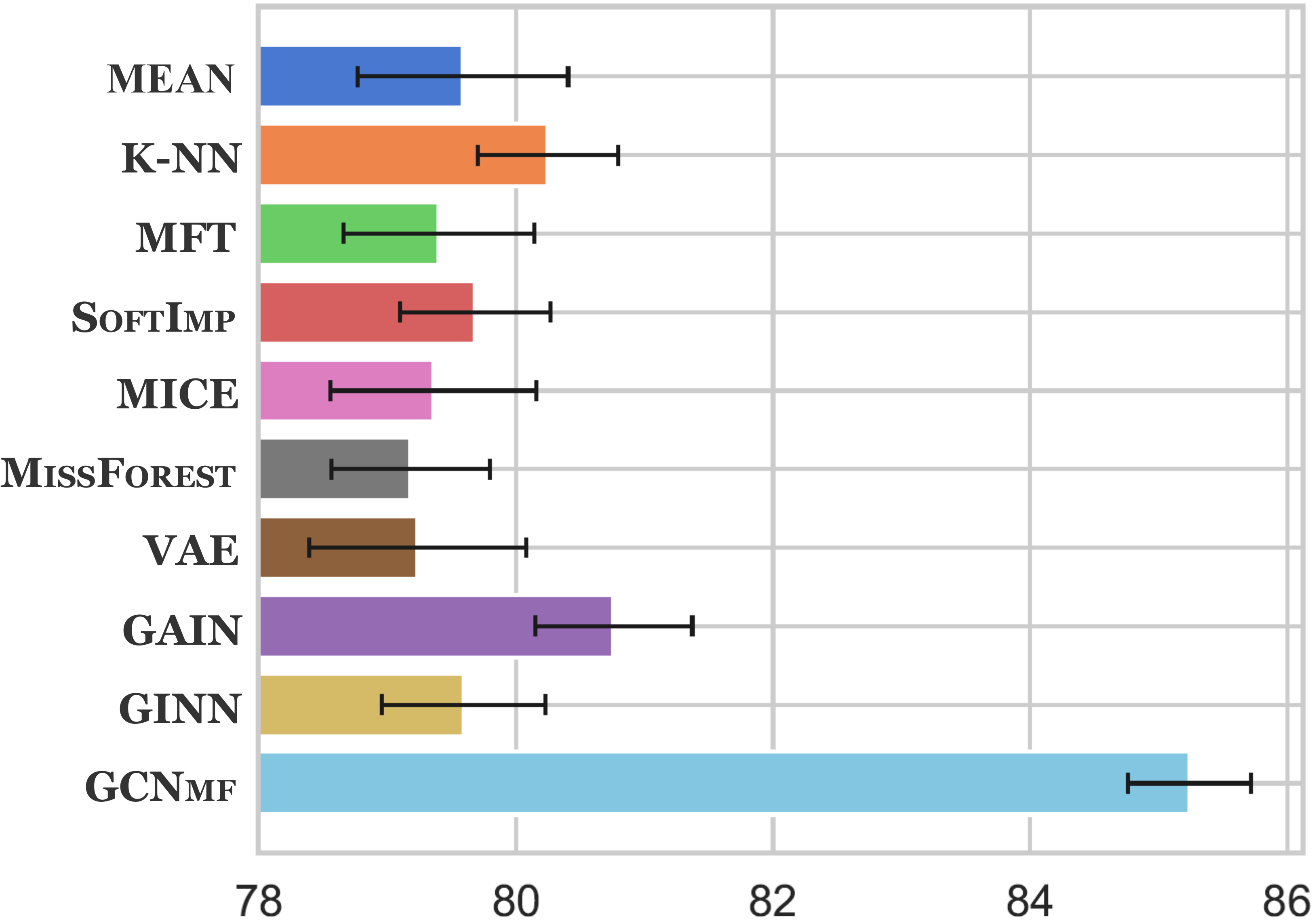}
      \caption{AmaComp (structurally)}
      \label{subfig:}
    \end{subfigure}%
\caption{Performance variance for node classification task ($mr=50\%$).}
\label{fig:random_std}
\end{figure*}

\begin{figure*}[!t]
    \centering
    \includegraphics[width=0.5\textwidth]{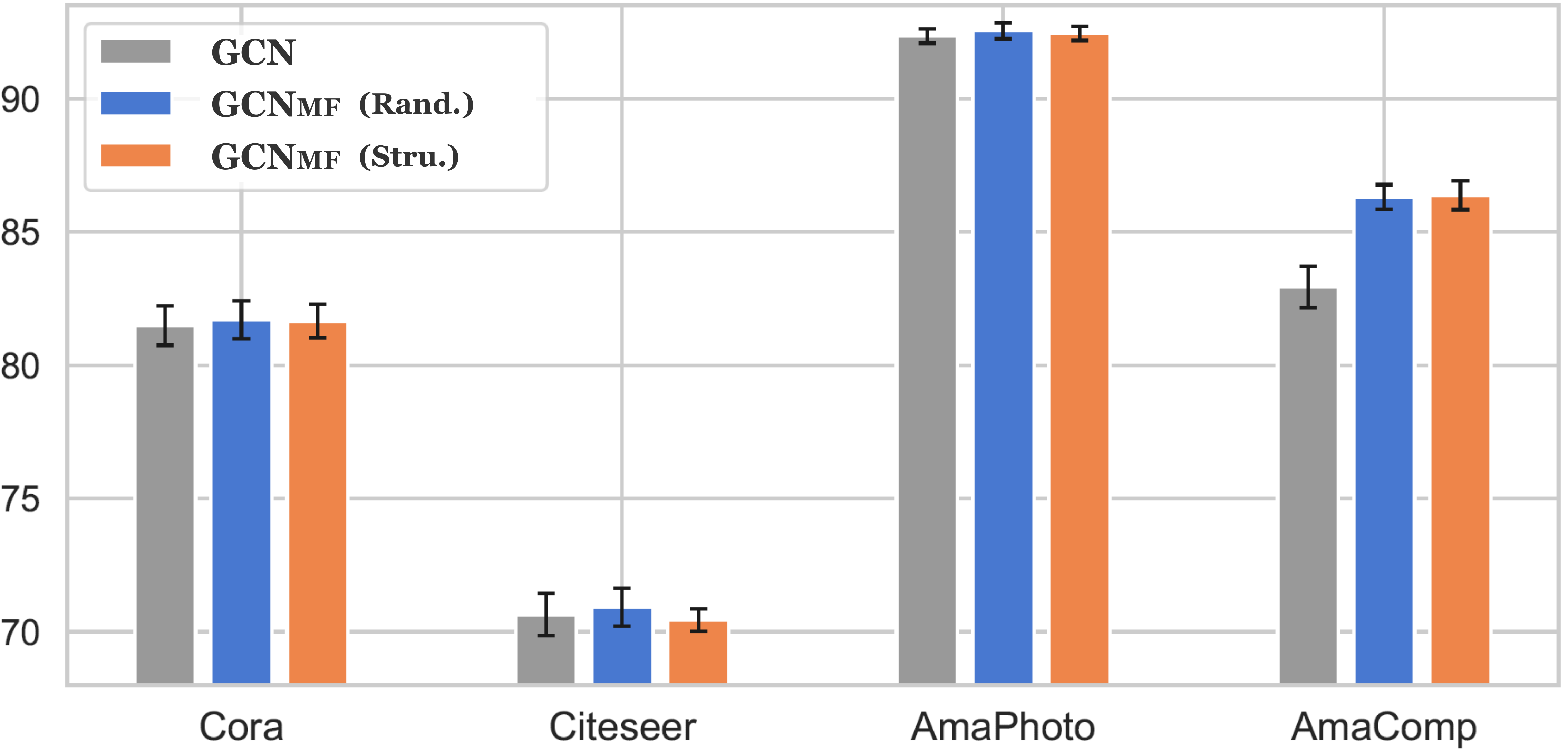}
    \caption{Performance variance of \textsc{GCNmf} ($mr=10\%$) and \textsc{GCN} ($mr=0\%$) for node classification task.}
    \label{fig:gcn_std}
\end{figure*}

\subsection{Node Classification}
We conducted experiments for the node classification task. We followed the data splits of previous work \cite{Yang2016revisiting} on Cora and Citeseer. As for AmaPhoto and AmaComp, we randomly chose 40 nodes per class for training, 500 nodes for validation, and the remaining for testing. We gradually increased the missing rate $mr$ from 10\% to 90\%. With each missing rate, we generated five instances of missing data and evaluated the performance twenty times for each instance. To ensure a fair comparison, we employed the following parameter settings of GCN model for all approaches: we set the number of layers to 2, the number of hidden units to 16 (Cora and Citeseer) and 64 (AmaPhoto and AmaComp). Moreover, we adopted an early stopping strategy with a patience of 100 epochs to avoid over-fitting \cite{velivckovic2017GAT}.

Table~\ref{table_cora} - Table~\ref{table_amacomp} lists the accuracy obtained by different methods. Bold and underline indicate the best and the second best score for each setting. Moreover, we provide the performance results of another three methods as a reference: 1) GCN in the setting of complete features ($\mathcal{S}=\emptyset$); 2) GCN without using node features (using the identity matrix instead of the node feature matrix $\mathbf{X}$); 3) RGCN \cite{zhu2019robust} in the setting that node features are under adversarial attacks (we deliberately perturbated the features that map to the same set of the uniform randomly missing features, and modified the node feature matrix $\mathbf{X}^*$; then we feed $\mathbf{X}^*$ to RGCN).

Note that some results of MICE (in Cora and Citeseer) and \textsc{MissForest} (in Citeseer and AmaComp) are not available because we encountered unexpected runtime errors or the program takes more than 24 hours to terminate. We have the following observations.

First, \textsc{GCNmf} demonstrates the best performance and there is no method that clearly wins the second place. \textsc{GCNmf} achieves the highest accuracy for almost all of the missing rates and across all datasets, with only four exceptions. For the uniform randomly missing case, \model{} is markedly superior to the others. It achieves improvement of up to $8.69\%$, $11.82\%$, $2.30\%$, and $5.24\%$ when compared with the best accuracy scores among baselines in the four datasets, respectively.
For the biased randomly missing case, the improvement is up to $10.64\%$, $9.39\%$, $2.57\%$, and $5.46\%$, respectively.
For the structurally missing case, this advantage becomes even greater, with the corresponding maximum improvement raising to $99.43\%$, $102.96\%$, $6.97\%$, and $35.36\%$, respectively.
Most strikingly, when the missing rate reaches 80\%, i.e., the features of $80\%$ nodes are not known, \textsc{GCNmf} can still achieve an accuracy of 68.00\% in Cora, while all baselines fail. 
 
Secondly, \textsc{GCNmf} is more appealing when a large portion of features are missing. This can be explained by the fact that the performance gain, on the whole, becomes larger and larger as the missing rate increases. In contrast, the imputation based method becomes less reliable at high missing rates. For example, the accuracy of baselines (except for \textsc{SoftImp}) falls to below 20.0\% when the missing rate reaches 90\% for the structurally missing case in Cora. 
 
Thirdly, it is interesting to note that \textsc{GCNmf} even outperforms \textsc{GCN} when only a small number of features are missing. For example, \textsc{GCNmf} holds a slim advantage over \textsc{GCN} when the missing rate is 10\% in the four datasets. This indicates that \textsc{GCNmf} is robust against low-level missing features. Moreover, \textsc{GCNmf} achieves much higher accuracy than RGCN. This is easy to understand because the task is more challenging for RGCN than \textsc{GCNmf}. 

Figure~\ref{fig:random_std} - Figure~\ref{fig:gcn_std} show the variability of the performance for different methods. We can see that \textsc{GCNmf} is more robust than the baselines, especially in Cora and Citeseer, where there is a high level of variability. Moreover, \textsc{GCNmf} and \textsc{GCN} are on the same level of variability. This implies that representing incomplete features by GMM and calculating the expected activation of neurons do not undermine the robustness of GCN.

\begin{table*}[!t]
    \centering
    \caption{The AUC results for link prediction task in Cora.}
    \label{table:linkcora}
    \scalebox{.84}{
    \begin{tabular}{c|c|ccccccccc} \toprule
Missing type & Missing rate & 10\% & 20\% & 30\% & 40\% & 50\% & 60\% & 70\% & 80\% & 90\%\\ \midrule
\multirow{13}{*}{\shortstack{Uniform\\Randomly\\Missing}} & \textsc{Mean} & 90.72 & 90.41 & 90.10 & 89.79 & 89.11 & 88.40 & 87.13 & 84.47 & 74.97\\
& \textsc{K-NN} & 92.20 & 91.86 & 91.34 & 90.93 & 90.19 & 89.03 & 87.62 & 85.69 & 81.55\\
& \textsc{MFT} & 92.16 & 91.86 & 91.37 & 90.91 & 90.14 & 88.37 & 86.11 & 84.10 & 79.94\\
& \textsc{SoftImp} & 90.88 & 90.79 & 90.64 & 90.40 & 89.98 & 89.22 & 88.37 & 86.75 & \underline{84.13}\\
& \textsc{MICE} & -- & -- & -- & -- & -- & -- & -- & -- & --\\
& \textsc{MissForest} & \underline{92.32} & \underline{92.04} & 91.61 & 90.95 & 90.33 & 89.34 & 88.33 & 86.41 & 82.78\\
& \textsc{VAE} & 92.23 & 91.91 & 91.33 & 90.54 & 89.28 & 86.98 & 82.52 & 77.74 & 77.27\\
& \textsc{GAIN} & 92.17 & 91.87 & 91.46 & \underline{91.00} & \underline{90.57} & \underline{89.78} & \underline{89.17} & \underline{88.13} & \textbf{86.01}\\
& \textsc{GINN} & 92.15 & 91.96 & \underline{91.62} & \underline{91.00} & 90.28 & 88.94 & 87.66 & 84.73 & 74.90\\
& \textsc{GCNmf} & \textbf{94.09} & \textbf{93.50} & \textbf{93.05} & \textbf{92.40} & \textbf{92.29} & \textbf{91.79} & \textbf{90.77} & \textbf{88.32} & 81.46\\ \cmidrule{2-11}
& \multirow{3}{*}{\shortstack{Performance gain \\ (\%)}} & 1.92 & 1.59 & 1.56 & 1.54 & 1.90 & 2.24 & 1.79 & 0.22 & -5.29\\
&  & \textbar & \textbar & \textbar & \textbar & \textbar & \textbar & \textbar & \textbar & \textbar\\
&  & 3.71 & 3.42 & 3.27 & 2.91 & 3.57 & 5.53 & 10.00 & 13.61 & 8.76\\
\midrule
\multirow{13}{*}{\shortstack{Biased\\Randomly\\Missing}} & \textsc{Mean} & 92.18 & 92.08 & 92.14 & 91.89 & 91.43 & 91.01 & 89.55 & 87.19 & 76.96\\
& \textsc{K-NN} & 92.17 & 92.06 & 92.02 & 91.83 & 91.47 & 90.92 & 89.84 & 87.85 & 81.65\\
& \textsc{MFT} & 92.17 & 91.44 & 90.65 & 90.00 & 89.50 & 88.91 & 87.48 & 85.36 & 80.20\\
& \textsc{SoftImp} & \underline{92.35} & \underline{92.34} & \underline{92.35} & \underline{92.08} & \underline{91.74} & \underline{91.36} & \textbf{90.03} & \underline{88.44} & \underline{86.17}\\
& \textsc{MICE} & -- & -- & -- & -- & -- & -- & -- & -- & --\\
& \textsc{MissForest} & 92.27 & 92.22 & 92.22 & 91.80 & 91.22 & 90.34 & 88.90 & 86.86 & 83.58\\
& \textsc{VAE} & 92.19 & 92.05 & 91.83 & 91.37 & 90.75 & 89.71 & 87.37 & 84.95 & 76.71\\
& \textsc{GAIN} & 92.18 & 92.01 & 91.88 & 91.70 & 91.28 & 90.75 & \underline{89.87} & \textbf{88.75} & \textbf{86.69}\\
& \textsc{GINN} & 92.15 & 92.11 & 92.04 & 91.88 & 91.52 & 90.89 & 89.45 & 87.36 & 75.38\\
& \textsc{GCNmf} & \textbf{94.35} & \textbf{94.20} & \textbf{93.90} & \textbf{93.15} & \textbf{92.43} & \textbf{91.46} & \textbf{90.03} & 86.10 & 81.72\\ \cmidrule{2-11}
& \multirow{3}{*}{\shortstack{Performance gain \\ (\%)}} & 2.17 & 2.01 & 1.68 & 1.16 & 0.75 & 0.11 & 0.18 & -2.99 & -5.73\\
&  & \textbar & \textbar & \textbar & \textbar & \textbar & \textbar & \textbar & \textbar & \textbar\\
&  & 2.39 & 3.02 & 3.59 & 3.50 & 3.27 & 2.87 & 3.04 & 1.35 & 8.41\\
\midrule
\multirow{13}{*}{\shortstack{Structurally\\Missing}} & \textsc{Mean} & 90.34 & 89.79 & 89.12 & 88.26 & 87.12 & 85.33 & 83.23 & 79.61 & 71.79\\
& \textsc{K-NN} & \underline{91.60} & 91.08 & \underline{90.38} & \underline{89.36} & \underline{88.34} & \textbf{87.16} & \textbf{85.40} & \textbf{82.09} & 76.12\\
& \textsc{MFT} & 91.51 & 91.00 & 89.95 & 89.11 & 87.36 & 85.81 & 82.90 & 77.73 & 73.72\\
& \textsc{SoftImp} & 90.29 & 89.67 & 88.86 & 87.86 & 86.77 & 85.36 & 83.07 & \underline{81.53} & \textbf{77.38}\\
& \textsc{MICE} & 91.58 & \underline{91.11} & 90.30 & 89.34 & 88.18 & \underline{86.70} & \underline{84.24} & 80.31 & 72.63\\
& \textsc{MissForest} & 91.57 & 91.05 & 90.23 & \underline{89.36} & \underline{88.34} & \textbf{87.16} & \textbf{85.40} & \textbf{82.09} & \underline{76.22}\\
& \textsc{VAE} & 91.49 & 90.76 & 89.49 & 87.27 & 83.81 & 80.07 & 73.46 & 67.55 & 65.80\\
& \textsc{GAIN} & \underline{91.60} & 91.08 & \underline{90.38} & \underline{89.36} & \underline{88.34} & \textbf{87.16} & \textbf{85.40} & \textbf{82.09} & 76.12\\
& \textsc{GINN} & 91.51 & 90.85 & 89.68 & 87.34 & 83.23 & 76.22 & 66.55 & 63.88 & 64.91\\
& \textsc{GCNmf} & \textbf{93.55} & \textbf{92.65} & \textbf{91.68} & \textbf{90.55} & \textbf{88.54} & 86.19 & 81.96 & 76.35 & 67.86\\ \cmidrule{2-11}
& \multirow{3}{*}{\shortstack{Performance gain \\ (\%)}} & 2.13 & 1.69 & 1.44 & 1.33 & 0.23 & -1.11 & -4.03 & -6.99 & -12.30\\
&  & \textbar & \textbar & \textbar & \textbar & \textbar & \textbar & \textbar & \textbar & \textbar\\
&  & 3.61 & 3.32 & 3.17 & 3.76 & 6.38 & 13.08 & 23.16 & 19.52 & 4.54\\
\midrule
\multicolumn{2}{c|}{GCN}& \multicolumn{9}{c}{92.42}\\
\multicolumn{2}{c|}{GCN w/o node features}& \multicolumn{9}{c}{85.90}\\ 
\bottomrule
    \end{tabular}
    }
\end{table*}

\begin{table*}[!t]
    \centering
    \caption{The AUC results for link prediction task in Citeseer.}
    \label{table:linkciteseer}
    \scalebox{.84}{
    \begin{tabular}{c|c|ccccccccc} \toprule
Missing type & Missing rate & 10\% & 20\% & 30\% & 40\% & 50\% & 60\% & 70\% & 80\% & 90\%\\ \midrule
\multirow{13}{*}{\shortstack{Uniform\\Randomly\\Missing}} & \textsc{Mean} & 89.01 & 88.56 & 88.01 & 87.33 & 86.42 & 85.30 & 83.77 & 81.43 & 75.47\\
& \textsc{K-NN} & 90.00 & 89.60 & 89.10 & 88.34 & 87.32 & 85.68 & 83.39 & 81.16 & 78.60\\
& \textsc{MFT} & 89.86 & 89.43 & 88.81 & 87.72 & 85.76 & 83.24 & 81.20 & 79.97 & 77.94\\
& \textsc{SoftImp} & \underline{90.19} & \underline{90.15} & \underline{89.81} & \underline{89.55} & \underline{88.97} & \underline{88.17} & \underline{86.80} & \underline{84.99} & 81.66\\
& \textsc{MICE} & -- & -- & -- & -- & -- & -- & -- & -- & --\\
& \textsc{MissForest} & -- & -- & -- & -- & -- & -- & -- & -- & --\\
& \textsc{VAE} & 89.85 & 89.09 & 88.13 & 87.22 & 85.36 & 83.55 & 80.64 & 74.89 & 64.69\\
& \textsc{GAIN} & 89.96 & 89.53 & 89.07 & 88.36 & 87.51 & 86.52 & 85.35 & 83.93 & \underline{81.70}\\
& \textsc{GINN} & 90.02 & 89.64 & 89.04 & 87.91 & 86.56 & 84.64 & 83.32 & 81.82 & 77.19\\
& \textsc{GCNmf} & \textbf{93.20} & \textbf{92.96} & \textbf{92.30} & \textbf{92.19} & \textbf{90.45} & \textbf{90.08} & \textbf{88.91} & \textbf{87.28} & \textbf{83.68}\\ \cmidrule{2-11}
& \multirow{3}{*}{\shortstack{Performance gain \\ (\%)}} & 3.34 & 3.12 & 2.77 & 2.95 & 1.66 & 2.17 & 2.43 & 2.69 & 2.42\\
&  & \textbar & \textbar & \textbar & \textbar & \textbar & \textbar & \textbar & \textbar & \textbar\\
&  & 4.71 & 4.97 & 4.87 & 5.70 & 5.96 & 8.22 & 10.26 & 16.54 & 29.36\\
\midrule
\multirow{13}{*}{\shortstack{Biased\\Randomly\\Missing}} & \textsc{Mean} & 89.94 & 89.88 & 89.63 & 89.33 & 89.25 & 88.55 & \underline{87.57} & 85.28 & 78.23\\
& \textsc{K-NN} & 90.00 & 89.98 & 89.81 & 89.54 & 89.31 & 88.52 & 87.47 & 84.97 & 78.85\\
& \textsc{MFT} & 89.98 & 87.50 & 85.88 & 85.07 & 84.32 & 83.76 & 82.85 & 81.54 & 78.23\\
& \textsc{SoftImp} & \underline{90.31} & \underline{90.25} & \underline{90.23} & \underline{89.99} & \underline{89.90} & \underline{89.03} & 87.12 & \underline{85.96} & 80.63\\
& \textsc{MICE} & -- & -- & -- & -- & -- & -- & -- & -- & --\\
& \textsc{MissForest} & -- & -- & -- & -- & -- & -- & -- & -- & --\\
& \textsc{VAE} & 89.90 & 89.25 & 88.33 & 87.32 & 86.26 & 83.78 & 83.05 & 80.71 & 62.51\\
& \textsc{GAIN} & 89.97 & 89.87 & 89.60 & 89.32 & 88.89 & 87.85 & 87.00 & 85.05 & \textbf{81.95}\\
& \textsc{GINN} & 90.27 & 89.99 & 89.85 & 89.47 & 89.10 & 88.15 & 87.21 & 84.47 & 76.83\\
& \textsc{GCNmf} & \textbf{93.53} & \textbf{93.38} & \textbf{92.81} & \textbf{92.48} & \textbf{91.68} & \textbf{91.25} & \textbf{89.54} & \textbf{86.73} & \underline{81.43}\\ \cmidrule{2-11}
& \multirow{3}{*}{\shortstack{Performance gain \\ (\%)}} & 3.57 & 3.47 & 2.86 & 2.77 & 1.98 & 2.49 & 2.25 & 0.90 & -0.63\\
&  & \textbar & \textbar & \textbar & \textbar & \textbar & \textbar & \textbar & \textbar & \textbar\\
&  & 4.04 & 6.72 & 8.07 & 8.71 & 8.73 & 8.94 & 8.07 & 7.46 & 30.27\\
\midrule
\multirow{13}{*}{\shortstack{Structurally\\Missing}} & \textsc{Mean} & 88.16 & 86.95 & 85.76 & 84.20 & 82.43 & 80.83 & 78.92 & 75.79 & 69.76\\
& \textsc{K-NN} & \underline{89.50} & \underline{88.36} & \underline{87.01} & \underline{85.52} & \underline{83.85} & \underline{82.11} & \underline{79.81} & \underline{76.49} & 70.86\\
& \textsc{MFT} & 89.24 & 87.96 & 86.53 & 84.76 & 83.19 & 80.67 & 78.35 & 75.97 & \underline{72.64}\\
& \textsc{SoftImp} & \underline{89.50} & \underline{88.36} & \underline{87.01} & \underline{85.52} & \underline{83.85} & \underline{82.11} & \underline{79.81} & \underline{76.49} & 70.86\\
& \textsc{MICE} & -- & -- & -- & -- & -- & -- & -- & -- & --\\
& \textsc{MissForest} & -- & -- & -- & -- & -- & -- & -- & -- & --\\
& \textsc{VAE} & 88.57 & 86.83 & 84.32 & 80.96 & 77.49 & 74.01 & 67.84 & 63.06 & 60.39\\
& \textsc{GAIN} & \underline{89.50} & \underline{88.36} & \underline{87.01} & \underline{85.52} & \underline{83.85} & \underline{82.11} & \underline{79.81} & \underline{76.49} & 70.86\\
& \textsc{GINN} & 87.48 & 83.35 & 77.50 & 70.06 & 64.31 & 59.45 & 57.95 & 54.88 & 50.81\\
& \textsc{GCNmf} & \textbf{92.23} & \textbf{90.54} & \textbf{88.77} & \textbf{85.74} & \textbf{84.78} & \textbf{84.59} & \textbf{82.00} & \textbf{77.21} & \textbf{73.31}\\ \cmidrule{2-11}
& \multirow{3}{*}{\shortstack{Performance gain \\ (\%)}} & 3.05 & 2.47 & 2.02 & 0.26 & 1.11 & 3.02 & 2.74 & 0.94 & 0.92\\
&  & \textbar & \textbar & \textbar & \textbar & \textbar & \textbar & \textbar & \textbar & \textbar\\
&  & 5.43 & 8.63 & 14.54 & 22.38 & 31.83 & 42.29 & 41.50 & 40.69 & 44.28\\
\midrule
\multicolumn{2}{c|}{GCN}& \multicolumn{9}{c}{90.25}\\
\multicolumn{2}{c|}{GCN w/o node features}& \multicolumn{9}{c}{79.94}\\ 
\bottomrule
    \end{tabular}
    }
\end{table*}

\subsection{Link Prediction}

The second experiment is for the link prediction task in the Cora and Citeseer citation graphs. We took VGAE \cite{kipf2016variational} as the base model, which is a variational graph autoencoder and employs GCN as an encoder. We gradually increased the missing rate $mr$ from 10\% to 90\% and compare \textsc{GCNmf} against baselines within the base model framework. Following the previous work \cite{kipf2016variational}, we randomly chose 10\% edges for testing, 5\% edges for validation, and the remaining edges for training; we used a 32-dim hidden layer and 16-dim latent variables in the base model. 

Tables~\ref{table:linkcora} and~\ref{table:linkciteseer} show the average AUC scores obtained by different methods. Bold and underline indicate the best and the second best score for each setting. We also provide the performance results of 1) GCN in the setting of complete features ($\mathcal{S}=\emptyset$), and 2) GCN without node features (using the identity matrix instead of the node feature matrix $\mathbf{X}$) as a reference.

We can reach a similar conclusion as the node classification task. \textsc{GCNmf} exhibits the best overall performance. In particular, \textsc{GCNmf} demonstrates excellent performance and is overwhelmingly superior to all baselines in Citeseer; \textsc{GCNmf} outperforms the baselines in most cases in Cora, with only several exceptions when the missing rate reaches high. Again, we can observe the robustness merit of \textsc{GCNmf}, as it even outperforms GCN when the missing rate is low.

We attribute the superiority of \textsc{GCNmf} to the joint learning of GMM and network parameters. Actually, our approach can be understood as calculating the expected activation of neurons over the imputations drawn from missing data density in the first layer. It is the end-to-end joint learning of the parameters that make our approach less likely to converge to sub-optimal solutions.

\begin{table*}[!t]
    \centering
    \caption{The running time (seconds) of different approaches for the uniform randomly missing case ($mr=50\%$). The figure in the parentheses indicates the time for initialization of GMM parameters.}
    \label{tab:runtime}
    \begin{tabular}{l|rrrr}
        \toprule
        & Cora & Citeseer & AmaPhoto & AmaComp\\
        \midrule
        \textsc{MEAN} & 1.10 & 1.24 & 12.09 & 14.14\\
        \textsc{K-NN} & 125.04 & 480.19 & 482.73 & 1505.14\\
        \textsc{MFT} & 141.14 & 567.50 & 428.95 & 906.52\\
        \textsc{SoftImp} & 115.15 & 850.55 & 59.26 & 95.14\\
        \textsc{MICE} & -- & -- & 3879.59 & 6705.73 \\
        \textsc{MissForest} & 4039.10 & -- & 32528.25 & 48264.58\\
        \textsc{VAE} & 7.91 & 8.64 & 14.23 & 18.78\\
        \textsc{GAIN} & 79.35 & 426.10 & 36.06 & 35.19\\
        \textsc{GINN} & 300.64 & 839.96 & 998.03 & 3199.96\\
        \textsc{GCNmf} & 7.43 (0.59) & 13.52 (2.60) & 22.64 (4.11) & 42.38 (9.75)\\
        \midrule
        \textsc{GCN} & 0.86 & 0.91 & 6.82 & 7.79\\
        \bottomrule
    \end{tabular}
\end{table*}

\subsection{Running Time Comparison} \label{subsec:runtime}
We compare the running time of different approaches in Table \ref{tab:runtime}. The numbers represent the sum of time for parameter initialization, missing value imputation, and model training. We also provide a reference time of GCN when $\mathcal{S}=\emptyset$. We can observe that \textsc{GCNmf} algorithm runs in reasonable time, with model training taking the majority of time (the time for initialization of GMM parameters only accounts for less than $25\%$). In comparison, \textsc{GCNmf} is slower than \textsc{MEAN} and VAE, but is much faster than the other seven methods. We note that some imputation techniques suffer due to the high dimension of features. For example, \textsc{MissForest} did not finish within 24 hours in Citeseer.

\subsection{Analysis of \textsc{GCNmf}}
In this section, we provide study of \textsc{GCNmf} in terms of hyper-parameter sensitivity, optimization analysis, and quality of the reconstructed features.

\subsubsection{Hyper-parameter analysis}
Figure~\ref{fig:para_analysis} depicts the performance results with different assignments on the Gaussian components $K$ and the number of hidden units $D^{(1)}$ in Cora and AmaPhoto datasets. We can observe that the performance reaches a plateau when we have enough number of hidden units to transcribe the information, i.e., $D^{(1)} \geq 16$ for Cora and $D^{(1)} \geq 32$ for AmaPhoto. On the other hand, the performance is not sensitive to $K$, with differences between the best and worst less than $0.82\%$ when $D^{(1)} \geq 16$ in Cora and $0.30\%$ when $D^{(1)} \geq 32$ in AmaPhoto, respectively.

\begin{figure*}[!t]
    \centering
    \begin{subfigure}{0.48\textwidth}
        \centering
      \includegraphics[width=\textwidth]{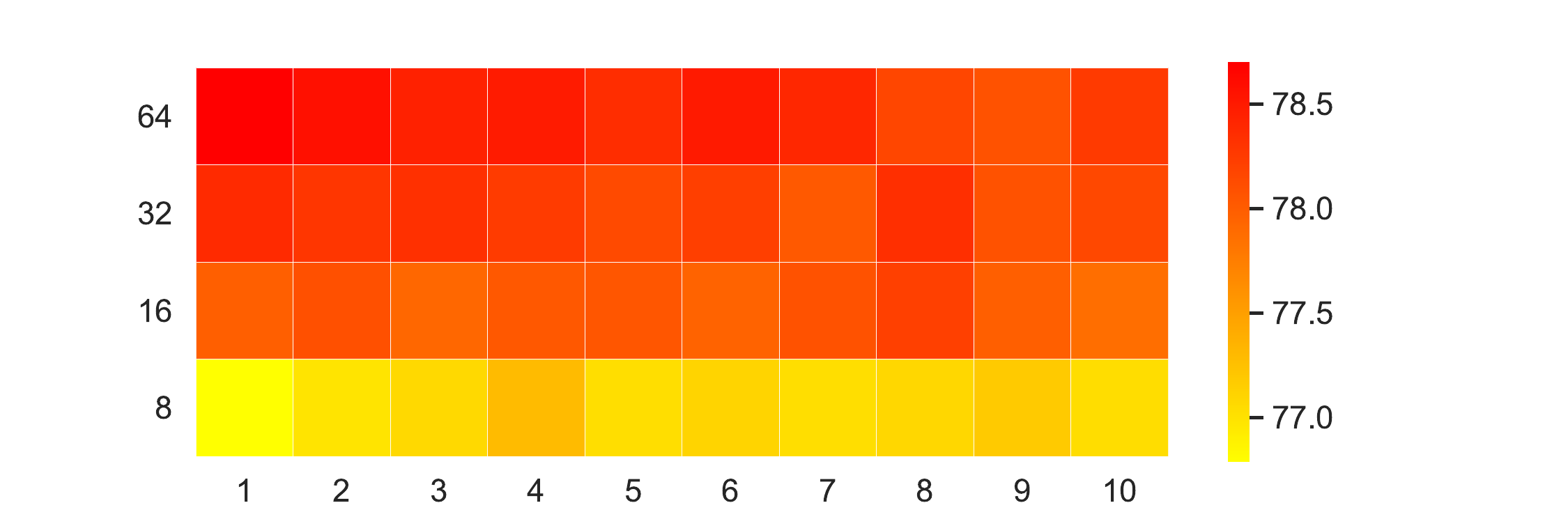}
      \caption{Cora for uniform randomly missing features}
      \label{subfig:cora_para_case1}
    \end{subfigure}%
    \quad 
    \begin{subfigure}{0.48\textwidth}
        \centering
      \includegraphics[width=\textwidth]{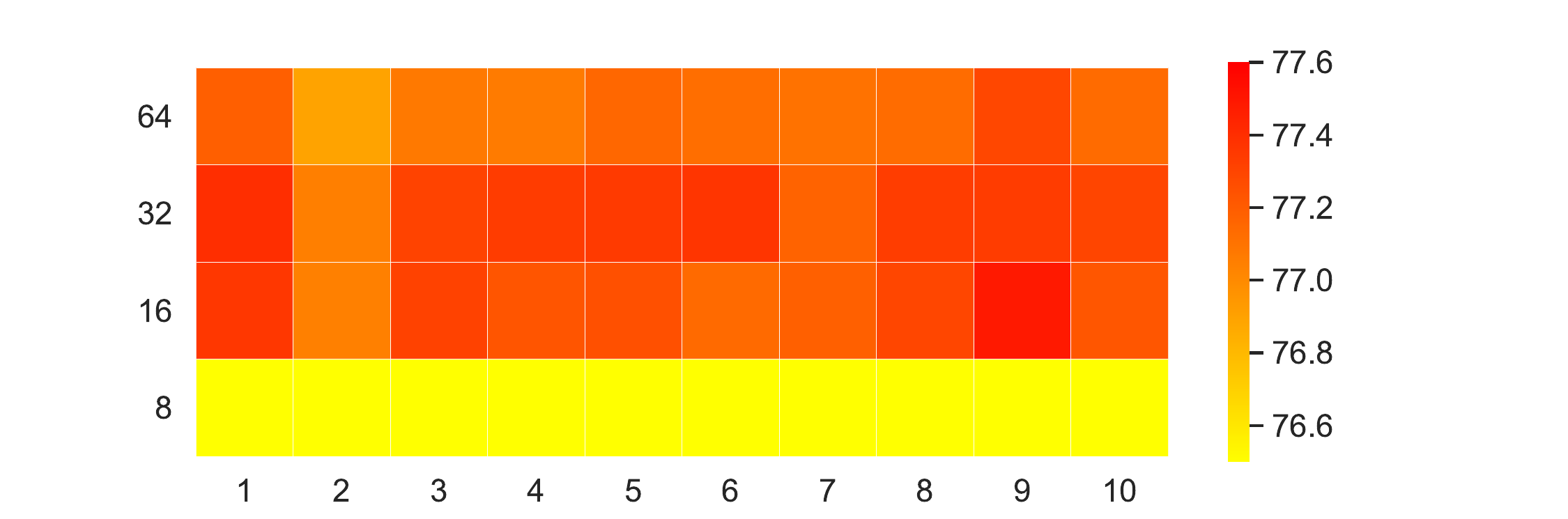}
      \caption{Cora for structurally missing features}
      \label{subfig:cora_para_case2}
    \end{subfigure}%
    \\
    \vspace*{0.60em}
    \begin{subfigure}{0.48\textwidth}
        \centering
      \includegraphics[width=\textwidth]{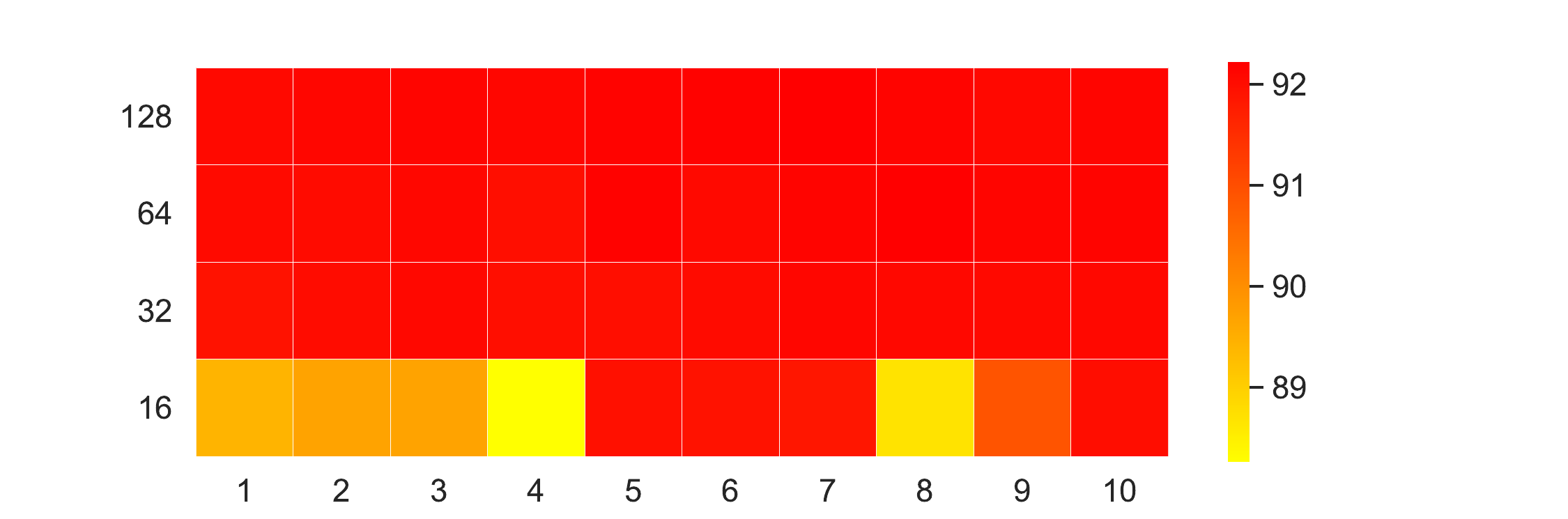}
      \caption{AmaPhoto for uniform randomly missing features}
      \label{subfig:photo_para_case1}
    \end{subfigure}%
    \quad 
    \begin{subfigure}{0.48\textwidth}
        \centering
      \includegraphics[width=\textwidth]{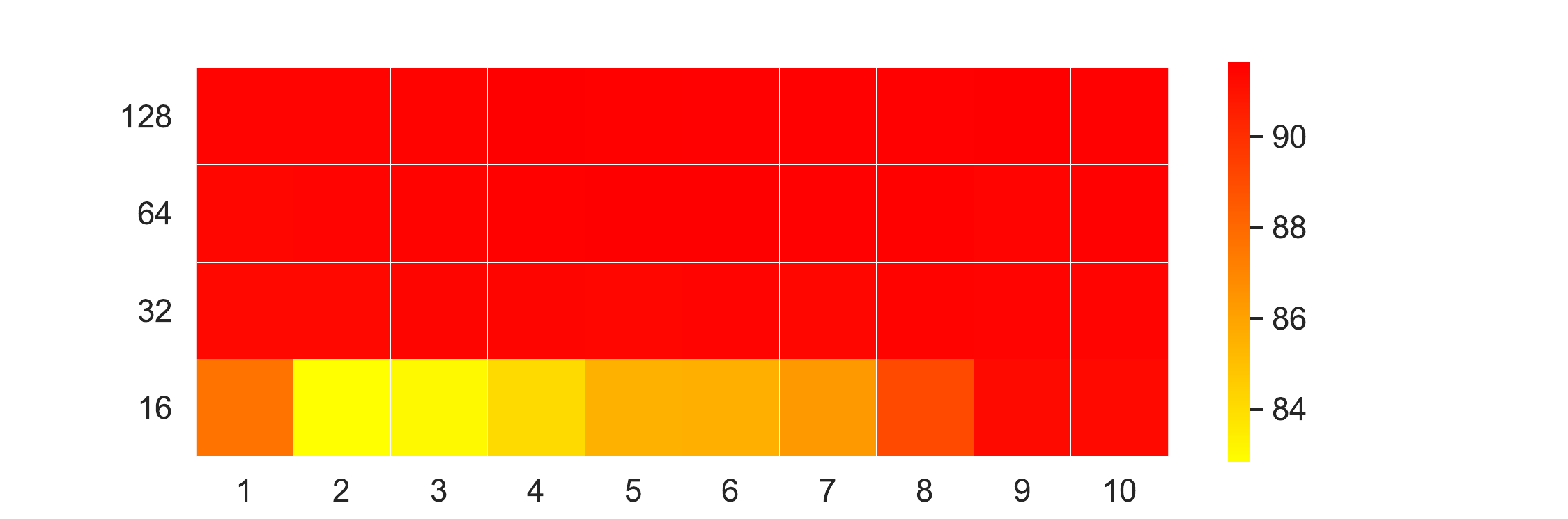}
      \caption{AmaPhoto for structurally missing features}
      \label{subfig:photo_para_case2}
    \end{subfigure}%
\caption{Node classification results for \textsc{GCNmf} with different assignments on the Gaussian components $K$ and the number of hidden units $D^{(1)}$ ($mr=50\%$). The x-axis represents $K$. The y-axis represents $D^{(1)}$.}
\label{fig:para_analysis}
\end{figure*}

\subsubsection{Analysis of Optimization}
\textsc{GCNmf} employs a joint optimization of GMM and GCN within the same network architecture. Alternatively, we can consider a two-step optimization strategy: in the first step we optimize GMM parameters with input node features using EM algorithm; in the second step we optimize GCN parameters by gradient descent algorithm while fixing the GMM parameters.

We compare the two optimization strategies in Table~\ref{table:2step}. We can observe that the joint optimization clearly beats the two-step optimization. The advantage becomes greater and greater as the missing rate increases. In particular, when the missing rate becomes high, the two-step optimization fails to learn the ``right'' model parameters and the performance deteriorates sharply.

\begin{table*}[!t]
  \centering
  \caption{The accuracy results for joint optimization and two-step optimization in node classification task.}
  \label{table:2step}
  \scalebox{.83}{
  \begin{tabular}{cc|ccccccccc} \toprule
\topmidheader{11}{Uniform randomly missnig features}
Dataset & Missing rate & 10\% & 20\% & 30\% & 40\% & 50\% & 60\% & 70\% & 80\% & 90\%\\ \midrule
\multirow{2}{*}{Cora}
& Joint Opt. & \textbf{81.70} & \textbf{81.66} & \textbf{80.41} & \textbf{79.52} & \textbf{77.91} & \textbf{76.67} & \textbf{74.38} & \textbf{70.57} & \textbf{63.49}\\
& Two-step Opt. & 81.50 & 81.43 & 79.81 & 79.35 & 76.75 & 76.04 & 73.97 & 69.16 & 61.46\\
\midrule
\multirow{2}{*}{Citeseer} 
& Joint Opt. & \textbf{70.93} & \textbf{70.82} & \textbf{69.84} & 68.83 & \textbf{67.03} & \textbf{64.78} & \textbf{60.70} & \textbf{55.38} & \textbf{47.78}\\
& Two-step Opt. & 70.54 & 70.73 & 69.66 & \textbf{69.20} & 66.59 & 64.52 & 60.07 & 53.68 & 46.53\\
\midrule
\midheader{11}{Structurally missing features}
Dataset & Missing rate & 10\% & 20\% & 30\% & 40\% & 50\% & 60\% & 70\% & 80\% & 90\%\\ \midrule
\multirow{2}{*}{AmaPhoto}
& Joint Opt. & 92.45 & \textbf{92.32} & \textbf{92.08} & \textbf{91.88} & \textbf{91.52} & \textbf{90.89} & \textbf{90.39} & \textbf{89.64} & \textbf{86.09}\\
& Two-step Opt. & \textbf{92.49} & 92.23 & 91.90 & 91.40 & 90.73 & 87.94 & 84.93 & 62.46 & 32.44\\
\midrule
\multirow{2}{*}{AmaComp} 
& Joint Opt. & \textbf{86.37} & \textbf{86.22} & \textbf{85.80} & \textbf{85.43} & \textbf{85.24} & \textbf{84.73} & \textbf{84.06} & \textbf{80.63} & \textbf{73.42}\\
& Two-step Opt. & 86.30 & 85.99 & 85.49 & 84.69 & 83.79 & 82.76 & 80.23 & 71.80 & 39.98\\
\bottomrule
  \end{tabular}
  }
\end{table*}

\begin{figure}[!t]
    \centering
    \includegraphics[width=0.5\textwidth]{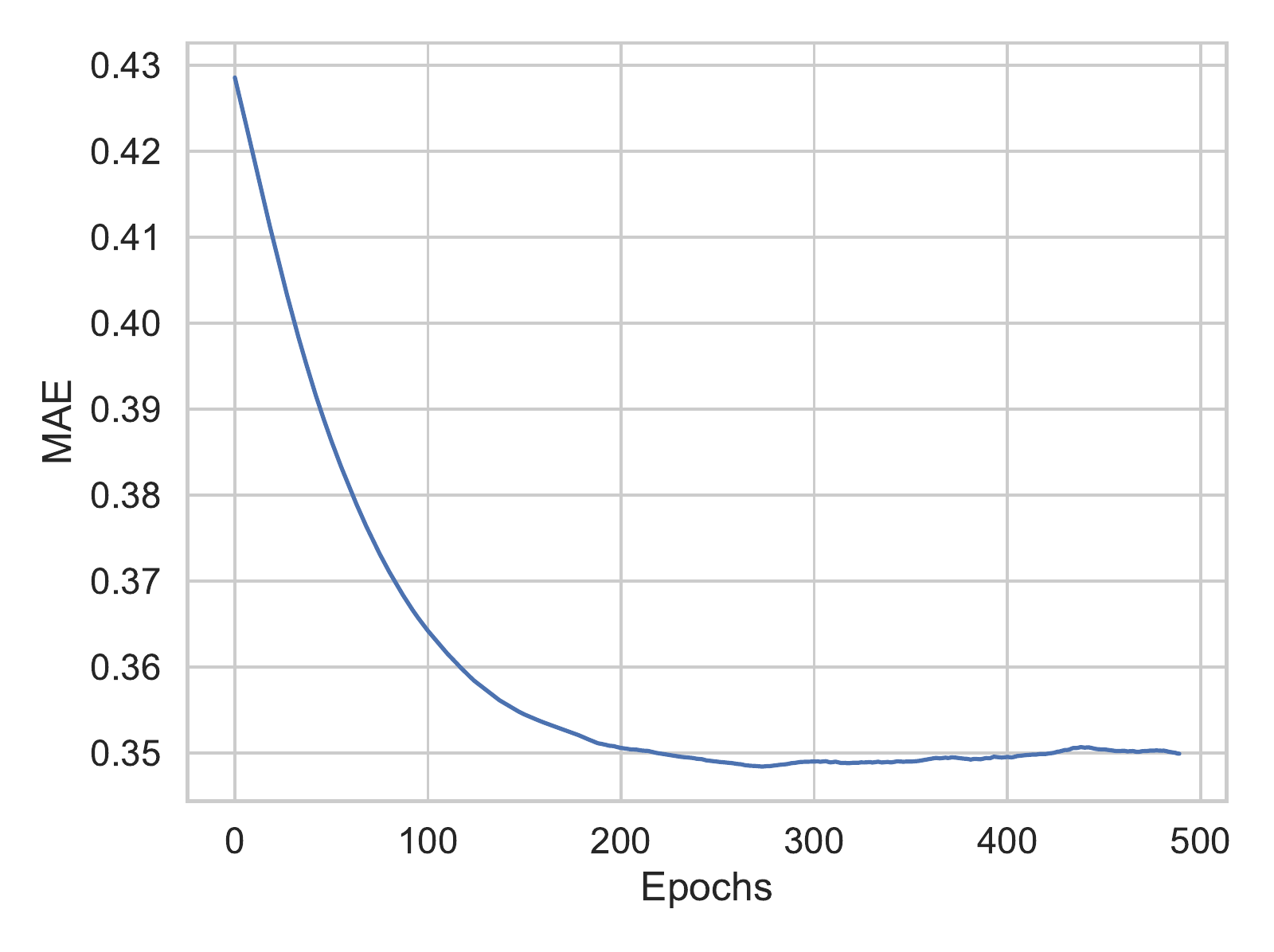}
    \caption{MAE of the reconstructed features during the training process for node classification task (structurally missing features, mr = 20\%) in AmaPhoto.}
    \label{fig:casestudy}
\end{figure}

\subsubsection{Analysis of Reconstructed Node Features}
Finally, we conducted a study on how well the reconstructed features by \textsc{GCNmf}. The reconstructed features are the mean of GMM, namely the weighted average of the mean vectors. Figure~\ref{fig:casestudy} depicts the Mean Absolute Error (MAE) of the reconstructed features and true features during the training process of the node classification task in AmaPhoto. We can observe that MAE decreases as the number of training epochs increases, and it converges to around 0.35 after 200 epochs. This suggests that the trained GMM captures the density of features more accurately than the initial state optimized by EM algorithm. Although the training aims at learning node labels, it helps to reconstruct the missing features.


\section{Conclusion}
\label{sec:conclusion}
We proposed \model{} to supplement a severe deficiency of current GCN models---inability to handle graphs containing missing features. In contrast to the traditional strategy of imputing missing features before applying GCN, \model{} integrates the processing of missing features and graph learning within the same neural network architecture. Specifically, we propose a novel way to unify the representation of missing features and calculation of the expected activation of the first layer neurons in GCN. We empirically demonstrate that 1) \model{} is robust against low level of missing features, 2) \model{} significantly outperforms the imputation based methods in the node classification and link prediction tasks.

\begin{acks}
This work is partly supported by JSPS Grant-in-Aid for Early-Career Scientists (Grant Number 19K20352), JSPS Grant-in-Aid for Scientific Research(B) (Grant Number 17H01785), JST CREST (Grant Number JPMJCR1687), and the New Energy and Industrial Technology Development Organization (NEDO).
\end{acks}

\bibliographystyle{ACM-Reference-Format}
\bibliography{ref.bib}


\begin{thebibliography}{68}


\ifx \showCODEN    \undefined \def \showCODEN     #1{\unskip}     \fi
\ifx \showDOI      \undefined \def \showDOI       #1{#1}\fi
\ifx \showISBNx    \undefined \def \showISBNx     #1{\unskip}     \fi
\ifx \showISBNxiii \undefined \def \showISBNxiii  #1{\unskip}     \fi
\ifx \showISSN     \undefined \def \showISSN      #1{\unskip}     \fi
\ifx \showLCCN     \undefined \def \showLCCN      #1{\unskip}     \fi
\ifx \shownote     \undefined \def \shownote      #1{#1}          \fi
\ifx \showarticletitle \undefined \def \showarticletitle #1{#1}   \fi
\ifx \showURL      \undefined \def \showURL       {\relax}        \fi
\providecommand\bibfield[2]{#2}
\providecommand\bibinfo[2]{#2}
\providecommand\natexlab[1]{#1}
\providecommand\showeprint[2][]{arXiv:#2}

\bibitem[\protect\citeauthoryear{Abu-El-Haija, Perozzi, Kapoor, Harutyunyan,
  Alipourfard, Lerman, Steeg, and Galstyan}{Abu-El-Haija et~al\mbox{.}}{2019}]%
        {sami2019mixhop}
\bibfield{author}{\bibinfo{person}{Sami Abu-El-Haija}, \bibinfo{person}{Bryan
  Perozzi}, \bibinfo{person}{Amol Kapoor}, \bibinfo{person}{Hrayr Harutyunyan},
  \bibinfo{person}{Nazanin Alipourfard}, \bibinfo{person}{Kristina Lerman},
  \bibinfo{person}{Greg~Ver Steeg}, {and} \bibinfo{person}{Aram Galstyan}.}
  \bibinfo{year}{2019}\natexlab{}.
\newblock \showarticletitle{MixHop: Higher-Order Graph Convolution
  Architectures via Sparsified Neighborhood Mixing}. In
  \bibinfo{booktitle}{\emph{Proceedings of ICML}}.
\newblock


\bibitem[\protect\citeauthoryear{Akiba, Sano, Yanase, Ohta, and Koyama}{Akiba
  et~al\mbox{.}}{2019}]%
        {optuna_2019}
\bibfield{author}{\bibinfo{person}{Takuya Akiba}, \bibinfo{person}{Shotaro
  Sano}, \bibinfo{person}{Toshihiko Yanase}, \bibinfo{person}{Takeru Ohta},
  {and} \bibinfo{person}{Masanori Koyama}.} \bibinfo{year}{2019}\natexlab{}.
\newblock \showarticletitle{Optuna: A Next-generation Hyperparameter
  Optimization Framework}. In \bibinfo{booktitle}{\emph{Proceedings of KDD}}.
\newblock


\bibitem[\protect\citeauthoryear{Bai, Ding, Bian, Chen, Sun, and Wang}{Bai
  et~al\mbox{.}}{2019}]%
        {bai2019Simgnn}
\bibfield{author}{\bibinfo{person}{Yunsheng Bai}, \bibinfo{person}{Hao Ding},
  \bibinfo{person}{Song Bian}, \bibinfo{person}{Ting Chen},
  \bibinfo{person}{Yizhou Sun}, {and} \bibinfo{person}{Wei Wang}.}
  \bibinfo{year}{2019}\natexlab{}.
\newblock \showarticletitle{Simgnn: A neural network approach to fast graph
  similarity computation}. In \bibinfo{booktitle}{\emph{Proceedings of WSDM}}.
  \bibinfo{pages}{384--392}.
\newblock


\bibitem[\protect\citeauthoryear{Bastings, Titov, Aziz, Marcheggiani, and
  Sima’an}{Bastings et~al\mbox{.}}{2017}]%
        {bastings2017MachineTrans_GCN}
\bibfield{author}{\bibinfo{person}{Joost Bastings}, \bibinfo{person}{Ivan
  Titov}, \bibinfo{person}{Wilker Aziz}, \bibinfo{person}{Diego Marcheggiani},
  {and} \bibinfo{person}{Khalil Sima’an}.} \bibinfo{year}{2017}\natexlab{}.
\newblock \showarticletitle{Graph convolutional encoders for syntax-aware
  neural machine translation}. In \bibinfo{booktitle}{\emph{Proceedings of
  EMNLP}}. \bibinfo{pages}{1957--1967}.
\newblock


\bibitem[\protect\citeauthoryear{Batista, Monard, et~al\mbox{.}}{Batista
  et~al\mbox{.}}{2002}]%
        {batista2002study}
\bibfield{author}{\bibinfo{person}{Gustavo~EAPA Batista},
  \bibinfo{person}{Maria~Carolina Monard}, {et~al\mbox{.}}}
  \bibinfo{year}{2002}\natexlab{}.
\newblock \showarticletitle{A Study of K-Nearest Neighbour as an Imputation
  Method.}
\newblock \bibinfo{journal}{\emph{HIS}} \bibinfo{volume}{87},
  \bibinfo{number}{251--260} (\bibinfo{year}{2002}), \bibinfo{pages}{48}.
\newblock


\bibitem[\protect\citeauthoryear{Buuren and Groothuis-Oudshoorn}{Buuren and
  Groothuis-Oudshoorn}{2010}]%
        {buuren2010mice}
\bibfield{author}{\bibinfo{person}{S~van Buuren} {and} \bibinfo{person}{Karin
  Groothuis-Oudshoorn}.} \bibinfo{year}{2010}\natexlab{}.
\newblock \showarticletitle{mice: Multivariate imputation by chained equations
  in R}.
\newblock \bibinfo{journal}{\emph{Journal of statistical software}}
  (\bibinfo{year}{2010}), \bibinfo{pages}{1--68}.
\newblock


\bibitem[\protect\citeauthoryear{Che, Purushotham, Cho, Sontag, and Liu}{Che
  et~al\mbox{.}}{2018}]%
        {Che2018rec}
\bibfield{author}{\bibinfo{person}{Zhengping Che}, \bibinfo{person}{Sanjay
  Purushotham}, \bibinfo{person}{Kyunghyun Cho}, \bibinfo{person}{David
  Sontag}, {and} \bibinfo{person}{Yan Liu}.} \bibinfo{year}{2018}\natexlab{}.
\newblock \showarticletitle{Recurrent Neural Networks for Multivariate Time
  Series with Missing Values}.
\newblock \bibinfo{journal}{\emph{Scientific Reports}} \bibinfo{volume}{8},
  \bibinfo{number}{1} (\bibinfo{year}{2018}), \bibinfo{pages}{6085}.
\newblock


\bibitem[\protect\citeauthoryear{Chen, Yin, Chen, Nguyen, Peng, and Li}{Chen
  et~al\mbox{.}}{2019}]%
        {chen2019CentralityInfoGraphConvolution}
\bibfield{author}{\bibinfo{person}{Hongxu Chen}, \bibinfo{person}{Hongzhi Yin},
  \bibinfo{person}{Tong Chen}, \bibinfo{person}{Quoc Viet~Hung Nguyen},
  \bibinfo{person}{Wen-Chih Peng}, {and} \bibinfo{person}{Xue Li}.}
  \bibinfo{year}{2019}\natexlab{}.
\newblock \showarticletitle{Exploiting centrality information with graph
  convolutions for network representation learning}. In
  \bibinfo{booktitle}{\emph{Proceedings of ICDE}}. \bibinfo{pages}{590--601}.
\newblock


\bibitem[\protect\citeauthoryear{Chen, Ma, and Xiao}{Chen
  et~al\mbox{.}}{2018}]%
        {chen2018fastgcn}
\bibfield{author}{\bibinfo{person}{Jie Chen}, \bibinfo{person}{Tengfei Ma},
  {and} \bibinfo{person}{Cao Xiao}.} \bibinfo{year}{2018}\natexlab{}.
\newblock \showarticletitle{Fast{GCN}: Fast Learning with Graph Convolutional
  Networks via Importance Sampling}. In \bibinfo{booktitle}{\emph{Proceedings
  of ICLR}}.
\newblock


\bibitem[\protect\citeauthoryear{Chiang, Liu, Si, Li, Bengio, and Hsieh}{Chiang
  et~al\mbox{.}}{2019}]%
        {chiang2019cluster}
\bibfield{author}{\bibinfo{person}{Wei-Lin Chiang}, \bibinfo{person}{Xuanqing
  Liu}, \bibinfo{person}{Si Si}, \bibinfo{person}{Yang Li},
  \bibinfo{person}{Samy Bengio}, {and} \bibinfo{person}{Cho-Jui Hsieh}.}
  \bibinfo{year}{2019}\natexlab{}.
\newblock \showarticletitle{Cluster-GCN: An Efficient Algorithm for Training
  Deep and Large Graph Convolutional Networks}. In
  \bibinfo{booktitle}{\emph{Proceedings of KDD}}. \bibinfo{pages}{257--266}.
\newblock


\bibitem[\protect\citeauthoryear{Choong, Liu, and Murata}{Choong
  et~al\mbox{.}}{2018}]%
        {Choong2018CommunityVAE}
\bibfield{author}{\bibinfo{person}{Jun~Jin Choong}, \bibinfo{person}{Xin Liu},
  {and} \bibinfo{person}{Tsuyoshi Murata}.} \bibinfo{year}{2018}\natexlab{}.
\newblock \showarticletitle{Learning community structure with variational
  autoencoder}. In \bibinfo{booktitle}{\emph{Proceedings of ICDM}}.
  \bibinfo{pages}{69--78}.
\newblock


\bibitem[\protect\citeauthoryear{Cui, Wang, Pei, and Zhu}{Cui
  et~al\mbox{.}}{2018}]%
        {cui2018survey_net_emb}
\bibfield{author}{\bibinfo{person}{Peng Cui}, \bibinfo{person}{Xiao Wang},
  \bibinfo{person}{Jian Pei}, {and} \bibinfo{person}{Wenwu Zhu}.}
  \bibinfo{year}{2018}\natexlab{}.
\newblock \showarticletitle{A survey on network embedding}.
\newblock \bibinfo{journal}{\emph{IEEE Transactions on Knowledge and Data
  Engineering}} \bibinfo{volume}{31}, \bibinfo{number}{5}
  (\bibinfo{year}{2018}), \bibinfo{pages}{833--852}.
\newblock


\bibitem[\protect\citeauthoryear{Dai, Li, Tian, Huang, Wang, Zhu, and Song}{Dai
  et~al\mbox{.}}{2018}]%
        {dai2018adversarial}
\bibfield{author}{\bibinfo{person}{Hanjun Dai}, \bibinfo{person}{Hui Li},
  \bibinfo{person}{Tian Tian}, \bibinfo{person}{Xin Huang},
  \bibinfo{person}{Lin Wang}, \bibinfo{person}{Jun Zhu}, {and}
  \bibinfo{person}{Le Song}.} \bibinfo{year}{2018}\natexlab{}.
\newblock \showarticletitle{Adversarial Attack on Graph Structured Data}
  \emph{(\bibinfo{series}{Proceedings of ICML})}. \bibinfo{pages}{1115--1124}.
\newblock


\bibitem[\protect\citeauthoryear{Dempster, Laird, and Rubin}{Dempster
  et~al\mbox{.}}{1977}]%
        {dempster1977}
\bibfield{author}{\bibinfo{person}{A.~P. Dempster}, \bibinfo{person}{N.~M.
  Laird}, {and} \bibinfo{person}{D.~B. Rubin}.}
  \bibinfo{year}{1977}\natexlab{}.
\newblock \showarticletitle{Maximum Likelihood from Incomplete Data via the EM
  Algorithm}.
\newblock \bibinfo{journal}{\emph{Journal of the Royal Statistical Society.
  Series B (Methodological)}} \bibinfo{volume}{39}, \bibinfo{number}{1}
  (\bibinfo{year}{1977}), \bibinfo{pages}{1--38}.
\newblock


\bibitem[\protect\citeauthoryear{Duvenaud, Maclaurin, Iparraguirre, Bombarell,
  Hirzel, Aspuru-Guzik, and Adams}{Duvenaud et~al\mbox{.}}{2015}]%
        {duvenaud2015GCN_MolecularFingerprints}
\bibfield{author}{\bibinfo{person}{David~K Duvenaud}, \bibinfo{person}{Dougal
  Maclaurin}, \bibinfo{person}{Jorge Iparraguirre}, \bibinfo{person}{Rafael
  Bombarell}, \bibinfo{person}{Timothy Hirzel}, \bibinfo{person}{Al{\'a}n
  Aspuru-Guzik}, {and} \bibinfo{person}{Ryan~P Adams}.}
  \bibinfo{year}{2015}\natexlab{}.
\newblock \showarticletitle{Convolutional networks on graphs for learning
  molecular fingerprints}. In \bibinfo{booktitle}{\emph{Proceedings of
  NeurIPS}}. \bibinfo{pages}{2224--2232}.
\newblock


\bibitem[\protect\citeauthoryear{Fout, Byrd, Shariat, and Ben-Hur}{Fout
  et~al\mbox{.}}{2017}]%
        {fout2017ProteinInterfacePrediction_GCN}
\bibfield{author}{\bibinfo{person}{Alex Fout}, \bibinfo{person}{Jonathon Byrd},
  \bibinfo{person}{Basir Shariat}, {and} \bibinfo{person}{Asa Ben-Hur}.}
  \bibinfo{year}{2017}\natexlab{}.
\newblock \showarticletitle{Protein interface prediction using graph
  convolutional networks}. In \bibinfo{booktitle}{\emph{Proceedings of
  NeurIPS}}. \bibinfo{pages}{6530--6539}.
\newblock


\bibitem[\protect\citeauthoryear{Garc{\'\i}a-Laencina, Sancho-G{\'o}mez, and
  Figueiras-Vidal}{Garc{\'\i}a-Laencina et~al\mbox{.}}{2010}]%
        {GarciaLaencina2010}
\bibfield{author}{\bibinfo{person}{Pedro~J Garc{\'\i}a-Laencina},
  \bibinfo{person}{Jos{\'e}-Luis Sancho-G{\'o}mez}, {and}
  \bibinfo{person}{An{\'\i}bal~R Figueiras-Vidal}.}
  \bibinfo{year}{2010}\natexlab{}.
\newblock \showarticletitle{Pattern classification with missing data: a
  review}.
\newblock \bibinfo{journal}{\emph{Neural Computing and Applications}}
  \bibinfo{volume}{19}, \bibinfo{number}{2} (\bibinfo{year}{2010}),
  \bibinfo{pages}{263--282}.
\newblock


\bibitem[\protect\citeauthoryear{Glorot and Bengio}{Glorot and Bengio}{2010}]%
        {glorot10understanding}
\bibfield{author}{\bibinfo{person}{Xavier Glorot} {and} \bibinfo{person}{Yoshua
  Bengio}.} \bibinfo{year}{2010}\natexlab{}.
\newblock \showarticletitle{Understanding the difficulty of training deep
  feedforward neural networks}. In \bibinfo{booktitle}{\emph{Proceedings of
  AISTATS}}. \bibinfo{pages}{249--256}.
\newblock


\bibitem[\protect\citeauthoryear{Hu, Pan, Long, Lu, Zhu, and Jiang}{Hu
  et~al\mbox{.}}{2020}]%
        {wan2020goingdeep}
\bibfield{author}{\bibinfo{person}{Ruiqi Hu}, \bibinfo{person}{Shirui Pan},
  \bibinfo{person}{Guodong Long}, \bibinfo{person}{Qinghua Lu},
  \bibinfo{person}{Liming Zhu}, {and} \bibinfo{person}{Jing Jiang}.}
  \bibinfo{year}{2020}\natexlab{}.
\newblock \showarticletitle{Going Deep: Graph Convolutional Ladder-Shape
  Networks}. In \bibinfo{booktitle}{\emph{Proceedings of AAAI}}.
\newblock


\bibitem[\protect\citeauthoryear{Ji, Pan, Cambria, Marttinen, and Yu}{Ji
  et~al\mbox{.}}{2020}]%
        {ji2020survey}
\bibfield{author}{\bibinfo{person}{Shaoxiong Ji}, \bibinfo{person}{Shirui Pan},
  \bibinfo{person}{Erik Cambria}, \bibinfo{person}{Pekka Marttinen}, {and}
  \bibinfo{person}{Philip~S. Yu}.} \bibinfo{year}{2020}\natexlab{}.
\newblock \bibinfo{title}{A Survey on Knowledge Graphs: Representation,
  Acquisition and Applications}.
\newblock
\newblock
\showeprint[arxiv]{2002.00388}


\bibitem[\protect\citeauthoryear{Jin, Liu, Li, He, and Zhang}{Jin
  et~al\mbox{.}}{2019}]%
        {jin2019GCN_MRF_CommunityDet}
\bibfield{author}{\bibinfo{person}{Di Jin}, \bibinfo{person}{Ziyang Liu},
  \bibinfo{person}{Weihao Li}, \bibinfo{person}{Dongxiao He}, {and}
  \bibinfo{person}{Weixiong Zhang}.} \bibinfo{year}{2019}\natexlab{}.
\newblock \showarticletitle{Graph convolutional networks meet Markov random
  fields: Semi-supervised community detection in attribute networks}. In
  \bibinfo{booktitle}{\emph{Proceedings of AAAI}}. \bibinfo{pages}{152--159}.
\newblock


\bibitem[\protect\citeauthoryear{{Kai Jiang}, {Haixia Chen}, and {Senmiao
  Yuan}}{{Kai Jiang} et~al\mbox{.}}{2005}]%
        {jiang2005ensemble}
\bibfield{author}{\bibinfo{person}{{Kai Jiang}}, \bibinfo{person}{{Haixia
  Chen}}, {and} \bibinfo{person}{{Senmiao Yuan}}.}
  \bibinfo{year}{2005}\natexlab{}.
\newblock \showarticletitle{Classification for Incomplete Data Using Classifier
  Ensembles}. In \bibinfo{booktitle}{\emph{Proceedings of ICNNB}}.
  \bibinfo{pages}{559--563}.
\newblock


\bibitem[\protect\citeauthoryear{Kingma and Ba}{Kingma and Ba}{2015}]%
        {KingmaB14Adam}
\bibfield{author}{\bibinfo{person}{Diederik~P. Kingma} {and}
  \bibinfo{person}{Jimmy Ba}.} \bibinfo{year}{2015}\natexlab{}.
\newblock \showarticletitle{Adam: {A} Method for Stochastic Optimization}. In
  \bibinfo{booktitle}{\emph{Proceedings of ICLR}}.
\newblock


\bibitem[\protect\citeauthoryear{Kingma and Welling}{Kingma and
  Welling}{2014}]%
        {Kingma2014}
\bibfield{author}{\bibinfo{person}{Diederik~P. Kingma} {and}
  \bibinfo{person}{Max Welling}.} \bibinfo{year}{2014}\natexlab{}.
\newblock \showarticletitle{{Auto-encoding variational bayes}}.
\newblock \bibinfo{journal}{\emph{Proceedings of ICLR}} (\bibinfo{year}{2014}),
  \bibinfo{pages}{1--14}.
\newblock


\bibitem[\protect\citeauthoryear{Kipf and Welling}{Kipf and Welling}{2016}]%
        {kipf2016variational}
\bibfield{author}{\bibinfo{person}{Thomas~N Kipf} {and} \bibinfo{person}{Max
  Welling}.} \bibinfo{year}{2016}\natexlab{}.
\newblock \showarticletitle{Variational Graph Auto-Encoders}.
\newblock \bibinfo{journal}{\emph{NIPS Workshop on Bayesian Deep Learning}}
  (\bibinfo{year}{2016}).
\newblock


\bibitem[\protect\citeauthoryear{Kipf and Welling}{Kipf and Welling}{2017}]%
        {kipf2016GCN}
\bibfield{author}{\bibinfo{person}{Thomas~N Kipf} {and} \bibinfo{person}{Max
  Welling}.} \bibinfo{year}{2017}\natexlab{}.
\newblock \showarticletitle{Semi-supervised classification with graph
  convolutional networks}. In \bibinfo{booktitle}{\emph{Proceedings of ICLR}}.
\newblock


\bibitem[\protect\citeauthoryear{{Koren}, {Bell}, and {Volinsky}}{{Koren}
  et~al\mbox{.}}{2009}]%
        {koren2009mf}
\bibfield{author}{\bibinfo{person}{Y. {Koren}}, \bibinfo{person}{R. {Bell}},
  {and} \bibinfo{person}{C. {Volinsky}}.} \bibinfo{year}{2009}\natexlab{}.
\newblock \showarticletitle{Matrix Factorization Techniques for Recommender
  Systems}.
\newblock \bibinfo{journal}{\emph{Computer}} \bibinfo{volume}{42},
  \bibinfo{number}{8} (\bibinfo{year}{2009}), \bibinfo{pages}{30--37}.
\newblock


\bibitem[\protect\citeauthoryear{Li, Wang, Zhu, and Huang}{Li
  et~al\mbox{.}}{2018}]%
        {li2018AdaptiveGCN}
\bibfield{author}{\bibinfo{person}{Ruoyu Li}, \bibinfo{person}{Sheng Wang},
  \bibinfo{person}{Feiyun Zhu}, {and} \bibinfo{person}{Junzhou Huang}.}
  \bibinfo{year}{2018}\natexlab{}.
\newblock \showarticletitle{Adaptive graph convolutional neural networks}. In
  \bibinfo{booktitle}{\emph{Proceedings of AAAI}}.
\newblock


\bibitem[\protect\citeauthoryear{Li, Jiang, and Marlin}{Li
  et~al\mbox{.}}{2019}]%
        {li2018learning}
\bibfield{author}{\bibinfo{person}{Steven Cheng-Xian Li}, \bibinfo{person}{Bo
  Jiang}, {and} \bibinfo{person}{Benjamin Marlin}.}
  \bibinfo{year}{2019}\natexlab{}.
\newblock \showarticletitle{Learning from Incomplete Data with Generative
  Adversarial Networks}. In \bibinfo{booktitle}{\emph{Proceedings of ICLR}}.
\newblock


\bibitem[\protect\citeauthoryear{Liu, Murata, Kim, Kotarasu, and Zhuang}{Liu
  et~al\mbox{.}}{2019}]%
        {Liu2019NetEmbGRAbetta}
\bibfield{author}{\bibinfo{person}{Xin Liu}, \bibinfo{person}{Tsuyoshi Murata},
  \bibinfo{person}{Kyoung-Sook Kim}, \bibinfo{person}{Chatchawan Kotarasu},
  {and} \bibinfo{person}{Chenyi Zhuang}.} \bibinfo{year}{2019}\natexlab{}.
\newblock \showarticletitle{A general view for network embedding as matrix
  factorization}. In \bibinfo{booktitle}{\emph{Proceedings of WSDM}}.
  \bibinfo{pages}{375--383}.
\newblock


\bibitem[\protect\citeauthoryear{Liu, Dou, Yu, Deng, and Peng}{Liu
  et~al\mbox{.}}{2020}]%
        {liu2020alleviating}
\bibfield{author}{\bibinfo{person}{Zhiwei Liu}, \bibinfo{person}{Yingtong Dou},
  \bibinfo{person}{Philip~S. Yu}, \bibinfo{person}{Yutong Deng}, {and}
  \bibinfo{person}{Hao Peng}.} \bibinfo{year}{2020}\natexlab{}.
\newblock \showarticletitle{Alleviating the Inconsistency Problem of Applying
  Graph Neural Network to Fraud Detection}. In
  \bibinfo{booktitle}{\emph{Proceedings of SIGIR}}.
\newblock


\bibitem[\protect\citeauthoryear{Luo, Cai, ZHANG, Xu, and xiaojie}{Luo
  et~al\mbox{.}}{2018}]%
        {Luo2018Multi}
\bibfield{author}{\bibinfo{person}{Yonghong Luo}, \bibinfo{person}{Xiangrui
  Cai}, \bibinfo{person}{Ying ZHANG}, \bibinfo{person}{Jun Xu}, {and}
  \bibinfo{person}{Yuan xiaojie}.} \bibinfo{year}{2018}\natexlab{}.
\newblock \showarticletitle{Multivariate Time Series Imputation with Generative
  Adversarial Networks}. In \bibinfo{booktitle}{\emph{Proceedings of NeurIPS}}.
  \bibinfo{pages}{1596--1607}.
\newblock


\bibitem[\protect\citeauthoryear{Maurya, Liu, and Murata}{Maurya
  et~al\mbox{.}}{2019}]%
        {sunil2019BetweennessGNN}
\bibfield{author}{\bibinfo{person}{Sunil~Kumar Maurya}, \bibinfo{person}{Xin
  Liu}, {and} \bibinfo{person}{Tsuyoshi Murata}.}
  \bibinfo{year}{2019}\natexlab{}.
\newblock \showarticletitle{Fast Approximations of Betweenness Centrality with
  Graph Neural Networks}. In \bibinfo{booktitle}{\emph{Proceedings of CIKM}}.
  \bibinfo{pages}{2149--2152}.
\newblock


\bibitem[\protect\citeauthoryear{Mazumder, Hastie, and Tibshirani}{Mazumder
  et~al\mbox{.}}{2010}]%
        {mazumder2010soft}
\bibfield{author}{\bibinfo{person}{Rahul Mazumder}, \bibinfo{person}{Trevor
  Hastie}, {and} \bibinfo{person}{Robert Tibshirani}.}
  \bibinfo{year}{2010}\natexlab{}.
\newblock \showarticletitle{Spectral Regularization Algorithms for Learning
  Large Incomplete Matrices}.
\newblock \bibinfo{journal}{\emph{J. Mach. Learn. Res.}}  \bibinfo{volume}{11}
  (\bibinfo{year}{2010}), \bibinfo{pages}{2287--2322}.
\newblock


\bibitem[\protect\citeauthoryear{McAuley, Targett, Shi, and van~den
  Hengel}{McAuley et~al\mbox{.}}{2015}]%
        {amazondata2015}
\bibfield{author}{\bibinfo{person}{Julian McAuley},
  \bibinfo{person}{Christopher Targett}, \bibinfo{person}{Qinfeng Shi}, {and}
  \bibinfo{person}{Anton van~den Hengel}.} \bibinfo{year}{2015}\natexlab{}.
\newblock \showarticletitle{Image-Based Recommendations on Styles and
  Substitutes}. In \bibinfo{booktitle}{\emph{Proceedings of SIGIR}}.
  \bibinfo{pages}{43--52}.
\newblock


\bibitem[\protect\citeauthoryear{Narasimhan, Lazebnik, and Schwing}{Narasimhan
  et~al\mbox{.}}{2018}]%
        {narasimhan2018VisualQA_GCN}
\bibfield{author}{\bibinfo{person}{Medhini Narasimhan},
  \bibinfo{person}{Svetlana Lazebnik}, {and} \bibinfo{person}{Alexander
  Schwing}.} \bibinfo{year}{2018}\natexlab{}.
\newblock \showarticletitle{Out of the box: Reasoning with graph convolution
  nets for factual visual question answering}. In
  \bibinfo{booktitle}{\emph{Proceedings of NeurIPS}}.
  \bibinfo{pages}{2654--2665}.
\newblock


\bibitem[\protect\citeauthoryear{Pan, Hu, Long, Jiang, Yao, and Zhang}{Pan
  et~al\mbox{.}}{2018}]%
        {pan2018adversarially}
\bibfield{author}{\bibinfo{person}{Shirui Pan}, \bibinfo{person}{Ruiqi Hu},
  \bibinfo{person}{Guodong Long}, \bibinfo{person}{Jing Jiang},
  \bibinfo{person}{Lina Yao}, {and} \bibinfo{person}{Chengqi Zhang}.}
  \bibinfo{year}{2018}\natexlab{}.
\newblock \showarticletitle{Adversarially Regularized Graph Autoencoder for
  Graph Embedding}. In \bibinfo{booktitle}{\emph{Proceedings of IJCAI}}.
  \bibinfo{pages}{2609--2615}.
\newblock


\bibitem[\protect\citeauthoryear{Pelckmans, Brabanter, Suykens, and
  Moor}{Pelckmans et~al\mbox{.}}{2005}]%
        {pelckmans2005svm}
\bibfield{author}{\bibinfo{person}{K. Pelckmans}, \bibinfo{person}{J.~De
  Brabanter}, \bibinfo{person}{J.A.K. Suykens}, {and} \bibinfo{person}{B.~De
  Moor}.} \bibinfo{year}{2005}\natexlab{}.
\newblock \showarticletitle{Handling missing values in support vector machine
  classifiers}.
\newblock \bibinfo{journal}{\emph{Neural Networks}} \bibinfo{volume}{18},
  \bibinfo{number}{5} (\bibinfo{year}{2005}), \bibinfo{pages}{684--692}.
\newblock


\bibitem[\protect\citeauthoryear{Perozzi, Al-Rfou, and Skiena}{Perozzi
  et~al\mbox{.}}{2014}]%
        {Perozzi2014deep}
\bibfield{author}{\bibinfo{person}{Bryan Perozzi}, \bibinfo{person}{Rami
  Al-Rfou}, {and} \bibinfo{person}{Steven Skiena}.}
  \bibinfo{year}{2014}\natexlab{}.
\newblock \showarticletitle{DeepWalk: Online Learning of Social
  Representations}. In \bibinfo{booktitle}{\emph{Proceedings of KDD}}.
  \bibinfo{pages}{701--710}.
\newblock


\bibitem[\protect\citeauthoryear{Qiu, Dong, Ma, Li, Wang, and Tang}{Qiu
  et~al\mbox{.}}{2018}]%
        {qiu2017unifying_dw_n2v_line}
\bibfield{author}{\bibinfo{person}{Jiezhong Qiu}, \bibinfo{person}{Yuxiao
  Dong}, \bibinfo{person}{Hao Ma}, \bibinfo{person}{Jian Li},
  \bibinfo{person}{Kuansan Wang}, {and} \bibinfo{person}{Jie Tang}.}
  \bibinfo{year}{2018}\natexlab{}.
\newblock \showarticletitle{Network embedding as matrix factorization: unifying
  DeepWalk, LINE, PTE, and node2vec}. In \bibinfo{booktitle}{\emph{Proceedings
  of WSDM}}. \bibinfo{pages}{459--467}.
\newblock


\bibitem[\protect\citeauthoryear{Rubin}{Rubin}{2004}]%
        {rubin2004multiple}
\bibfield{author}{\bibinfo{person}{Donald~B Rubin}.}
  \bibinfo{year}{2004}\natexlab{}.
\newblock \bibinfo{booktitle}{\emph{Multiple imputation for nonresponse in
  surveys}}. Vol.~\bibinfo{volume}{81}.
\newblock \bibinfo{publisher}{John Wiley \& Sons}.
\newblock


\bibitem[\protect\citeauthoryear{Scarselli, Gori, Tsoi, Hagenbuchner, and
  Monfardini}{Scarselli et~al\mbox{.}}{2008}]%
        {scarselli2008gnn_model}
\bibfield{author}{\bibinfo{person}{Franco Scarselli}, \bibinfo{person}{Marco
  Gori}, \bibinfo{person}{Ah~Chung Tsoi}, \bibinfo{person}{Markus
  Hagenbuchner}, {and} \bibinfo{person}{Gabriele Monfardini}.}
  \bibinfo{year}{2008}\natexlab{}.
\newblock \showarticletitle{The graph neural network model}.
\newblock \bibinfo{journal}{\emph{IEEE Transactions on Neural Networks}}
  \bibinfo{volume}{20}, \bibinfo{number}{1} (\bibinfo{year}{2008}),
  \bibinfo{pages}{61--80}.
\newblock


\bibitem[\protect\citeauthoryear{Sen, Namata, Bilgic, Getoor, Galligher, and
  Eliassi-Rad}{Sen et~al\mbox{.}}{2008}]%
        {sen2008col}
\bibfield{author}{\bibinfo{person}{Prithviraj Sen}, \bibinfo{person}{Galileo
  Namata}, \bibinfo{person}{Mustafa Bilgic}, \bibinfo{person}{Lise Getoor},
  \bibinfo{person}{Brian Galligher}, {and} \bibinfo{person}{Tina Eliassi-Rad}.}
  \bibinfo{year}{2008}\natexlab{}.
\newblock \showarticletitle{Collective Classification in Network Data}.
\newblock \bibinfo{journal}{\emph{AI Magazine}} \bibinfo{volume}{29},
  \bibinfo{number}{3} (\bibinfo{year}{2008}), \bibinfo{pages}{93}.
\newblock


\bibitem[\protect\citeauthoryear{Shi, Tang, Zhu, and Liu}{Shi
  et~al\mbox{.}}{2019}]%
        {shi2019featureattention}
\bibfield{author}{\bibinfo{person}{Min Shi}, \bibinfo{person}{Yufei Tang},
  \bibinfo{person}{Xingquan Zhu}, {and} \bibinfo{person}{Jianxun Liu}.}
  \bibinfo{year}{2019}\natexlab{}.
\newblock \bibinfo{title}{Feature-Attention Graph Convolutional Networks for
  Noise Resilient Learning}.
\newblock
\newblock
\showeprint[arxiv]{1912.11755}


\bibitem[\protect\citeauthoryear{Shuman, Narang, Frossard, Ortega, and
  Vandergheynst}{Shuman et~al\mbox{.}}{2013}]%
        {Shuman2013signal_processing_graph}
\bibfield{author}{\bibinfo{person}{David~I Shuman}, \bibinfo{person}{Sunil~K
  Narang}, \bibinfo{person}{Pascal Frossard}, \bibinfo{person}{Antonio Ortega},
  {and} \bibinfo{person}{Pierre Vandergheynst}.}
  \bibinfo{year}{2013}\natexlab{}.
\newblock \showarticletitle{The emerging field of signal processing on graphs:
  Extending high-dimensional data analysis to networks and other irregular
  domains}.
\newblock \bibinfo{journal}{\emph{IEEE signal processing magazine}}
  \bibinfo{volume}{30}, \bibinfo{number}{3} (\bibinfo{year}{2013}),
  \bibinfo{pages}{83--98}.
\newblock


\bibitem[\protect\citeauthoryear{Simonovsky and Komodakis}{Simonovsky and
  Komodakis}{2017}]%
        {simonovsky2017DynamicFilterGCN}
\bibfield{author}{\bibinfo{person}{Martin Simonovsky} {and}
  \bibinfo{person}{Nikos Komodakis}.} \bibinfo{year}{2017}\natexlab{}.
\newblock \showarticletitle{Dynamic edge-conditioned filters in convolutional
  neural networks on graphs}. In \bibinfo{booktitle}{\emph{Proceedings of
  CVPR}}. \bibinfo{pages}{3693--3702}.
\newblock


\bibitem[\protect\citeauthoryear{{\'S}mieja, Struski, Tabor, and
  Marzec}{{\'S}mieja et~al\mbox{.}}{2019}]%
        {Smieja2019}
\bibfield{author}{\bibinfo{person}{Marek {\'S}mieja}, \bibinfo{person}{{\L}ukas
  Struski}, \bibinfo{person}{Jacek Tabor}, {and} \bibinfo{person}{Mateusz
  Marzec}.} \bibinfo{year}{2019}\natexlab{}.
\newblock \showarticletitle{Generalized RBF kernel for incomplete data}.
\newblock \bibinfo{journal}{\emph{Knowledge-Based Systems}}
  \bibinfo{volume}{173} (\bibinfo{year}{2019}), \bibinfo{pages}{150--162}.
\newblock


\bibitem[\protect\citeauthoryear{{\'S}mieja, Struski, Tabor, Zieli{\'n}ski, and
  Spurek}{{\'S}mieja et~al\mbox{.}}{2018}]%
        {smieja2018GMMC}
\bibfield{author}{\bibinfo{person}{Marek {\'S}mieja},
  \bibinfo{person}{{\L}ukasz Struski}, \bibinfo{person}{Jacek Tabor},
  \bibinfo{person}{Bartosz Zieli{\'n}ski}, {and} \bibinfo{person}{Przemys{\l}aw
  Spurek}.} \bibinfo{year}{2018}\natexlab{}.
\newblock \showarticletitle{Processing of missing data by neural networks}. In
  \bibinfo{booktitle}{\emph{Proceedings of NeurIPS}}.
  \bibinfo{pages}{2719--2729}.
\newblock


\bibitem[\protect\citeauthoryear{Smola, Vishwanathan, and Hofmann}{Smola
  et~al\mbox{.}}{2005}]%
        {Smola2005KernelMF}
\bibfield{author}{\bibinfo{person}{Alexander~J. Smola},
  \bibinfo{person}{S.~V.~N. Vishwanathan}, {and} \bibinfo{person}{Thomas
  Hofmann}.} \bibinfo{year}{2005}\natexlab{}.
\newblock \showarticletitle{Kernel Methods for Missing Variables}. In
  \bibinfo{booktitle}{\emph{Proceedings of AISTATS}}.
\newblock


\bibitem[\protect\citeauthoryear{Spinelli, Scardapane, and Aurelio}{Spinelli
  et~al\mbox{.}}{2020}]%
        {spinelli2019ginn}
\bibfield{author}{\bibinfo{person}{Indro Spinelli}, \bibinfo{person}{Simone
  Scardapane}, {and} \bibinfo{person}{Uncini Aurelio}.}
  \bibinfo{year}{2020}\natexlab{}.
\newblock \showarticletitle{Missing Data Imputation with Adversarially-trained
  Graph Convolutional Networks}.
\newblock \bibinfo{journal}{\emph{Neural Networks}}  \bibinfo{volume}{129}
  (\bibinfo{year}{2020}), \bibinfo{pages}{249--260}.
\newblock


\bibitem[\protect\citeauthoryear{Stekhoven and Bühlmann}{Stekhoven and
  Bühlmann}{2011}]%
        {Stekhoven2011missforest}
\bibfield{author}{\bibinfo{person}{Daniel~J. Stekhoven} {and}
  \bibinfo{person}{Peter Bühlmann}.} \bibinfo{year}{2011}\natexlab{}.
\newblock \showarticletitle{{MissForest—non-parametric missing value
  imputation for mixed-type data}}.
\newblock \bibinfo{journal}{\emph{Bioinformatics}} \bibinfo{volume}{28},
  \bibinfo{number}{1} (\bibinfo{year}{2011}), \bibinfo{pages}{112--118}.
\newblock


\bibitem[\protect\citeauthoryear{Tang, Qu, Wang, Zhang, Yan, and Mei}{Tang
  et~al\mbox{.}}{2015}]%
        {tang2015line}
\bibfield{author}{\bibinfo{person}{Jian Tang}, \bibinfo{person}{Meng Qu},
  \bibinfo{person}{Mingzhe Wang}, \bibinfo{person}{Ming Zhang},
  \bibinfo{person}{Jun Yan}, {and} \bibinfo{person}{Qiaozhu Mei}.}
  \bibinfo{year}{2015}\natexlab{}.
\newblock \showarticletitle{Line: Large-scale information network embedding}.
  In \bibinfo{booktitle}{\emph{Proceedings of WWW}}.
  \bibinfo{pages}{1067--1077}.
\newblock


\bibitem[\protect\citeauthoryear{Veli{\v{c}}kovi{\'c}, Cucurull, Casanova,
  Romero, Lio, and Bengio}{Veli{\v{c}}kovi{\'c} et~al\mbox{.}}{2018}]%
        {velivckovic2017GAT}
\bibfield{author}{\bibinfo{person}{Petar Veli{\v{c}}kovi{\'c}},
  \bibinfo{person}{Guillem Cucurull}, \bibinfo{person}{Arantxa Casanova},
  \bibinfo{person}{Adriana Romero}, \bibinfo{person}{Pietro Lio}, {and}
  \bibinfo{person}{Yoshua Bengio}.} \bibinfo{year}{2018}\natexlab{}.
\newblock \showarticletitle{Graph attention networks}. In
  \bibinfo{booktitle}{\emph{Proceedings of ICLR}}.
\newblock


\bibitem[\protect\citeauthoryear{Wang, Cui, and Zhu}{Wang
  et~al\mbox{.}}{2016}]%
        {wang2016SDNE}
\bibfield{author}{\bibinfo{person}{Daixin Wang}, \bibinfo{person}{Peng Cui},
  {and} \bibinfo{person}{Wenwu Zhu}.} \bibinfo{year}{2016}\natexlab{}.
\newblock \showarticletitle{Structural deep network embedding}. In
  \bibinfo{booktitle}{\emph{Proceedings of KDD}}. \bibinfo{pages}{1225--1234}.
\newblock


\bibitem[\protect\citeauthoryear{{Wang}, {Mao}, {Wang}, and {Guo}}{{Wang}
  et~al\mbox{.}}{2017}]%
        {wang2017kgsurvey}
\bibfield{author}{\bibinfo{person}{Q. {Wang}}, \bibinfo{person}{Z. {Mao}},
  \bibinfo{person}{B. {Wang}}, {and} \bibinfo{person}{L. {Guo}}.}
  \bibinfo{year}{2017}\natexlab{}.
\newblock \showarticletitle{Knowledge Graph Embedding: A Survey of Approaches
  and Applications}.
\newblock \bibinfo{journal}{\emph{IEEE Transactions on Knowledge and Data
  Engineering}} \bibinfo{volume}{29}, \bibinfo{number}{12}
  (\bibinfo{year}{2017}), \bibinfo{pages}{2724--2743}.
\newblock


\bibitem[\protect\citeauthoryear{Williams, Liao, Xue, and Carin}{Williams
  et~al\mbox{.}}{2005}]%
        {david2005lr}
\bibfield{author}{\bibinfo{person}{David Williams}, \bibinfo{person}{Xuejun
  Liao}, \bibinfo{person}{Ya Xue}, {and} \bibinfo{person}{Lawrence Carin}.}
  \bibinfo{year}{2005}\natexlab{}.
\newblock \showarticletitle{Incomplete-Data Classification Using Logistic
  Regression}. In \bibinfo{booktitle}{\emph{Proceedings of ICML}}.
  \bibinfo{pages}{972--979}.
\newblock


\bibitem[\protect\citeauthoryear{{Wu}, {Pan}, {Chen}, {Long}, {Zhang}, and
  {Yu}}{{Wu} et~al\mbox{.}}{2020}]%
        {Wu2019survey_gnn}
\bibfield{author}{\bibinfo{person}{Z. {Wu}}, \bibinfo{person}{S. {Pan}},
  \bibinfo{person}{F. {Chen}}, \bibinfo{person}{G. {Long}}, \bibinfo{person}{C.
  {Zhang}}, {and} \bibinfo{person}{P.~S. {Yu}}.}
  \bibinfo{year}{2020}\natexlab{}.
\newblock \showarticletitle{A Comprehensive Survey on Graph Neural Networks}.
\newblock \bibinfo{journal}{\emph{IEEE Transactions on Neural Networks and
  Learning Systems}} (\bibinfo{year}{2020}), \bibinfo{pages}{1--21}.
\newblock


\bibitem[\protect\citeauthoryear{Yang, Kang, Cao, Jin, Yang, and Guo}{Yang
  et~al\mbox{.}}{2019}]%
        {yang2019topology}
\bibfield{author}{\bibinfo{person}{Liang Yang}, \bibinfo{person}{Zesheng Kang},
  \bibinfo{person}{Xiaochun Cao}, \bibinfo{person}{Di Jin}, \bibinfo{person}{Bo
  Yang}, {and} \bibinfo{person}{Yuanfang Guo}.}
  \bibinfo{year}{2019}\natexlab{}.
\newblock \showarticletitle{Topology Optimization based Graph Convolutional
  Network}. In \bibinfo{booktitle}{\emph{Proceedings of IJCAI}}.
  \bibinfo{pages}{4054--4061}.
\newblock


\bibitem[\protect\citeauthoryear{Yang, Wu, Gu, Wang, Cao, Jin, and Guo}{Yang
  et~al\mbox{.}}{2020}]%
        {yang2020gaton}
\bibfield{author}{\bibinfo{person}{Liang Yang}, \bibinfo{person}{Fan Wu},
  \bibinfo{person}{Junhua Gu}, \bibinfo{person}{Chuan Wang},
  \bibinfo{person}{Xiaochun Cao}, \bibinfo{person}{Di Jin}, {and}
  \bibinfo{person}{Yuanfang Guo}.} \bibinfo{year}{2020}\natexlab{}.
\newblock \showarticletitle{Graph Attention Topic Modeling Network}. In
  \bibinfo{booktitle}{\emph{Proceedings of The Web Conference}}.
  \bibinfo{pages}{144--154}.
\newblock


\bibitem[\protect\citeauthoryear{Yang, Cohen, and Salakhutdinov}{Yang
  et~al\mbox{.}}{2016}]%
        {Yang2016revisiting}
\bibfield{author}{\bibinfo{person}{Zhilin Yang}, \bibinfo{person}{William~W.
  Cohen}, {and} \bibinfo{person}{Ruslan Salakhutdinov}.}
  \bibinfo{year}{2016}\natexlab{}.
\newblock \showarticletitle{Revisiting Semi-Supervised Learning with Graph
  Embeddings}. In \bibinfo{booktitle}{\emph{Proceedings of ICML}}.
  \bibinfo{pages}{40--48}.
\newblock


\bibitem[\protect\citeauthoryear{Yi, Lee, Hwang, and Yang}{Yi
  et~al\mbox{.}}{2020}]%
        {yi2020why}
\bibfield{author}{\bibinfo{person}{Joonyoung Yi}, \bibinfo{person}{Juhyuk Lee},
  \bibinfo{person}{Sungju Hwang}, {and} \bibinfo{person}{Eunho Yang}.}
  \bibinfo{year}{2020}\natexlab{}.
\newblock \showarticletitle{Why Not to Use Zero Imputation? Correcting Sparsity
  Bias in Training Neural Networks}. In \bibinfo{booktitle}{\emph{Proceedings
  of ICLR}}.
\newblock


\bibitem[\protect\citeauthoryear{Ying, He, Chen, Eksombatchai, Hamilton, and
  Leskovec}{Ying et~al\mbox{.}}{2018}]%
        {ying2018GCN_Recommendation}
\bibfield{author}{\bibinfo{person}{Rex Ying}, \bibinfo{person}{Ruining He},
  \bibinfo{person}{Kaifeng Chen}, \bibinfo{person}{Pong Eksombatchai},
  \bibinfo{person}{William~L Hamilton}, {and} \bibinfo{person}{Jure Leskovec}.}
  \bibinfo{year}{2018}\natexlab{}.
\newblock \showarticletitle{Graph convolutional neural networks for web-scale
  recommender systems}. In \bibinfo{booktitle}{\emph{Proceedings of KDD}}.
  \bibinfo{pages}{974--983}.
\newblock


\bibitem[\protect\citeauthoryear{Yoon, Jordon, and van~der Schaar}{Yoon
  et~al\mbox{.}}{2018}]%
        {yoon2018gain}
\bibfield{author}{\bibinfo{person}{Jinsung Yoon}, \bibinfo{person}{James
  Jordon}, {and} \bibinfo{person}{Mihaela van~der Schaar}.}
  \bibinfo{year}{2018}\natexlab{}.
\newblock \showarticletitle{{GAIN}: Missing Data Imputation using Generative
  Adversarial Nets}. In \bibinfo{booktitle}{\emph{Proceedings of ICML}},
  Vol.~\bibinfo{volume}{80}. \bibinfo{pages}{5689--5698}.
\newblock


\bibitem[\protect\citeauthoryear{Zhang, Cui, Neumann, and Chen}{Zhang
  et~al\mbox{.}}{2018}]%
        {zhang2018DLGraphClassification}
\bibfield{author}{\bibinfo{person}{Muhan Zhang}, \bibinfo{person}{Zhicheng
  Cui}, \bibinfo{person}{Marion Neumann}, {and} \bibinfo{person}{Yixin Chen}.}
  \bibinfo{year}{2018}\natexlab{}.
\newblock \showarticletitle{An end-to-end deep learning architecture for graph
  classification}. In \bibinfo{booktitle}{\emph{Proceedings of AAAI}}.
\newblock


\bibitem[\protect\citeauthoryear{Zhou, Cui, Zhang, Yang, Liu, Wang, Li, and
  Sun}{Zhou et~al\mbox{.}}{2018}]%
        {zhou2018gnn_review}
\bibfield{author}{\bibinfo{person}{Jie Zhou}, \bibinfo{person}{Ganqu Cui},
  \bibinfo{person}{Zhengyan Zhang}, \bibinfo{person}{Cheng Yang},
  \bibinfo{person}{Zhiyuan Liu}, \bibinfo{person}{Lifeng Wang},
  \bibinfo{person}{Changcheng Li}, {and} \bibinfo{person}{Maosong Sun}.}
  \bibinfo{year}{2018}\natexlab{}.
\newblock \showarticletitle{Graph neural networks: A review of methods and
  applications}.
\newblock \bibinfo{journal}{\emph{arXiv:1812.08434}} (\bibinfo{year}{2018}).
\newblock


\bibitem[\protect\citeauthoryear{Zhu, Zhang, Cui, and Zhu}{Zhu
  et~al\mbox{.}}{2019}]%
        {zhu2019robust}
\bibfield{author}{\bibinfo{person}{Dingyuan Zhu}, \bibinfo{person}{Ziwei
  Zhang}, \bibinfo{person}{Peng Cui}, {and} \bibinfo{person}{Wenwu Zhu}.}
  \bibinfo{year}{2019}\natexlab{}.
\newblock \showarticletitle{Robust Graph Convolutional Networks Against
  Adversarial Attacks}. In \bibinfo{booktitle}{\emph{Proceedings of KDD}}.
  \bibinfo{pages}{1399–1407}.
\newblock


\bibitem[\protect\citeauthoryear{Z{\"u}gner, Akbarnejad, and
  G{\"u}nnemann}{Z{\"u}gner et~al\mbox{.}}{2018}]%
        {zugner2018adversarial}
\bibfield{author}{\bibinfo{person}{Daniel Z{\"u}gner}, \bibinfo{person}{Amir
  Akbarnejad}, {and} \bibinfo{person}{Stephan G{\"u}nnemann}.}
  \bibinfo{year}{2018}\natexlab{}.
\newblock \showarticletitle{Adversarial Attacks on Neural Networks for Graph
  Data}. In \bibinfo{booktitle}{\emph{SIGKDD}}. \bibinfo{pages}{2847--2856}.
\newblock


\bibitem[\protect\citeauthoryear{Z{\"u}gner and G{\"u}nnemann}{Z{\"u}gner and
  G{\"u}nnemann}{2019}]%
        {zugner2019certifiable}
\bibfield{author}{\bibinfo{person}{Daniel Z{\"u}gner} {and}
  \bibinfo{person}{Stephan G{\"u}nnemann}.} \bibinfo{year}{2019}\natexlab{}.
\newblock \showarticletitle{Certifiable robustness and robust training for
  graph convolutional networks}. In \bibinfo{booktitle}{\emph{Proceedings of
  KDD}}. \bibinfo{pages}{246--256}.
\newblock


\end{thebibliography}

\end{document}